\newcommand{\fastlin}{\textbf{Fast-Lin}\xspace}
\newcommand{\fastlip}{\textbf{Fast-Lip}\xspace}
\newcommand{\reluplex}{\textbf{Reluplex}\xspace}
\newcommand{\lp}{\textbf{LP}\xspace}
\newcommand{\lpfull}{\textbf{LP-Full}\xspace}
\newcommand{\opnorm}{\textbf{Op-norm}\xspace}
\newcommand{\clever}{\textbf{CLEVER}\xspace}
\newcommand\remembertext[2]{
  \immediate\write\@auxout{\unexpanded{\global\long\@namedef{mytext@#1}{#2}}}%
  #2%
}
\newcommand\recalltext[1]{%
  \ifcsname mytext@#1\endcsname
    \@nameuse{mytext@#1}%
  \else
    ``?? Undefined texts ??''
  \fi
}
\def \useicmlformat {}
\ifdef{\useicmlformat}{
\usepackage[accepted]{icml2018}
}{
\let\small\undefined

\makeatletter
\input{size11.clo}
\makeatother
\usepackage[margin=1in]{geometry}
\usepackage{algorithm}
\usepackage[noend]{algpseudocode}
}
\newtheorem{theorem}{Theorem}[section]
\newtheorem{lemma}[theorem]{Lemma}
\newtheorem{definition}[theorem]{Definition}
\newtheorem{proposition}[theorem]{Proposition}
\newtheorem{corollary}[theorem]{Corollary}
\newtheorem{claim}[theorem]{Claim}
\newtheorem{hypothesis}[theorem]{Hypothesis}
\newcommand*{\RN}[1]{\expandafter\@slowromancap\romannumeral #1@}
\newcommand{\wt}{\widetilde}
\newcommand{\eps}{\epsilon}
\newcommand{\R}{\mathbb{R}}
\renewcommand{\varepsilon}{\epsilon}
\renewcommand{\tilde}{\wt}
\renewcommand{\R}{\mathbb{R}}
\begin{document}

\ifdef{\useicmlformat}
{
\twocolumn[
\icmltitle{Towards Fast Computation of Certified Robustness for ReLU Networks}



\icmlsetsymbol{equal}{*}


\begin{icmlauthorlist}
\icmlauthor{Tsui-Wei Weng}{equal,mit}
\icmlauthor{Huan Zhang}{equal,ucd}
\icmlauthor{Hongge Chen}{mit}
\ifdef{\arxivversion}{\icmlauthor{Zhao Song\textsuperscript{$\dagger$}}{harvard,ut}\\
}
{\icmlauthor{Zhao Song}{harvard,ut}}
\icmlauthor{Cho-Jui Hsieh}{ucd}
\icmlauthor{Duane Boning}{mit}
\icmlauthor{Inderjit S. Dhillon}{ut}
\icmlauthor{Luca Daniel}{mit}
\ifdef{\arxivversion}{
\\ \textsuperscript{1}MIT \quad
\textsuperscript{2} UC Davis \quad
\textsuperscript{3}Harvard University\quad
\textsuperscript{4}UT Austin \quad \\
\texttt{\small twweng@mit.edu, ecezhang@ucdavis.edu, chenhg@mit.edu, zhaos@g.harvard.edu,}\\
\texttt{\small chohsieh@ucdavis.edu, dluca@mit.edu, boning@mtl.mit.edu, inderjit@cs.utexas.edu}\\
}{}
\end{icmlauthorlist}

\icmlaffiliation{mit}{Massachusetts Institute of Technology, Cambridge, MA}
\icmlaffiliation{ucd}{UC Davis, Davis, CA}
\icmlaffiliation{harvard}{Harvard University, Cambridge, MA}
\icmlaffiliation{ut}{UT Austin, Austin, TX. 
Source code is available at \url{https://github.com/huanzhang12/CertifiedReLURobustness}}

\icmlcorrespondingauthor{Tsui-Wei Weng}{twweng@mit.edu}
\icmlcorrespondingauthor{Huan Zhang}{\mbox{huan@huan-zhang.com}}

\icmlkeywords{Deep Neural Networks, ReLU, Robustness}

\vskip 0.3in
]




\ifdef{\arxivversion}{
\renewcommand*{\thefootnote}{\fnsymbol{footnote}}
\stepcounter{footnote}
\footnotetext{Tsui-Wei Weng and Huan Zhang contributed equally.}
\stepcounter{footnote}
\footnotetext{Part of the work done while hosted by Jelani Nelson.}
\renewcommand*{\thefootnote}{\arabic{footnote}}
\setcounter{footnote}{0}
}{
\printAffiliationsAndNotice{\icmlEqualContribution} 
}

\begin{abstract}
Verifying the robustness property of a general Rectified Linear Unit (ReLU) network is an NP-complete problem. Although finding the \textit{exact} minimum adversarial distortion is hard, giving a \textit{certified lower bound} of the minimum distortion is possible. Current available methods of computing such a bound are either time-consuming or deliver low quality bounds that are too loose to be useful. In this paper, we exploit the special structure of ReLU networks and provide two computationally efficient algorithms (\textbf{Fast-Lin},\textbf{Fast-Lip}) that are able to certify non-trivial lower bounds of minimum adversarial distortions. Experiments show that (1) our methods deliver bounds close to (the gap is 2-3X) exact minimum distortions found by Reluplex in small networks while our algorithms are more than 10,000 times faster; (2) our methods deliver similar quality of bounds (the gap is within 35\% and usually around 10\%; sometimes our bounds are even better) for larger networks compared to the methods based on solving linear programming problems but our algorithms are 33-14,000 times faster; (3) our method is capable of solving large MNIST and CIFAR networks up to 7 layers with more than 10,000 neurons within tens of seconds on a single CPU core. In addition, we show that there is no polynomial time algorithm that can approximately find the minimum $\ell_1$ adversarial distortion of a ReLU network with a $0.99\ln n$ approximation ratio unless $\mathsf{NP}$=$\mathsf{P}$, where $n$ is the number of neurons in the network. 

\end{abstract}
}{
\title{Towards Fast Computation of Certified Robustness\\ for ReLU Networks}
\date{}
\author{
\centering
Tsui-Wei Weng\thanks{Tsui-Wei Weng and Huan Zhang contributed equally.} \thanks{ MIT, \texttt{twweng@mit.edu} }
\and
Huan Zhang\footnotemark[1] \thanks{ UC Davis, \texttt{ecezhang@ucdavis.edu}}
\and
Hongge Chen\thanks{ MIT, \texttt{chenhg@mit.edu} }
\and
Zhao Song\thanks{Harvard University \& UT-Austin, \texttt{zhaos@g.harvard.edu}, part of the work done while hosted by Jelani Nelson.}
\and
Cho-Jui Hsieh\thanks{UC Davis, \texttt{chohsieh@ucdavis.edu}}
\and
Duane Boning\thanks{ MIT, \texttt{boning@mtl.mit.edu} }
\and
Inderjit S. Dhillon\thanks{ UT-Austin, \texttt{inderjit@cs.utexas.edu} }
\and
Luca Daniel\thanks{ MIT, \texttt{dluca@mit.edu} }
}

\begin{titlepage}
 \maketitle
  \begin{abstract}

  \end{abstract}
 \thispagestyle{empty}
 \end{titlepage}
}

\ifdef{\arxivversion}{
{\hypersetup{linkcolor=black}
\tableofcontents
}
\newpage
}


\section{Introduction}
Since the discovery of adversarial examples in deep neural network (DNN) image classifiers \cite{szegedy2013intriguing}, researchers have successfully found adversarial examples in many machine learning tasks applied to different areas, including object detection \cite{xie2017adversarial}, image captioning \cite{chen2017show},
speech recognition \cite{cisse2017houdini}, malware detection \cite{wang2017adversary} and reading comprehension \cite{jia2017adversarial}.  
Moreover, black-box attacks have also been shown to be possible, where an attacker can find adversarial examples without knowing the architecture and parameters of the DNN \cite{CPY17_zoo,papernot2017practical,liu2016delving}.

The existence of adversarial examples poses a huge threat to the application of DNNs in mission-critical tasks including security cameras, self-driving cars and aircraft control systems. Many researchers have thus proposed defensive or detection methods in order to increase the robustness of DNNs. Notable examples are defensive distillation \cite{papernot2016distillation},
adversarial retraining/training \cite{kurakin2016adversarial_ICLR,madry2017towards} and model ensembles \cite{tramer2017ensemble,liu2017towards}.  Despite many published contributions that aim at increasing the robustness of DNNs, theoretical results are rarely given and there is no guarantee that the proposed defensive methods can reliably improve the robustness.  Indeed, many of these defensive mechanism have been shown to be ineffective when more advanced attacks are used \cite{carlini2017towards,carlini2017adversarial,carlini2017magnet,he2017adversarial}. 

The robustness of a DNN can be verified by examining a neighborhood (e.g. $\ell_2$ or $\ell_\infty$ ball) near a data point $\bm{x_0}$. The idea is to find the largest ball with radius $r_0$ that guarantees no points inside the neighborhood can ever change classifier decision. Typically, $r_0$ can be found as follows: given $R$, a global optimization algorithm can be used to find an adversarial example within this ball, and thus bisection on $R$ can produce $r_0$. Reluplex \cite{katz2017reluplex} is one example using such a technique but it is computationally infeasible even on a small MNIST classifier. In general, verifying the robustness property of a ReLU network is NP-complete \cite{katz2017reluplex, sinha2017certifiable}.

On the other hand, a lower bound $\beta_L$ of radius $r_0$ can be given, which guarantees that no examples within a ball of radius $\beta_L$ can ever change the network classification outcome. \cite{hein2017formal} is a pioneering work on giving such a lower bound for neural networks that are continuously differentiable, although only a 2-layer MLP network with differentiable activations is investigated. 
\cite{weng2017evaluating} has extended theoretical result to ReLU activation functions and proposed a robustness score, CLEVER, based on extreme value theory. Their approach is feasible for large state-of-the-art DNNs but CLEVER is an estimate of $\beta_L$ without certificates. Ideally, we would like to obtain a {\it certified} (which guarantees that $\beta_L \leq r_0$) and {\it non-trivial} (a trivial $\beta_L$ is 0) lower bound $\beta_L$ that is reasonably close to $r_0$ within {\it reasonable amount of computational time}.

In this paper, we develop two fast algorithms for obtaining a \textit{tight} and \textit{certified} lower bound $\beta_L$ on ReLU networks. In addition, we also provide a complementary theoretical result to \cite{katz2017reluplex,sinha2017certifiable} by further showing  there does not even exist a polynomial time algorithm that can approximately find the minimum adversarial distortion with a $0.99 \ln n$ approximation ratio. Our contributions are:

\begin{itemize}[nosep,wide,labelindent=0pt,labelwidth=*,align=left]
\item We fully exploit the ReLU networks to give two computationally efficient methods of computing \textit{tighter} and \textit{guaranteed} robustness lower bounds via (1) \textbf{lin}ear approximation on the ReLU units (see Sec~\ref{sec3:convexbnd}, Algorithm~\ref{alg:fast-lin}, \textbf{Fast-Lin}) 
and (2) bounding network local \textbf{Lip}schitz constant (see Sec~\ref{sec3:gradbnd}, Algorithm~\ref{alg:fast-lip}, \textbf{Fast-Lip}). Unlike the per-layer operator-norm-based lower bounds which are often very loose (close to 0, as verified in our experiments) for deep networks, our bounds are much closer to the upper bound given by the best adversarial examples, and thus can be used to evaluate the robustness of DNNs with theoretical guarantee.

\item We show that our proposed method is at least \textit{four orders of magnitude faster} than finding the exact minimum distortion (with Reluplex), and also around \textit{two orders of magnitude (or more) faster} than linear programming (LP) based methods. We can compute a reasonable robustness lower bound within a minute for a ReLU network with up to 7 layers or over ten thousands neurons, which is so far the best available result in the literature to our best knowledge.

\item We show that there is no polynomial time algorithm that can find a lower bound of minimum $\ell_1$ adversarial distortion with a $(1-o(1))\ln n$ approximation ratio (where $n$ is the total number of neurons) unless $\mathsf{NP}$=$\mathsf{P}$ (see Theorem~\ref{thm:intro_hardness_np_p}).
\end{itemize}



\section{Background and related work}\label{sec:related_work}

\subsection{Solving the minimum adversarial distortion}
\label{sec:bg_mindist}
For ReLU networks, the verification problem can be transformed into a Mixed Integer Linear Programming (MILP) problem \cite{lomuscio2017approach,cheng2017maximum,fischetti2017deep} by using binary variables to encode the states of ReLU activation in each neuron. \cite{katz2017reluplex} proposed Reluplex based on satisfiable modulo theory, which encodes the network into a set of linear constraints with special rules to handle ReLU activations and splits the problem into two LP problems based on a ReLU's activation status on demand. Similarly, \cite{ehlers2017formal} proposed Planet, another splitting-based approach using satisfiability (SAT) solvers. These approaches guarantee to find the exact minimum distortion of an adversarial example, and can be used for formal verification. However, due to NP-hard nature of the underlying problem, these approaches only work on very small networks. For example, in \cite{katz2017reluplex}, verifying a feed-forward network with 5 inputs, 5 outputs and 300 total hidden neurons on a single data point can take a few hours. Additionally, Reluplex can find the minimum distortion only in terms of $\ell_{\infty}$ norm ($\ell_1$ is possible via an extension) and cannot easily generalize to $\ell_p$ norm.

\subsection{Computing lower bounds of minimum distortion}
\cite{szegedy2013intriguing} gives a lower bound on the minimum distortion in ReLU networks by computing the product of weight matrices operator norms, but this bound is usually too loose to be useful in practice, as pointed out in \cite{hein2017formal} and verified in our experiments (see Table~\ref{tb:smallnetwork}). A tighter bound was given by \cite{hein2017formal} using local Lipschitz constant on a network with one hidden layer, but their approach requires the network to be continuously-differentiable, and thus cannot be directly applied to ReLU networks. \cite{weng2017evaluating} further provide the lower bound guarantee to non-differentiable functions by Lipschitz continuity assumption and propose the first robustness score, CLEVER, that can evaluate the robustness of DNNs and scale to large ImageNet networks. As also shown in our experiments in Section \ref{sec:exp}, the CLEVER score is indeed a good robustness estimate close to the true minimum distortion given by Reluplex, albeit without providing certificates. Recently, \cite{zico17convex} propose a convex relaxation on the MILP verification problem discussed in Sec~\ref{sec:bg_mindist}, which reduces MILP to LP when the adversarial distortion is in $\ell_{\infty}$ norm. They focus on adversarial training, and compute layer-wise bounds by looking into the dual LP problem.




\subsection{Hardness and approximation algorithms}
$\mathsf{NP}\neq\mathsf{P}$ is the most important and popular assumption in computational complexity in the last several decades. It can be used to show that the decision of the exact case of a problem is hard. However, in several cases, solving one problem approximately is much easier than solving it exactly. For example, there is no polynomial time algorithm to solve the $\mathsf{MAX}$-$\mathsf{CUT}$ problem, but there is a simple $0.5$-approximation polynomial time algorithm. Previous works \cite{katz2017reluplex,sinha2017certifiable} show that there is no polynomial time algorithm to find the minimum adversarial distortion $r_0$ exactly. A natural question to ask is: does there exist a polynomial time algorithm to solve the robustness problem approximately? In other words, can we give a lower bound of $r_0$ with a guaranteed approximation ratio?

From another perspective, $\mathsf{NP}\neq \mathsf{P}$ only rules out the polynomial running time. Some problems might not even have a sub-exponential time algorithm. To rule out that, the most well-known assumption used is the ``Exponential Time Hypothesis''~\cite{ipz98}. The hypothesis states that $\mathsf{3SAT}$ cannot be solved in sub-exponential time in the worst case. Another example is that while tensor rank calculation is NP-hard~\cite{h90}, a recent work \cite{swz17b} proved that there is no $2^{o(n^{1-o(1)})}$ time algorithm to give a constant approximation of the rank of the tensor. There are also some stronger versions of the hypothesis than ETH, e.g., Strong ETH \cite{ip01}, Gap ETH \cite{d16,mr16}, and average case ETH \cite{f02,rsw16}.


\newcommand{\x}{\bm{x}}
\newcommand{\xo}{\bm{x_0}}
\newcommand{\W}[1]{\mathbf{W}^{#1}}
\newcommand{\A}[1]{\mathbf{A}^{#1}}
\newcommand{\DD}[1]{\mathbf{D}^{#1}}
\newcommand{\Lam}[1]{\bm{\Lambda}^{#1}}
\newcommand{\upbias}[1]{\mathbf{T}^{#1}}
\newcommand{\lwbias}[1]{\mathbf{H}^{#1}}
\newcommand{\upbnd}[1]{\bm{u}^{#1}}
\newcommand{\lwbnd}[1]{\bm{l}^{#1}}
\newcommand{\z}{\bm{z}}
\newcommand{\y}{\bm{y}}
\newcommand{\bias}[1]{\bm{b}^{#1}}
\newcommand{\setA}{\mathcal{A}}
\newcommand{\setIpos}[1]{\mathcal{I}^{+}_{#1}}
\newcommand{\setIneg}[1]{\mathcal{I}^{-}_{#1}}
\newcommand{\setIuns}[1]{\mathcal{I}_{#1}}
\newcommand{\set}[1]{\mathcal{#1}}
\newcommand{\Lipsloc}{L_{q,x_0}^j}
   
\section{Robustness guarantees for ReLU networks}\label{sec:result}

\paragraph{Overview of our results.}
We begin with a motivating theorem in Sec~\ref{sec3:hard} showing that there does NOT exist a polynomial time algorithm able to find the minimum adversarial distortion with a $(1 - o(1))\ln n$ approximation ratio. We then introduce notations in Sec~\ref{sec3:relu} and state our main results in Sec~\ref{sec3:convexbnd} and \ref{sec3:gradbnd}, where we develop two approaches that guarantee to obtain a lower bound of minimum adversarial distortion. In Sec~\ref{sec3:convexbnd}, we first demonstrate a general approach to \textit{directly} derive the output bounds of a ReLU network with linear approximations when inputs are perturbed by a general $\ell_p$ norm noise. The analytic output bounds allow us to develop a fast algorithm \fastlin to compute certified lower bound. 
In Sec~\ref{sec3:gradbnd}, we present \fastlip to obtain a certified lower bound of minimum distortion by deriving upper bounds for the local Lipschitz constant. Both methods are highly efficient and allow fast computation of certified lower bounds on large ReLU networks.

\subsection{Finding the minimum distortion with a $0.99\ln n$ approximation ratio is hard}
\label{sec3:hard}
\cite{katz2017reluplex} shows that verifying robustness for ReLU networks is NP-complete; in other words, there is no efficient (polynomial time) algorithm to find the exact minimum adversarial distortion. Here, we further show that even \textit{approximately} finding the minimum adversarial distortion with a guaranteed approximation ratio can be hard. Suppose the $\ell_p$ norm of the true minimum adversarial distortion is $r_0$, 
and a robustness verification program \textsf{A} gives a guarantee that no adversarial examples exist within an $\ell_p$ ball of radius $r$ ($r$ is a lower bound of $r_0$). The approximation ratio $\alpha \coloneqq \frac{r_0}{r} > 1$. We hope that $\alpha$ is close to 1 with a guarantee; for example, if $\alpha$ is a constant regardless of the scale of the network, we can always be sure that $r_0$ is at most $\alpha$ times as large as the lower bound $r$ found by $\textsf{A}$.
Here we relax this requirement and allow the approximation ratio to increase with the number of neurons $n$. In other words, when $n$ is larger, the approximation becomes more inaccurate, but this ``inaccuracy'' can be bounded. However, the following theorem shows that no efficient algorithms exist to give a $0.99\ln n$ approximation in the special case of $\ell_1$ robustness:

\begin{theorem}\label{thm:intro_hardness_np_p}
Unless $\mathsf{P}=\mathsf{NP}$, there is no polynomial time algorithm that gives $(1-o(1))\ln n$-approximation to the $\ell_1$ {\rm ReLU} robustness verification problem with $n$ neurons.
\end{theorem}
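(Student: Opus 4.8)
The approximation ratio $(1-o(1))\ln n$ is precisely the tight inapproximability threshold for \textsc{Set Cover} established by Dinur and Steurer: for every $\varepsilon>0$ it is $\mathsf{NP}$-hard to approximate the minimum set cover within a factor $(1-\varepsilon)\ln m$, where $m$ is the size of the universe. The plan is therefore to give a polynomial-time, approximation-preserving reduction from \textsc{Set Cover} to the $\ell_1$ $\mathsf{ReLU}$ robustness verification problem whose optimum $r_0$ equals the optimum cover size $\tau$. Given a universe $[m]$ and sets $S_1,\dots,S_k$, I would build a ReLU network on input $x\in\R^k$ (one coordinate per set) with base point $x_0=\mathbf{0}$ and two output logits, arranged so that the original class is assigned at $x_0$ and the decision flips exactly when the coordinates pushed up to $1$ form a set cover.

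The gadget has three stages. First, clamp each coordinate to $[0,1]$ via $t_j=\mathsf{ReLU}(x_j)-\mathsf{ReLU}(x_j-1)$, which equals $0$ for $x_j\le 0$, equals $x_j$ on $[0,1]$, and saturates at $1$ for $x_j\ge 1$. Second, for each element $i$ form the coverage value $c_i=\max_{j:\,i\in S_j} t_j$ using a ReLU max-tree (via $\max(a,b)=b+\mathsf{ReLU}(a-b)$). Third, let class $2$ beat the base class $1$ iff $\sum_{i=1}^m c_i\ge m$, i.e. iff every $c_i=1$. The crucial point is that $c_i=1$ forces some $j\ni i$ to have $x_j\ge 1$: negative coordinates contribute $0$, and fractional coordinates in $(0,1)$ leave $c_i<1$ unless some other set already covers $i$ fully. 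Hence any flipping perturbation has $T=\{j:x_j\ge 1\}$ covering $[m]$, so $\|x\|_1\ge|T|\ge\tau$; conversely, setting $x_j=1$ on a minimum cover and $0$ elsewhere flips the class at cost exactly $\tau$, giving $r_0=\tau$. Using the \emph{maximum} rather than a sum of per-set contributions is exactly what defeats the $\Theta(\ln m)$ integrality gap of the set-cover LP: a summation gadget would let the fractional optimum $\tau^\ast$ flip the class, and $\tau^\ast$ can be a $\ln m$ factor below $\tau$, destroying the reduction.

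It then remains to count neurons and to transfer the leading constant. The network uses $O(k)$ clamp neurons and $O(\sum_j|S_j|)$ max-tree neurons, so $n=O\bigl(k+\sum_j|S_j|\bigr)$. For the constant in front of $\ln n$ to survive one needs $\ln n=(1+o(1))\ln m$: if instead $n=m^{c}$ with $c>1$, then a $(1-o(1))\ln n$ guarantee only yields a $\approx c\ln m$ guarantee for \textsc{Set Cover}, which is too weak to contradict the threshold. I expect this to be the main obstacle: the reduction must be realized at \emph{near-linear} size $n=m^{1+o(1)}$, which forces instantiating the hardness on set systems with only $m^{1+o(1)}$ incidences (or invoking a size-aware form of the Dinur--Steurer hardness). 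Once that is in place, a polynomial-time $(1-o(1))\ln n$-approximation to $r_0$ would return a certified $r$ with $\tau/\alpha\le r\le\tau=r_0$ and $\alpha=(1-o(1))\ln n\le(1-\varepsilon)\ln m$, i.e. a $(1-\varepsilon)\ln m$ approximation of the minimum set-cover value, contradicting $\mathsf{P}\neq\mathsf{NP}$. A minor additional care is the tie-breaking convention at the boundary $\sum_i c_i=m$, which perturbs $r_0$ only by a factor $1-o(1)$ and is absorbed into the ratio.
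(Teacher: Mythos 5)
Your proposal is correct and follows essentially the same route as the paper's proof (Appendix~\ref{app:hardness}, Theorem~\ref{thm:robust_net_R}): a reduction from $\mathsf{SET}$-$\mathsf{COVER}$ with one input coordinate per set, a clamping gadget so that only coordinates pushed to (nearly) $1$ count toward coverage, a per-element ReLU $\max$ over the sets containing that element, and an ``all elements covered'' test at the output --- the paper merely uses a $\min$ at the output where you use a thresholded sum (equivalent, since $\sum_i c_i \geq m$ with $c_i \leq 1$ iff $\min_i c_i = 1$), and its dead-zone clamp $t_j = -\max(0,\delta-u_j)+\max(0,u_j-1+\delta)$ with $\delta = 1/d$ handles your tie-breaking concern by making the unperturbed output strictly negative and the covered output strictly positive, at the cost of the same $(1-1/d)$ slack factor you absorb into the ratio. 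The instance-size obstacle you flag --- that the network must have $N = n^{1+o(1)}$ neurons, where $n$ is the universe size, for the leading constant in $\ln N$ to transfer --- is a genuine subtlety, but it is not resolved in the paper either: the paper's proof simply invokes the set-cover inapproximability theorems together with its completeness and soundness claims, and cites the size-aware results \cite{m12,moshkovitz2012projection} only for the $\mathsf{ETH}$ corollary.
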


Our proof is based on a well-known in-approximability result of \textsf{SET-COVER} problem \cite{rs97,ams06,ds14} and a novel reduction from \textsf{SET-COVER} to our problem. We defer the proof into Appendix~\ref{app:hardness}. The formal definition of the $\ell_1$ ReLU robustness verification problem can be found in Definition~\ref{def:robust_net_real}. Theorem~\ref{thm:intro_hardness_np_p} implies that any efficient (polynomial time) algorithm cannot give better than $(1-o(1))\ln n$-approximation guarantee. Moreover, by making a stronger assumption of Exponential Time Hypothesis ($\mathsf{ETH}$), we can state an explicit result about running time using existing results from \textsf{SET-COVER}~\cite{m12,moshkovitz2012projection},
\begin{corollary}\label{cor:intro_hardness_eth_pgc}
Under $\mathsf{ETH}$, there is no $2^{o(n^c)}$ time algorithm that gives $(1-o(1))\ln n$-approximation to the $\ell_1$ {\rm ReLU} robustness verification problem with $n$ neurons, where $c\in (0,1)$ is some fixed constant.
\end{corollary}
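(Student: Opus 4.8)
The plan is to reuse \emph{verbatim} the polynomial-time reduction from \textsf{SET-COVER} to the $\ell_1$ {\rm ReLU} robustness verification problem that underlies Theorem~\ref{thm:intro_hardness_np_p}, and to feed it the stronger, $\mathsf{ETH}$-based inapproximability of \textsf{SET-COVER} in place of the plain $\mathsf{NP}$-hardness. Concretely, I would first re-examine that reduction and record its \emph{size behaviour}: a \textsf{SET-COVER} instance with universe size $U$ (and polynomially many sets) is mapped, in polynomial time, to a network whose number of neurons $n$ satisfies $n = U^{1+o(1)}$, i.e.\ the blow-up is essentially linear. This near-linearity is not cosmetic but is the crux of the argument: it is exactly what guarantees $\ln n = (1+o(1))\ln U$, so that a $(1-o(1))\ln n$-approximation on the network side transports to a $(1-o(1))\ln U$-approximation on the \textsf{SET-COVER} side. (If the reduction only achieved $n=U^{k}$ for a constant $k>1$, the induced ratio would degrade to $(k-o(1))\ln U$, which greedy already attains, and no contradiction would follow.)

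Next I would import the running-time lower bound for \textsf{SET-COVER}. The references \cite{m12,moshkovitz2012projection} supply, via size-efficient PCPs / the projection-games machinery, a fixed constant $c'\in(0,1)$ for which, under $\mathsf{ETH}$, no $2^{o(N^{c'})}$-time algorithm $(1-o(1))\ln N$-approximates \textsf{SET-COVER} on instances of size $N$. I would treat this exactly as the $\mathsf{NP}$-hardness version was treated for Theorem~\ref{thm:intro_hardness_np_p}, namely as a black box.

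The composition is then routine. Suppose, for contradiction, that some algorithm $\mathsf{A}$ runs in time $2^{o(n^{c})}$ and returns a $(1-o(1))\ln n$-approximation to the $\ell_1$ {\rm ReLU} robustness problem (in the sense of Definition~\ref{def:robust_net_real}). Given a \textsf{SET-COVER} instance of size $N$, I would run the reduction (polynomial time), obtain a network with $n = N^{1+o(1)}$ neurons, then run $\mathsf{A}$. By the gap-preservation recorded above this solves $(1-o(1))\ln N$-approximate \textsf{SET-COVER}, and its total running time is $\poly(N)+2^{o((N^{1+o(1)})^{c})}=2^{o(N^{(1+o(1))c})}$. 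Choosing the fixed constant $c \coloneqq c'/2 \in (0,1)$ forces the exponent to be $o(N^{c'})$, contradicting the $\mathsf{ETH}$-based lower bound; this pins down the claimed $c$.

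The main obstacle is the first step rather than the last: one must verify that the reduction from Theorem~\ref{thm:intro_hardness_np_p} really is (near-)linear in the relevant parameter and that it carries the \textsf{SET-COVER} gap to the robustness gap \emph{with the $(1-o(1))$ factor intact}, since both the approximation ratio $\ln n$ and the time exponent $n^{c}$ are sensitive to the polynomial degree of the blow-up. A secondary subtlety is bookkeeping the two nested $o(1)$ terms—the $U^{1+o(1)}$ from the reduction and the $(1-o(1))$ in the ratio—so that their composition still lands inside the regime covered by the \cite{m12,moshkovitz2012projection} lower bound.
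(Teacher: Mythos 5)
Your proposal is correct and takes essentially the same route as the paper: the paper's entire proof of this corollary is the single sentence that it ``follows by the construction in Theorem~\ref{thm:robust_net_R} and \cite{m12,moshkovitz2012projection}'', i.e., precisely your plan of reusing the \textsf{SET-COVER}-to-robustness reduction verbatim and composing it, as a black box, with the $\mathsf{ETH}$-based inapproximability of \textsf{SET-COVER}. Your write-up in fact supplies more detail than the paper does---the near-linear size bookkeeping needed to keep the $(1-o(1))\ln n$ factor intact and the explicit choice of the constant $c$---both of which the paper leaves implicit.
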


\subsection{ReLU Networks and Their Activation Patterns}
\label{sec3:relu}
Let $\x \in \R^{n_0}$ be the input vector for an $m$-layer neural network with $m-1$ hidden layers and let the number of neurons in each layer be $n_k, \forall k \in [m]$. We use $[n]$ to denote set $\{1,2,\cdots,n\}$. The weight matrix $\W{(k)}$ and bias vector $\bias{(k)}$ for the $k$-th layer have dimension $n_k \times n_{k-1}$ and $n_k$, respectively. Let $\phi_k: \R^{n_0}\to\R^{n_k}$ be the operator mapping from input layer to layer $k$ and $\sigma(\y)$ be the coordinate-wise activation function; for each $k\in [m-1]$, the relation between layer $k-1$ and layer $k$ can be written as $\phi_k(\x) = \sigma(\W{(k)}\phi_{k-1}(\x)+\bias{(k)}),$
where $\W{(k)} \in \R^{n_k \times n_{k-1}}, \bias{(k)} \in \R^{n_k}$. For the input layer and the output layer, we have $\phi_0(\x) = \x$ and $\phi_m(\x) = \W{(m)}\phi_{m-1}(\x)+\bias{(m)}$. The output of the neural network is $f(\x) = \phi_m(\x)$, which is a vector of length $n_m$, and the $j$-th output is its $j$-th coordinate, denoted as $f_j(\x) = [\phi_m(\x)]_j$. For ReLU activation, the activation function $\sigma(\y) = \max(\y,\bm{0})$ is an element-wise operation on the input vector $\y$.

Given an input data point $\xo \in \R^{n_0}$ and a bounded $\ell_p$-norm perturbation $\epsilon \in \R_{+}$, the input $\x$ is constrained in an $\ell_p$ ball $B_p(\xo,\epsilon) := \{ \x ~|~ \| \x - \xo \|_{p} \leq \epsilon \}$. With all possible perturbations in $B_p(\xo,\epsilon)$, the pre-ReLU activation of each neuron has a lower and upper bound $l \in \R$ and $u \in \R$, where $l \leq u$. Let us use $\lwbnd{(k)}_r$ and $\upbnd{(k)}_r$ to denote the lower and upper bound for the $r$-th neuron in the $k$-th layer, and let $\z^{(k)}_r$ be its pre-ReLU activation, where  $\z^{(k)}_r = \W{(k)}_{r,:}\phi_{k-1}(\x)+\bias{(k)}_r$, $\lwbnd{(k)}_r \leq \z^{(k)}_r \leq \upbnd{(k)}_r$, and $\W{(k)}_{r,:}$ is the $r$-th row of $\W{(k)}$. There are three categories of possible activation patterns -- (i) the neuron is always activated: $\setIpos{k} \coloneqq \{ r \in [n_k] ~|~ \upbnd{(k)}_r \geq \lwbnd{(k)}_r \geq 0 \}$, (ii) the neuron is always inactivated: $\setIneg{k} \coloneqq \{ r \in [n_k] ~|~ \lwbnd{(k)}_r \leq \upbnd{(k)}_r \leq 0 \} $, and (iii) the neuron could be either activated or inactivated: $\setIuns{k} \coloneqq \{ r \in [n_k] ~|~ {\lwbnd{(k)}_r < 0 <  \upbnd{(k)}_r} \}$. Obviously, $\{ \setIpos{k}, \setIneg{k},\setIuns{k} \}$ is a partition of set $[n_k]$.




\subsection{Approach 1 (Fast-Lin): Certified lower bounds via linear approximations}
\label{sec3:convexbnd}
\subsubsection{Derivation of the output bounds via linear upper and lower bounds for ReLU}
\label{sec:fastlin_derivation}
In this section, we propose a methodology to \textit{directly} derive upper bounds and lower bounds of the output of an $m$-layer feed-forward ReLU network. The central idea is to derive an \textit{explicit} upper/lower bound based on the linear approximations for the neurons in category (iii) and the signs of the weights associated with the activations. 


\begin{figure*}[t]
\ifdef{\useicmlformat}{
\includegraphics[width=0.7\textwidth]{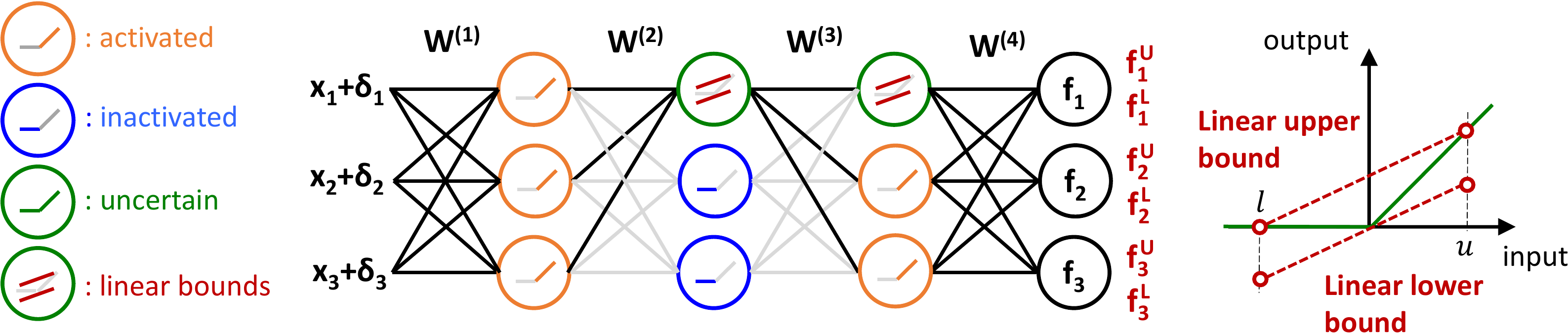}
}{
\includegraphics[width=\textwidth]{figure_relu_network}
}
\centering
\caption{Illustration of deriving output bounds for ReLU networks in Section~\ref{sec3:convexbnd}. The final output upper bounds ($f_j^U$) and lower bounds ($f_j^L$) can be derived by considering the activation status of the neurons with input perturbation $\| \delta \|_p \leq \epsilon$. For neurons in $\setIpos{k}$, their outputs are identical to their inputs; for neurons in $\setIneg{k}$, they can be removed during computation as their outputs are always zero; for neurons in $\setIuns{k}$, their outputs can be bounded by corresponding linear upper bounds and lower bounds considering the signs of associated weights.}
\label{fig:activation}
\end{figure*}


We start with a 2-layers network and then extend it to $m$ layers. The $j$-th output of a 2-layer network is: 
$$f_j(\x) = \sum_{r\in \setIpos{1}, \setIneg{1}, \setIuns{1}} \W{(2)}_{j,r} \sigma(\W{(1)}_{r,:} \x + \bias{(1)}_r) + \bias{(2)}_j.$$For neurons $r \in \setIpos{1}$, we have $\sigma(\W{(1)}_{r,:} \x + \bias{(1)}_r) = \W{(1)}_{r,:} \x + \bias{(1)}_r$; for neurons $r \in \setIneg{1}$, we have $\sigma(\W{(1)}_{r,:} \x + \bias{(1)}_r) = 0.$ For the neurons in category (iii), we propose to use the following linear upper bound and a linear lower bound to replace the ReLU activation $\sigma(y)$:
\begin{equation}
\label{eq:our_cvx_approx}
	\frac{u}{u-l} y \leq \sigma(y) \leq \frac{u}{u-l} (y-l).
\end{equation}
Let $\bm{d}^{(1)}_r := \frac{\upbnd{(1)}_r}{\upbnd{(1)}_r-\lwbnd{(1)}_r}$, we have
\begin{align}
\label{eq:2-layer-cvx-ours}
	 \bm{d}_r^{(1)}(\W{(1)}_{r,:} \x + \bias{(1)}_r) & \leq \; \sigma(\W{(1)}_{r,:} \x + \bias{(1)}_r)
\ifdef{\useicmlformat}{\\ & }{}
     \leq \bm{d}^{(1)}_r  (\W{(1)}_{r,:} \x + \bias{(1)}_r - \lwbnd{(1)}_r)
\ifdef{\useicmlformat}{. \nonumber}{}
\end{align}
To obtain an upper bound and lower bound of $f_j(\x)$ with~\eqref{eq:our_cvx_approx}, set $\bm{d}^{(1)}_r=1$ for $r \in \setIpos{1}$, and we have
\begin{align}
\label{eq:2-layer-newfju}
f_j^U(\x) & = \sum_{r\in \setIpos{1}, \setIuns{1}} \W{(2)}_{j,r} \bm{d}^{(1)}_r (\W{(1)}_{r,:} \x + \bias{(1)}_r)
\ifdef{\useicmlformat}{\\ & }{}
- \sum_{r\in \setIuns{1}, \W{(2)}_{j,r}>0} \W{(2)}_{j,r} \bm{d}^{(1)}_r \lwbnd{(1)}_r + \bias{(2)}_j
\ifdef{\useicmlformat}{,  \nonumber}{}
\end{align}
\vspace{-1em}
\begin{align}
\label{eq:2-layer-newfjl}
f_j^L(\x) & = \sum_{r\in \setIpos{1}, \setIuns{1}} \W{(2)}_{j,r} \bm{d}^{(1)}_r (\W{(1)}_{r,:} \x + \bias{(1)}_r)
\ifdef{\useicmlformat}{\\ & }{}
- \sum_{r\in \setIuns{1}, \W{(2)}_{j,r}<0} \W{(2)}_{j,r} \bm{d}^{(1)}_r \lwbnd{(1)}_r + \bias{(2)}_j
\ifdef{\useicmlformat}{,  \nonumber}{}
\end{align}
where $f_j^L(\x) \leq f_j(\x) \leq f_j^U(\x)$. 
To obtain $f_j^U(\x)$, we take the upper bound of $\sigma(\W{(1)}_{r,:} \x + \bias{(1)}_r)$ for $r \in \setIuns{1}, \W{(2)}_{j,r}>0$ and its lower bound for $r \in \setIuns{1}, \W{(2)}_{j,r} \leq 0$. Both cases share a common term of $\bm{d}^{(1)}_r (\W{(1)}_{r,:} \x + \bias{(1)}_r)$, which is combined into the first summation term in~\eqref{eq:2-layer-newfju} with $r \in \setIuns{1}$. Similarly we get the bound for $f_j^L(\x)$.

For a general $m$-layer ReLU network with the linear approximation~\eqref{eq:our_cvx_approx}, we will show in Theorem~\ref{thm:cvx_bnd} that the network output can be bounded by two explicit functions when the input $\x$ is perturbed with a $\epsilon$-bounded $\ell_p$ noise. We start by defining the activation matrix $\DD{(k)}$ and the additional equivalent bias terms $\upbias{(k)}$ and $\lwbias{(k)}$ for the $k$-th layer in Definition \ref{def:A_eta_tau} and the two explicit functions in \ref{def:f_L_f_U}.   

\begin{definition}[$\A{(k)}, \upbias{(k)}, \lwbias{(k)}$]\label{def:A_eta_tau}
Given matrices $\W{(k)} \in \R^{n_k \times n_{k-1}}$ and vectors $\bias{(k)} \in \R^{n_k}, \forall k \in [m]$.
We define ${\bf D}^{(0)} \in \R^{n_0 \times n_0}$ as an identity matrix. For each $k \in [m-1]$, we define matrix ${\bf D}^{(k)} \in \R^{n_k \times n_k}$ as follows
\begin{align}\label{eq:def_D}
\DD{(k)}_{r,r} & =   
    \begin{cases}
    \frac{\upbnd{(k)}_r}{\upbnd{(k)}_r-\lwbnd{(k)}_r} & \mathrm{~if~} r \in {\cal I}_k; \\
    1 & \mathrm{~if~} r \in {\cal I}_k^+; \\
    0 & \mathrm{~if~} r \in {\cal I}_k^-.  
  \end{cases}
\end{align}
We define matrix $\A{(m-1)} \in \R^{n_m \times n_{m-1}}$ to be $\W{(m)} \DD{(m-1)}$, and for each $k \in \{m-1, m-2, \cdots, 1\}$, matrix $\A{(k-1)} \in \R^{n_m \times n_{k-1}}$ is defined recursively as $\A{(k-1)} = \A{(k)} \W{(k)} \DD{(k-1)}.$
For each $k \in [m-1]$, we define matrices $\upbias{(k)} , \lwbias{(k)} \in \R^{n_k \times n_m}$, where
\begin{align*}
\upbias{(k)}_{r,j} & =   
    \begin{cases}
    \lwbnd{(k)}_r & \mathrm{~if~} r \in \setIuns{k}, \, \A{(k)}_{j,r} > 0 ; \\
    0 & \mathrm{~otherwise~} .
  \end{cases}
  \\
      \lwbias{(k)}_{r,j} & =   
    \begin{cases}
    \lwbnd{(k)}_r & \mathrm{~if~} r \in \setIuns{k}, \, \A{(k)}_{j,r} < 0 ; \\
    0 & \mathrm{~otherwise~} .
  \end{cases}
\end{align*}
\end{definition}

\begin{definition}[Two explicit functions : $f^U(\cdot)$ and $f^L(\cdot)$]\label{def:f_L_f_U}
Let matrices $\A{(k)}$, $\upbias{(k)}$ and $\lwbias{(k)}$ be defined as in Definition~\ref{def:A_eta_tau}. We define two functions $f^U , f^L : \R^{n_0} \rightarrow \R^{n_m}$ as follows. For each input vector $\x \in \R^{n_0}$,
\begin{align*}
f^{U}_j(\x) = & ~ \A{(0)}_{j,:}\x+ \bias{(m)}_j+\sum_{k=1}^{m-1}\A{(k)}_{j,:}(\bias{(k)}-\upbias{(k)}_{:,j}), \\
f^{L}_j(\x) = & ~ \A{(0)}_{j,:}\x+ \bias{(m)}_j+\sum_{k=1}^{m-1}\A{(k)}_{j,:}(\bias{(k)}-\lwbias{(k)}_{:,j}).
\end{align*}
\end{definition}

Now, we are ready to state our main theorem,
\begin{theorem}[Explicit upper and lower bounds]\label{thm:approach_1}
\label{thm:cvx_bnd}
Given an $m$-layer {\rm ReLU} neural network function $f : \R^{n_0} \rightarrow \R^{n_m}$, there exists two explicit functions $f^L : \R^{n_0} \rightarrow \R^{n_m}$ and $f^U :\R^{n_0} \rightarrow \R^{n_m}$ (see Definition~\ref{def:f_L_f_U}) such that $\forall j \in [n_m], \; f_{j}^{L}(\x) \leq f_{j}(\x) \leq f_{j}^{U}(\x), \; \forall \x \in B_p(\xo,\epsilon)$. 
\end{theorem}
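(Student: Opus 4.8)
The plan is to prove the two inequalities by propagating the linear relaxation~\eqref{eq:our_cvx_approx} backward through the network, one layer at a time, starting from the output layer and ending at the input. I will focus on the upper bound $f_j^U$; the lower bound $f_j^L$ is entirely symmetric, with the two branches of~\eqref{eq:our_cvx_approx} interchanged and $\upbias{(k)}$ replaced by $\lwbias{(k)}$. Throughout I assume, as stated in Section~\ref{sec3:relu}, that $\lwbnd{(k)}_r \le \z^{(k)}_r \le \upbnd{(k)}_r$ for every neuron and every $\x \in B_p(\xo,\epsilon)$, so that~\eqref{eq:our_cvx_approx} is valid pointwise at each pre-activation. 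The two-layer computation in~\eqref{eq:2-layer-newfju}--\eqref{eq:2-layer-newfjl} is exactly the base case of this propagation.

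The engine of the proof is a single per-layer relaxation step. Suppose at some stage we have shown that $f_j(\x)$ is bounded above by $\bm{c}\,\phi_k(\x)$ plus a constant, for a fixed row vector $\bm{c}\in\R^{n_k}$ satisfying $\bm{c}\,\DD{(k)} = \A{(k)}_{j,:}$. Writing $\phi_k(\x)=\sigma(\z^{(k)})$ and treating each coordinate $r$ by its category: for $r\in\setIpos{k}$ we have $\sigma(\z^{(k)}_r)=\z^{(k)}_r=\DD{(k)}_{r,r}\z^{(k)}_r$; for $r\in\setIneg{k}$ we have $\sigma(\z^{(k)}_r)=0=\DD{(k)}_{r,r}\z^{(k)}_r$; and for $r\in\setIuns{k}$ I apply the upper branch of~\eqref{eq:our_cvx_approx} when $\bm{c}_r>0$ and the lower branch when $\bm{c}_r<0$. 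Collecting terms gives $\bm{c}\,\phi_k(\x) \le \A{(k)}_{j,:}\z^{(k)} - \A{(k)}_{j,:}\upbias{(k)}_{:,j}$, and substituting $\z^{(k)} = \W{(k)}\phi_{k-1}(\x)+\bias{(k)}$ reproduces an expression of the same shape one layer lower, with new coefficient vector $\bm{c}' = \A{(k)}_{j,:}\W{(k)}$ satisfying $\bm{c}'\DD{(k-1)} = \A{(k-1)}_{j,:}$ by the recursion in Definition~\ref{def:A_eta_tau}.

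Formally I would package this as a downward induction on $k$ from $m-1$ to $0$, with invariant $f_j(\x) \le \A{(k)}_{j,:}\z^{(k)} + \bias{(m)}_j + \sum_{i=k+1}^{m-1}\A{(i)}_{j,:}\bias{(i)} - \sum_{i=k}^{m-1}\A{(i)}_{j,:}\upbias{(i)}_{:,j}$. The base case $k=m-1$ is the single relaxation of the top layer applied to $f_j(\x)=\W{(m)}_{j,:}\phi_{m-1}(\x)+\bias{(m)}_j$, using $\A{(m-1)}=\W{(m)}\DD{(m-1)}$, and the per-layer step advances the invariant from $k$ to $k-1$. At the bottom, $\phi_0(\x)=\x$ needs no relaxation and $\DD{(0)}$ being the identity gives $\A{(1)}_{j,:}\W{(1)}=\A{(0)}_{j,:}$, so the invariant collapses to exactly $f_j^U(\x)$ of Definition~\ref{def:f_L_f_U}.

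The main obstacle, and the point requiring the most care, is the sign-consistency bookkeeping: one must verify that the branch chosen at each uncertain neuron, dictated locally by the sign of the running coefficient $\bm{c}_r$, coincides with the sign of the fully propagated coefficient $\A{(k)}_{j,r}$ used in Definition~\ref{def:A_eta_tau}, and that the constant $-\lwbnd{(k)}_r$ picked up only on the upper branch aggregates precisely into $\A{(k)}_{j,:}\upbias{(k)}_{:,j}$ (and symmetrically into $\lwbias{(k)}$ for $f_j^L$). This hinges on the nonnegativity of the diagonal entries of $\DD{(k)}$ --- the slope $\upbnd{(k)}_r/(\upbnd{(k)}_r-\lwbnd{(k)}_r)$ is nonnegative for $r\in\setIuns{k}$ since $\upbnd{(k)}_r>0>\lwbnd{(k)}_r$, and the remaining entries are $0$ or $1$ --- so that $\sign((\bm{c}\DD{(k)})_r)=\sign(\bm{c}_r)$ whenever $\DD{(k)}_{r,r}>0$, keeping the branch choice consistent as coefficients propagate through successive layers. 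The zero-coefficient case contributes no correction and is harmlessly excluded by the strict inequalities in Definition~\ref{def:A_eta_tau}.
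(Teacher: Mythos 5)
Your proof is correct and takes essentially the same route as the paper's own proof in Appendix~B: a backward, layer-by-layer propagation in which each uncertain neuron's ReLU is replaced by the branch of~\eqref{eq:our_cvx_approx} dictated by the sign of the running coefficient, with the nonnegativity of the diagonal entries of $\DD{(k)}$ guaranteeing that this sign agrees with the sign of $\A{(k)}_{j,r}$ used in Definition~\ref{def:A_eta_tau}. Your explicit downward induction with a stated invariant is just a cleaner packaging of the paper's iterated chain $f_j(\x)\le f_j^{U,m-1}(\x)\le\cdots\le f_j^{U,1}(\x)=f_j^{U}(\x)$ (and symmetrically for $f_j^{L}$).
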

The proof of Theorem \ref{thm:cvx_bnd} is in Appendix~\ref{app:approach1_explicit_function}. Since the input $\x \in B_p(\xo,\epsilon)$, we can maximize \eqref{eq:2-layer-newfju} and minimize \eqref{eq:2-layer-newfjl} within this set to obtain a global upper and lower bound of $f_j(\x)$, which has analytical solutions for any $1 \leq p \leq \infty$ and the result is formally shown in Corollary \ref{cor:cvx_bnd} (proof in Appendix~\ref{app:approach1_fixed_value}). In other words, we have \textit{analytic} bounds that can be computed efficiently without resorting to any optimization solvers for general $\ell_p$ distortion, and this is the key to enable fast computation for layer-wise output bounds.

We first formally define the global upper bound $\gamma_j^U$ and lower bound $\gamma_j^L$ of $f_j(\x)$, and then obtain Corollary~\ref{cor:cvx_bnd}.
\begin{definition}[$\gamma_j^L, \gamma_j^U$]\label{def:gamma_j_L_gamma_j_U}
Given a point $\xo \in \R^{n_0}$, a neural network function $f : \R^{n_0} \rightarrow \R^{n_m}$, parameters $p,\epsilon$. Let matrices $\A{(k)}$, $\upbias{(k)}$ and $\lwbias{(k)}$, $\forall k \in [m-1]$ be defined as in Definition~\ref{def:A_eta_tau}. We define $\gamma_j^L, \gamma_j^U, \, \forall j \in [n_m]$ as
\begin{align*}
	\gamma^L_j = \mu_j^- + \nu_j - \epsilon \|\A{(0)}_{j,:}\|_q   \mathrm{~and~}
	\gamma^U_j = \mu_j^+ + \nu_j + \epsilon \|\A{(0)}_{j,:}\|_q ,
\end{align*}
where $1/p+1/q=1$ and $\nu_j, \mu_j^+, \mu_j^-$ are defined as
\begin{align}
 \mu_j^+ = ~ - \sum_{k=1}^{m-1}&\A{(k)}_{j,:}\upbias{(k)}_{:,j} ,\quad
 \mu_j^- = ~ - \sum_{k=1}^{m-1}\A{(k)}_{j,:}\lwbias{(k)}_{:,j} \label{eq:def_mu_j_plus_minus} \\
 \nu_j = ~  &\A{(0)}_{j,:} \xo +  \bias{(m)}_j + \sum_{k=1}^{m-1}\A{(k)}_{j,:}\bias{(k)} \label{eq:def_nu_j}
\end{align}
\end{definition}
\begin{corollary}[Two side bounds in closed-form]
\label{cor:cvx_bnd}
Given a point $\xo \in \R^{n_0}$, an $m$-layer neural network function $f : \R^{n_0} \rightarrow \R^{n_m}$, parameters $p$ and $\epsilon$. For each $j\in [n_m]$, there exist two fixed values $\gamma^L_j$ and $\gamma^U_j$ (see Definition~\ref{def:gamma_j_L_gamma_j_U}) such that $\gamma^L_j \leq f_j (\x) \leq \gamma^U_j, \; \forall \x \in B_p(\x_0, \epsilon ).$
\end{corollary}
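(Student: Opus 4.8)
The plan is to combine the explicit-function bounds from Theorem~\ref{thm:cvx_bnd} with a standard optimization over the $\ell_p$ ball $B_p(\xo,\epsilon)$. By Theorem~\ref{thm:cvx_bnd} we already have, for every $\x \in B_p(\xo,\epsilon)$, the sandwich $f^L_j(\x) \le f_j(\x) \le f^U_j(\x)$, where $f^U_j$ and $f^L_j$ are the affine functions of $\x$ given in Definition~\ref{def:f_L_f_U}. Since both bounds are affine in $\x$, it suffices to maximize $f^U_j(\x)$ and minimize $f^L_j(\x)$ over the ball to obtain fixed scalar bounds $\gamma^U_j$ and $\gamma^L_j$ that are valid for all admissible $\x$. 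So the core of the proof is to carry out these two affine optimizations in closed form.

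First I would write $\x = \xo + \bm{\delta}$ with $\|\bm{\delta}\|_p \le \epsilon$ and substitute into the expressions for $f^U_j$ and $f^L_j$. The only term depending on $\x$ is $\A{(0)}_{j,:}\x = \A{(0)}_{j,:}\xo + \A{(0)}_{j,:}\bm{\delta}$; all remaining terms are constants, and one checks directly that the constant part of $f^U_j$ equals $\mu_j^+ + \nu_j$ and that of $f^L_j$ equals $\mu_j^- + \nu_j$, matching Definition~\ref{def:gamma_j_L_gamma_j_U}. It then remains to bound the linear term $\A{(0)}_{j,:}\bm{\delta}$. By H\"older's inequality with $1/p + 1/q = 1$, we have $|\A{(0)}_{j,:}\bm{\delta}| \le \|\A{(0)}_{j,:}\|_q \|\bm{\delta}\|_p \le \epsilon\|\A{(0)}_{j,:}\|_q$, and this bound is tight (attained by an appropriate choice of $\bm{\delta}$ on the sphere, so the optimization is exact). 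Hence
\begin{align*}
\max_{\x \in B_p(\xo,\epsilon)} f^U_j(\x) = \mu_j^+ + \nu_j + \epsilon\|\A{(0)}_{j,:}\|_q = \gamma^U_j,
\end{align*}
and symmetrically $\min_{\x \in B_p(\xo,\epsilon)} f^L_j(\x) = \mu_j^- + \nu_j - \epsilon\|\A{(0)}_{j,:}\|_q = \gamma^L_j$.

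Chaining the inequalities then gives the claim: for every $\x \in B_p(\xo,\epsilon)$ and every $j \in [n_m]$,
\begin{align*}
\gamma^L_j = \min_{\x' \in B_p(\xo,\epsilon)} f^L_j(\x') \le f^L_j(\x) \le f_j(\x) \le f^U_j(\x) \le \max_{\x' \in B_p(\xo,\epsilon)} f^U_j(\x') = \gamma^U_j,
\end{align*}
which is exactly $\gamma^L_j \le f_j(\x) \le \gamma^U_j$. I do not anticipate a genuine obstacle here, since the heavy lifting is done by Theorem~\ref{thm:cvx_bnd}; the corollary is essentially a closed-form evaluation. The one point requiring care is the bookkeeping that isolates the $\x$-dependent term and verifies that the remaining constants collect precisely into $\mu_j^\pm + \nu_j$ as defined, together with confirming the tightness (attainability) of the H\"older step so that the stated $\gamma^U_j, \gamma^L_j$ are the exact extrema rather than merely valid bounds.
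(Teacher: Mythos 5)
Your proposal is correct and follows essentially the same route as the paper's proof: invoke Theorem~\ref{thm:cvx_bnd}, isolate the single $\x$-dependent term $\A{(0)}_{j,:}\x$, and evaluate its extrema over the $\ell_p$ ball via the $\ell_q$ dual norm (your H\"older-plus-tightness step is exactly the dual-norm identity the paper uses after its change of variables $\bm{y}=(\x-\xo)/\epsilon$), then collect the constants into $\mu_j^\pm+\nu_j$.
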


\ifdef{\arxivversion}{
\begin{algorithm}[H]
  \caption{\textbf{Fast} Bounding via \textbf{Lin}ear Upper/Lower Bounds for ReLU (\textbf{Fast-Lin})}
  \label{alg:fast-lin}
\begin{algorithmic}[1]
\Require weights and biases of $m$ layers: $\W{(1)}, \cdots, \W{(m)}$, $\bias{(1)}, \cdots, \bias{(m)}$, original class $c$, target class $j$
\Procedure{\textsc{Fast-Lin}}{${\bm x}_0, p, \eps_0$}
\State Replace the last layer weights $\W{(m)}$ with a row vector $\bm{\bar{w}} \leftarrow \W{(m)}_{c,:} - \W{(m)}_{j,:}$ (see Section~\ref{sec:cal_last_layer})
\State Initial $\eps \leftarrow \eps_0$
\While{$\eps$ has not achieved a desired accuracy and iteration limit has not reached}
\State $\lwbnd{(0)}, \upbnd{(0)} \leftarrow \text{don't care}$
\For{$k \leftarrow 1$ {\bfseries to} $m$} \Comment{Compute lower and upper bounds for ReLU unis for all $m$ layers}
\State $\lwbnd{(k)}, \upbnd{(k)} \leftarrow $\textsc{ComputeTwoSideBounds}(${\bm x}_0, \epsilon,p, \lwbnd{(1:k-1)}, \upbnd{(1:k-1)}, k$)
\EndFor
\If {$\lwbnd{(m)} > 0$}  \Comment $\lwbnd{(m)}$ is a scalar since the last layer weight is a row vector
\State $\eps$ is a lower bound; increase $\eps$ using a binary search procedure
\Else
\State $\eps$ is not a lower bound; decrease $\eps$ using a binary search procedure
\EndIf
\EndWhile
\State $\tilde \epsilon_j \leftarrow \eps$
\State \Return $\tilde \epsilon_j$ \Comment $\tilde \epsilon_j$ is a certified lower bound $\beta_L$
\EndProcedure
\Procedure{\textsc{ComputeTwoSideBounds}}{${\bm x}_0, \epsilon,p, \lwbnd{(1:m'-1)}, \upbnd{(1:m'-1)}, m^\prime$}
  \State \Comment{ ${\bm x}_0 \in \R^{n_0}$ : input data vector, $p$ : $\ell_p$ norm, $\epsilon$ : maximum $\ell_p$-norm perturbation}
  \State \Comment{ $\lwbnd{(k)}, \upbnd{(k)}, \, k \in [m^\prime]$ : layer-wise bounds }
  \If{$m^\prime = 1$} \Comment{Step 1: Form $\A{}$ matrices}
  \State $\A{(0)} \leftarrow \W{(1)}$ \Comment{First layer bounds do not depend on $\lwbnd{(0)}, \upbnd{(0)}$}
  \Else
  	\For{$k \leftarrow m^\prime-1$ {\bfseries to} $1$}
  		\If{$k = m^\prime-1$} \Comment{Construct $\DD{(m'-1)}, \A{(m'-1)}, \lwbias{(m'-1)}, \upbias{(m'-1)}$}
  			\State Construct diagonal matrix $\DD{(k)} \in \R^{n_k \times n_k}$ using $\lwbnd{(k)}, \upbnd{(k)}$ according to Eq.~\eqref{eq:def_D}.
    			\State $\A{(m'-1)} \leftarrow \W{(m')}\DD{(m'-1)}$
    	\Else \Comment Multiply all saved $\A{(k)}$ by $\A{(m^\prime-1)}$
    		\State $\A{(k)} \leftarrow \A{(m^\prime-1)}\A{(k)}$ \Comment We save $\A{(k)}$ for next function call
    	\EndIf
        \State $\upbias{(k)} \leftarrow \mathbf{0}, \, \lwbias{(k)} \leftarrow \mathbf{0}$ \Comment{Initialize $\upbias{(k)}$ and $\lwbias{(k)}$}
          \For{all $r \in  \setIuns{k}$}
            \For{$j \leftarrow 1$ {\bfseries to} $n_k$}
            	\If{$\A{(k)}_{j,r} > 0$}
    				\State {$\upbias{(k)}_{r,j} \leftarrow \lwbnd{(k)}_r$}
    	   		\Else 
    				\State {$\lwbias{(k)}_{r,j} \leftarrow \lwbnd{(k)}_r$}
    			\EndIf
            \EndFor
  		  \EndFor

  	\EndFor
  \EndIf 
  \For{$j = 1$ {\bfseries to} $n_{m^\prime}$}  \Comment{Step 2: Compute $\gamma^U$ and $\gamma^L$}
  	\State $\nu_j \leftarrow \A{(0)}_{j,:} \xo +  \bias{(m^\prime)}_j, \, \mu_j^+ \leftarrow 0, \, \mu_j^- \leftarrow 0$ \Comment{Initialize $\nu_j, \mu_j^+, \mu_j^-$}
    \For{$k=1$ {\bfseries to} $m^\prime-1$} \Comment This loop is skipped when $m^\prime = 1$
    \State $\mu_j^+ \leftarrow \mu_j^+ - \A{(k)}_{j,:}\upbias{(k)}_{:,j}$, \quad $\mu_j^- \leftarrow \mu_j^- - \A{(k)}_{j,:}\lwbias{(k)}_{:,j}$ \Comment{According to Eq.~\eqref{eq:def_mu_j_plus_minus}}
    \State $\nu_j \leftarrow \nu_j + \A{(k)}_{j,:}\bias{(k)}$ \Comment{According to Eq.~\eqref{eq:def_nu_j}}
  	\EndFor
  	\State \Comment{ $\nu_j,\mu_j^+,\mu_j^-$ satisfy Definition~\ref{def:gamma_j_L_gamma_j_U}}
  	\State $\gamma_j^U \leftarrow \mu_j^+ + \nu_j +\epsilon \|\A{(0)}_{j,:}\|_q $ 
  	\State $\gamma_j^L \leftarrow \mu_j^- + \nu_j -\epsilon \|\A{(0)}_{j,:}\|_q $ \Comment{Definition~\ref{def:gamma_j_L_gamma_j_U}}
    \EndFor
    \State \Return $\gamma^L, \gamma^U$  
\EndProcedure 
\end{algorithmic}
\end{algorithm}
}{}

\subsubsection{Computing pre-{\rm ReLU} activation bounds }
Theorem \ref{thm:cvx_bnd} and Corollary \ref{cor:cvx_bnd} give us a global lower bound $\gamma_j^L$ and upper bound $\gamma_j^U$ of the $j$-th neuron at the $m$-th layer if we know all the pre-ReLU activation bounds $\lwbnd{(k)}$ and $\upbnd{(k)}$, from layer $1$ to $m-1$, as the construction of $\DD{(k)}$, $\lwbias{(k)}$ and $\upbias{(k)}$ requires $\lwbnd{(k)}$ and $\upbnd{(k)}$ (see Definition~\ref{def:A_eta_tau}). Here, we show how this can be done easily and layer-by-layer. We start from $m = 1$ where $\A{(0)} = \W{(1)}, f^U(\x) = f^L(\x) = \A{(0)}\x+\bias{(1)}$. Then, we can apply Corollary \ref{cor:cvx_bnd} to get the output bounds of each neuron and set them as $\lwbnd{(1)}$ and $\upbnd{(1)}$. Then, we can proceed to $m = 2$ with $\lwbnd{(1)}$ and $\upbnd{(1)}$ and compute the output bounds of second layer by Corollary \ref{cor:cvx_bnd} and set them as $\lwbnd{(2)}$ and $\upbnd{(2)}$. Repeating this procedure for all $m-1$ layers, we will get all the $\lwbnd{(k)}$ and $\upbnd{(k)}$ needed to compute the output range of the $m$-th layer. 

Note that when computing $\lwbnd{(k)}$ and $\upbnd{(k)}$, the constructed $\W{(k)} \DD{(k-1)}$ can be saved and reused for bounding the next layer, which facilitates efficient implementations. Moreover, the time complexity of computing the output bounds of an $m$-layer ReLU network with Theorem~\ref{thm:cvx_bnd} and Corollary~\ref{cor:cvx_bnd} is \textit{polynomial} time in contrast to the approaches in \cite{katz2017reluplex} and \cite{lomuscio2017approach} where SMT solvers and MIO solvers have \textit{exponential} time complexity. The major computation cost is to form $\A{(0)}$ for the $m$-th layer, which involves multiplications of layer weights in a \textit{similar cost of forward propagation}. See the ``ComputeTwoSideBounds'' procedure in Algorithm~\ref{alg:fast-lin} in Appendix~\ref{sec:app_algs}.

\subsubsection{Deriving maximum certified lower bounds of minimum adversarial distortion}
\label{sec:cal_last_layer}
Suppose $c$ is the predicted class of the input data point $\xo$ and the  class is $j$. With Theorem~\ref{thm:cvx_bnd}, the maximum possible lower bound for the targeted attacks $\tilde \epsilon_j$ and un-targeted attacks $\tilde \epsilon$ are 
\begin{equation*}
	\tilde \epsilon_j = \max_{\epsilon} \, \epsilon \; \text{s.t.} \; \gamma_c^L(\epsilon) - \gamma_j^U(\epsilon) > 0 \; \text{  and  } \; \tilde \epsilon = \min_{j \neq c} \, \tilde \epsilon_j .
\end{equation*}
Though it is hard to get analytic forms of $\gamma_c^L(\epsilon)$ and $\gamma_j^U(\epsilon)$ in terms of $\epsilon$, fortunately, we can still obtain $\tilde \epsilon_j$ via a binary search. This is because Corollary~\ref{cor:cvx_bnd} allows us to efficiently compute the numerical values of $\gamma_c^L(\epsilon)$ and $\gamma_j^U(\epsilon)$ given $\epsilon$. It is worth noting that we can further improve the bound by considering $g(\x) := f_c(\x) - f_j(\x)$ at the last layer and apply the same procedure to compute the lower bound of $g(\x)$ (denoted as $\tilde \gamma^L$); this can be done easily by redefining the last layer's weights to be a row vector $\bm{\bar{w}} \coloneqq \W{(m)}_{c,:} - \W{(m)}_{j,:}$. The corresponding maximum possible lower bound for the targeted attacks is $\tilde \epsilon_j = \max \epsilon \; \text{s.t.} \; \tilde \gamma^L(\epsilon)> 0$. 
\ifdef{\arxivversion}{Our proposed algorithm, \fastlin, is shown in Algorithm~\ref{alg:fast-lin}.
}
{We list our complete algorithm, \fastlin, in Appendix~\ref{sec:app_algs}.}

\subsubsection{Discussions}
\label{sec:discuss}
We have shown how to derive explicit output bounds of ReLU network (Theorem~\ref{thm:approach_1}) with the proposed linear approximations and obtain analytical certified lower bounds (Corollary~\ref{cor:cvx_bnd}), which is the key of our proposed algorithm \textbf{Fast-Lin}. 
\cite{zico17convex} presents a similar algorithmic result on computing certified bounds, but our framework and theirs are entirely different -- we use direct computation of layer-wise linear upper/lower bounds in Sec \ref{sec3:convexbnd} with binary search on $\epsilon$, while their results is achieved via the lens of dual LP formulation with Newton's method.
Interestingly, when we choose a special set of lower and upper bounds as in~\eqref{eq:2-layer-cvx-ours} and they choose a special dual LP variable in their equation (8), the two different frameworks coincidentally produce the same procedure for computing layer-wise bounds (the ``ComputeTwoSideBounds'' procedure in \fastlin and Algorithm 1 in \cite{zico17convex}).
However, our choice of bounds~\eqref{eq:2-layer-cvx-ours} is due to computation efficiency, while~\cite{zico17convex} gives a quite different justification. We encourage the readers to read Appendix A.3 in their paper on the justifications for this specific selection of dual variables and understand this robustness verification problem from different perspectives.

\newcommand{\gradl}[1]{\tilde{\bm{l}}^{#1}}
\newcommand{\gradu}[1]{\tilde{\bm{u}}^{#1}}

\newcommand{\gradC}[1]{\mathbf{C}^{#1}}
\newcommand{\gradL}[1]{\mathbf{L}^{#1}}
\newcommand{\gradU}[1]{\mathbf{U}^{#1}}
\newcommand{\Y}[1]{\mathbf{Y}^{#1}}

\subsection{Approach 2 (Fast-Lip): Certified lower bounds via bounding the local Lipschitz constant}
\label{sec3:gradbnd}

\cite{weng2017evaluating} shows a non-trivial lower bound of minimum adversarial distortion for an input example $\xo$ in targeted attacks is $\min \left( g(\bm{x_0})/L_{q,x_0}^j,\epsilon \right)$, where $g(\x)=f_c(\x) - f_j(\x), \, L_{q,x_0}^j$ is the local Lipschitz constant of $g(\x)$ in $B_p(\xo, \epsilon)$, $\,j$ is the target class, $c$ is the original class, and $1/p + 1/q = 1$. For un-targeted attacks, the lower bound can be presented in a similar form. \cite{weng2017evaluating} uses sampling techniques to estimate the local Lipschitz constant and compute an estimated lower bound without certificates.

Here, we propose a new algorithm to compute a \textit{certified} lower bound of the minimum adversarial distortion by upper bounding the local Lipschitz constant. To start with, let us rewrite the relations of subsequent layers in the following form: $\phi_k(\x) = \Lam{(k)}(\W{(k)}\phi_{k-1}(\x)+\bias{(k)})$, where $\sigma(\cdot)$ is replaced by the diagonal activation pattern matrix $\Lam{(k)}$ that encodes the status of neurons $r$ in $k$-th layer:
\begin{equation}
	\Lam{(k)}_{r,r} =     
    \begin{cases}
    1 \text{ or } 0 & \text{if $r \in \setIuns{k}$} \\
    1 & \text{if $r \in \setIpos{k}$} \\
    0 & \text{if $r \in \setIneg{k}$}     
  \end{cases}
\end{equation}
and $\Lam{(m)} = \bm{I}_{n_m}$. With a slight abuse of notation, let us define $\Lam{(k)}_a$ as a diagonal activation matrix for neurons in the $k$-th layer who are always \textbf{a}ctivated, i.e. the $r$-th diagonal is $1$ if $r \in \setIpos{k}$ and $0$ otherwise, and  $\Lam{(k)}_u$ as the diagonal activation matrix for $k$-th layer neurons whose status are \textbf{u}ncertain, i.e. the $r$-th diagonal is $1$ or $0$ (to be determined) if $r \in \setIuns{k}$, and $0$ otherwise. Therefore, we have $\Lam{(k)} = \Lam{(k)}_a + \Lam{(k)}_u$. 
We can obtain $\Lam{(k)}$ for $\x \in B_p(\xo,\epsilon)$ by applying Algorithm~\ref{alg:fast-lin} and check the lower and upper bounds for each neuron $r$ in layer $k$.


\subsubsection{A general upper bound of Lipschitz constant in $\ell_q$ norm}
The central idea is to compute upper bounds of $L_{q,x_0}^j$ by exploiting the three categories of activation patterns in ReLU networks when the allowable inputs are in $B_p(\xo,\epsilon)$. $L_{q, \x_0}^j$ can be defined as the maximum norm of directional derivative as shown in \cite{weng2017evaluating}.
For the ReLU network, the maximum directional derivative norm can be found by examining all the possible activation patterns and take the one (the worst-case) that results in the largest gradient norm. However, as all possible activation patterns grow exponentially with the number of the neurons, it is impossible to examine all of them in brute-force. Fortunately, computing the worst-case pattern on each element of $\nabla g(\x)$ (i.e. $[\nabla g(x)]_k, \, k \in [n_0]$) is much easier and more efficient. In addition, we apply a simple fact that the maximum norm of a vector (which is $\nabla g(\x), \x \in B_p(\xo,\epsilon)$ in our case) is upper bounded by the norm of the maximum value for each components. By computing the worst-case pattern on $[\nabla g(\x)]_k$ and its norm, we can obtain an upper bound of the local Lipschitz constant, which results in a certified lower bound of minimum distortion. 

Below, we first show how to derive an upper bound of the Lipschitz constant by computing the worst-case activation pattern on $[\nabla g(\x)]_k$ for $2$ layers. Next, we will show how to apply it repeatedly for a general $m$-layer network, and the algorithm is named \textbf{Fast-Lip}. Note that for simplicity, we will use $[\nabla f_j(\x)]_k$ to illustrate our derivation; however, it is easy to extend to $[\nabla g(\x)]_k$ as $g(\x) = f_c(\x) - f_j(\x)$ by simply replacing last layer weight vector by $\W{(m)}_{c,:}-\W{(m)}_{j,:}$.

\ifdef{\arxivversion}{
\begin{algorithm}[ht]
  \caption{\textbf{Fast} Bounding via Upper Bounding Local \textbf{Lip}schitz Constant (\textbf{Fast-Lip})}
  \label{alg:fast-lip}
\begin{algorithmic}[1]
\Require Weights of $m$ layers: $\W{(1)}, \cdots \W{(m)}$, original class $c$, target class $j$
\Procedure{\textsc{Fast-Lip}}{${\bm x}_0, p, \eps$}
\State Replace the last layer weights $\W{(m)}$ with a row vector $\bm{\bar{w}} \leftarrow \W{(m)}_{c,:} - \W{(m)}_{j,:}$ (see Section~\ref{sec:cal_last_layer})
\State Run \textsc{Fast-Lin} to find layer-wise bounds $\lwbnd{(i)}, \upbnd{(i)}$, and form $\setIpos{i}, \setIneg{i}, \setIuns{i}$ fo all $i \in [m]$
\State $\gradC{(0)} \leftarrow \W{(1)}$,
$\gradL{(0)} \leftarrow \bm{0}$,
$\gradU{(0)} \leftarrow \bm{0}$
\For{$l \leftarrow 1$ to $m-1$}
	\State $\gradC{(l)}, \gradL{(l)}, \gradU{(l)}$ = \textsc{BoundLayerGrad}$(\gradC{(l-1)}, \gradL{(l-1)}, \gradU{(l-1)}, \W{(l+1)}, n_{l+1}, \setIpos{l}, \setIneg{l}, \setIuns{l})$
\EndFor
\State \Comment $\bm{v} \in \R^{n_0}$ because the last layer is replaced with a row vector $\bm{\bar{w}}$
\State $\bm{v} \leftarrow \max(|\gradC{(m-1)} + \gradL{(m-1)}|, |\gradC{(m-1)} + \gradU{(m-1)}|)$ \Comment All operations are element-wise; 
\State $\tilde \eps_j \leftarrow \min(\frac{g(\x_0)}{\| \bm{v} \|_q}, \eps)$ \Comment $q$ is the dual norm of $p$, $\frac{1}{p}+\frac{1}{q}=1$
\State \Return $\tilde \eps_j$ \Comment $\tilde \eps_j$ is a certified lower bound $\beta_L$. We can also bisect $\tilde \eps_j$ (omitted).
\EndProcedure 
\Procedure{\textsc{BoundLayerGrad}}{$\gradC{}, \gradL{}, \gradU{}, \W{}, n^\prime, \setIpos{}, \setIneg{}, \setIuns{}$}
\For{$k \in [n_0]$} \Comment $n_0$ is the dimension of $\x_0$
\For{$j \in [n^\prime]$}
\State $\gradC{^\prime}_{j,k} \leftarrow \sum\limits_{i \in \setIpos{}} \W{}_{j,i} \gradC{}_{i,k}$

\State $\gradU{^\prime}_{j,k} \leftarrow \sum\limits_{i \in \setIpos{}, \W{}_{j,i} > 0} \hspace{-4mm} \W{}_{j,i} \gradU{}_{i,k} \hspace{2mm} + \sum\limits_{i \in \setIpos{}, \W{}_{j,i} < 0} \hspace{-4mm} \W{}_{j,i} \gradL{}_{i,k} \hspace{2mm}+ $
\State $\hspace{13mm} \sum\limits_{i \in \setIuns{}, \W{}_{j,i} < 0, \gradC{}_{i,k} + \gradL{}_{i,k} < 0} \hspace{-3mm}  \W{}_{j,i} (\gradC{}_{i,k} + \gradL{}_{i,k}) \hspace{2mm} + \sum\limits_{i \in \setIuns{}, \W{}_{j,i} > 0, \gradC{}_{i,k} + \gradU{}_{i,k} > 0} \hspace{-3mm}  \W{}_{j,i} (\gradC{}_{i,k} + \gradU{}_{i,k})$

\State $\gradL{^\prime}_{j,k} \leftarrow \sum\limits_{i \in \setIpos{}, \W{}_{j,i} > 0} \hspace{-4mm} \W{}_{j,i} \gradL{}_{i,k} \hspace{2mm} + \sum\limits_{i \in \setIpos{}, \W{}_{j,i} < 0} \hspace{-4mm} \W{}_{j,i} \gradU{}_{i,k} \hspace{2mm} + $
\State $\hspace{13mm} \sum\limits_{i \in \setIuns{}, \W{}_{j,i} > 0, \gradC{}_{i,k} + \gradL{}_{i,k} < 0} \hspace{-3mm}  \W{}_{j,i} (\gradC{}_{i,k} + \gradL{}_{i,k}) \hspace{2mm} + \sum\limits_{i \in \setIuns{}, \W{}_{j,i} < 0, \gradC{}_{i,k} + \gradU{}_{i,k} > 0} \hspace{-3mm}  \W{}_{j,i} (\gradC{}_{i,k} + \gradU{}_{i,k})$

\EndFor
\EndFor
\State \Return $\gradC{^\prime}, \gradL{^\prime}, \gradU{^\prime}$
\EndProcedure
\end{algorithmic}
\end{algorithm}
}{}

\paragraph{Bounds for a 2-layer ReLU Network.} The gradient is:
$$[\nabla f_j(\x)]_k = \W{(2)}_{j,:} \Lam{(1)}_a \W{(1)}_{:,k} + \W{(2)}_{j,:} \Lam{(1)}_u \W{(1)}_{:,k}.$$ The first term $\W{(2)}_{j,:} \Lam{(1)}_a \W{(1)}_{:,k}$ is a constant and all we need to bound is the second term $\W{(2)}_{j,:} \Lam{(1)}_u \W{(1)}_{:,k}$. Let $\gradC{(1)}_{j,k} = \W{(2)}_{j,:} \Lam{(1)}_a \W{(1)}_{:,k},\,$ $\gradL{(1)}_{j,k}$ and $\gradU{(1)}_{j,k}$ be the lower and upper bounds of the second term, we have 
\begin{equation*}
\gradL{(1)}_{j,k} =  \hspace{-4mm} \sum_{i \in \setIuns{1},\W{(2)}_{j,i}\W{(2)}_{i,k} < 0} \hspace{-8mm} \W{(2)}_{j,i}\W{(2)}_{i,k}, \; \gradU{(1)}_{j,k} =  \hspace{-4mm} \sum_{i \in \setIuns{1},\W{(2)}_{j,i}\W{(2)}_{i,k} > 0} \hspace{-8mm}  \W{(2)}_{j,i}\W{(2)}_{i,k}
\end{equation*} 
\[\max_{\x \in B_p(\xo,\epsilon)} |[\nabla f_j(\x)]_k| \leq \max (|\gradC{(1)}_{j,k} + \gradL{(1)}_{j,k}|, |\gradC{(1)}_{j,k} + \gradU{(1)}_{j,k}|).
\]

\paragraph{Bounds for 3 layers or more.} For 3 or more layers, we can apply the above 2-layer results recursively, layer-by-layer. For example, for a 3-layer ReLU network,  $$[\nabla f_j(\x)]_k = \W{(3)}_{j,:} \Lam{(2)} \W{(2)} \Lam{(1)} \W{(1)}_{:,k}, $$
if we let $\Y{(1)}_{:,k} = \W{(2)} \Lam{(1)} \W{(1)}_{:,k}$, then $[\nabla f_j(\x)]_k$ is reduced to the following form that is similar to 2 layers:
\begin{align}
\label{eq:grad_3layer1}
	[\nabla f_j(\x)]_k & = \W{(3)}_{j,:} \Lam{(2)} \Y{(1)}_{:,k} \\
\label{eq:grad_3layer2}
    & = \W{(3)}_{j,:} \Lam{(2)}_a \Y{(1)}_{:,k} + \W{(3)}_{j,:} \Lam{(2)}_u \Y{(1)}_{:,k}
\end{align}
To obtain the bound in \eqref{eq:grad_3layer1}, we first need to obtain a lower bound and upper bound of $\Y{(1)}_{:,k}$, where we can directly apply the 2-layer results to get an upper and an lower bound for each component $i$ as $\gradC{(1)}_{i,k} + \gradL{(1)}_{i,k} \leq \Y{(1)}_{i,k} \leq \gradC{(1)}_{i,k} + \gradU{(1)}_{i,k}$. Next, the first term $\W{(3)}_{j,:} \Lam{(2)}_a \Y{(1)}_{:,k}$ in \eqref{eq:grad_3layer2} can be lower bounded and upper bounded respectively by 
\begin{align}
\label{eq:3-layer-LB1}
& \sum_{i \in \setIpos{2}} \W{(3)}_{j,i} \gradC{(1)}_{i,k} + \hspace{-3mm} \sum_{i \in \setIpos{2}, \W{(3)}_{j,i} > 0} \hspace{-4mm} \W{(3)}_{j,i} \gradL{(1)}_{i,k} + \hspace{-4mm} \sum_{i \in \setIpos{2}, \W{(3)}_{j,i} < 0} \hspace{-3.5mm} \W{(3)}_{j,i} \gradU{(1)}_{i,k} \\
\label{eq:3-layer-UB1}
& \sum_{i \in \setIpos{2}} \W{(3)}_{j,i} \gradC{(1)}_{i,k} + \hspace{-3mm} \sum_{i \in \setIpos{2}, \W{(3)}_{j,i} > 0} \hspace{-4mm} \W{(3)}_{j,i} \gradU{(1)}_{i,k} + \hspace{-4mm} \sum_{i \in \setIpos{2}, \W{(3)}_{j,i} < 0} \hspace{-3.5mm} \W{(3)}_{j,i} \gradL{(1)}_{i,k}
\end{align}
whereas the second term $\W{(3)}_{j,:} \Lam{(2)}_u \Y{(1)}_{:,k}$ in \eqref{eq:grad_3layer2} is bounded by $\sum_{i \in \mathcal{P}} \W{(3)}_{j,i} (\gradC{(1)}_{i,k} + \gradL{(1)}_{i,k}) + \sum_{i \in \mathcal{Q}} \W{(3)}_{j,i} (\gradC{(1)}_{i,k} + \gradU{(1)}_{i,k})$ with lower/upper bound index sets $\mathcal{P}_L,\mathcal{Q}_L$ and $\mathcal{P}_U,\mathcal{Q}_U$:
\begin{align}
\label{eq:3-layer-LB2}
	& \mathcal{P}_L = \{i \mid i \in \setIuns{2}, \W{(3)}_{j,i} > 0, \gradC{(1)}_{i,k} + \gradL{(1)}_{i,k} < 0 \}, \nonumber \\ 
    & \mathcal{Q}_L = \{i \mid i \in \setIuns{2}, \W{(3)}_{j,i} < 0, \gradC{(1)}_{i,k} + \gradU{(1)}_{i,k} > 0 \};\\
    \label{eq:3-layer-UB2}
    & \mathcal{P}_U = \{i \mid i \in \setIuns{2}, \W{(3)}_{j,i} < 0, \gradC{(1)}_{i,k} + \gradL{(1)}_{i,k} < 0 \}, \nonumber \\ 
    & \mathcal{Q}_U = \{i \mid i \in \setIuns{2}, \W{(3)}_{j,i} > 0, \gradC{(1)}_{i,k} + \gradU{(1)}_{i,k} > 0 \}.
\end{align}
Let $\gradC{(2)}_{j,k} = \sum_{i \in \setIpos{2}} \W{(3)}_{j,i} \gradC{(1)}_{i,k}$, $\gradU{(2)}_{j,k}+\gradC{(2)}_{j,k}$ and $\gradL{(2)}_{j,k}+\gradC{(2)}_{j,k}$ be the upper and lower bound of $[\nabla f_j(\x)]_k$, we have 
\begin{equation*}
\gradU{(2)}_{j,k} + \gradC{(2)}_{j,k} = \eqref{eq:3-layer-UB1} + \eqref{eq:3-layer-UB2}  \; \; \text{and} \; \; \gradL{(2)}_{j,k} + \gradC{(2)}_{j,k} = \eqref{eq:3-layer-LB1} + \eqref{eq:3-layer-LB2},
\end{equation*}
\[
\max_{\x \in B_p(\xo,\epsilon)} |[\nabla f_j(\x)]_k| \hspace{-1mm} \leq \hspace{-1mm} \max (|\gradL{(2)}_{j,k}+\gradC{(2)}_{j,k}|, |\gradU{(2)}_{j,k}+\gradC{(2)}_{j,k}|).
\] 
Thus, this technique can be used iteratively to solve $\max_{\x \in B_p(\xo,\epsilon)} |[\nabla f_j(\x)]_k|$ for a general $m$-layer network, and we can easily bound any $q$ norm of $\nabla f_j(\x)$ by the $q$ norm of the vector of maximum values. For example,
\ifdef{\arxivversion}{
\begin{align*}
& \max_{\x \in B_p(\xo,\epsilon)} \| \nabla f_j(\x) \|_1 \leq \sum_j  \max_{\x \in B_p(\xo,\epsilon)} |[\nabla f_j(\x)]_k| , \\
& \max_{\x \in B_p(\xo,\epsilon)} \| \nabla f_j(\x) \|_2 \leq \sqrt{\sum_j ( \max_{\x \in B_p(\xo,\epsilon)} |[\nabla f_j(\x)]_k| )^2 }, \\
& \max_{\x \in B_p(\xo,\epsilon)} \| \nabla f_j(\x) \|_\infty \leq \max_j \max_{\x \in B_p(\xo,\epsilon)} |[\nabla f_j(\x)]_k|.
\end{align*}
}{
\begin{equation*}
\max_{\x \in B_p(\xo,\epsilon)} \| \nabla f_j(\x) \|_q \leq \left( {\sum_k ( \max_{\x \in B_p(\xo,\epsilon)} |[\nabla f_j(\x)]_k| )^q } \right )^{\frac{1}{q}}
\end{equation*}
}
\ifdef{\arxivversion}{
The full procedure of \fastlip is in Algorithm~\ref{alg:fast-lip}.
}{We list our full procedure, \fastlip, in Appendix~\ref{sec:app_algs}.
}

\paragraph{Further speed-up.} Note that in the 3-layer example, we compute the bounds from right to left, i.e. we first get the bound of $\W{(2)} \Lam{(1)} \W{(1)}_{:,k}$, and then bound $\W{(3)}_{j,:} \Lam{(2)} \Y{(1)}_{:,k}$. Similarly, we can compute the bounds from left to right -- get the bound of $\W{(3)}_{j,:} \Lam{(2)} \W{(2)}$ first, and then bound $\Y{(2)}_{j,:} \Lam{(1)} \W{(1)}_{:,k}$, where $\Y{(2)}_{j,:} = \W{(3)}_{j,:} \Lam{(2)} \W{(2)}$. Since the dimension of the output layer ($n_m$) is typically far less than the dimension of the input vector ($n_0$), computing the bounds from left to right is more efficient as the matrix $\Y{}$ has a smaller dimension of $n_m \times n_k$ rather than $n_k \times n_0$.

\begin{table*}[ht]
\centering
\caption{Comparison of methods of computing certified lower bounds (\fastlin, \fastlip, \lp, \lpfull,\opnorm), estimated lower bound (\clever), exact minimum distortion (\reluplex) and upper bounds (\textbf{Attack}: CW for $p = 2, \infty$, EAD for $p = 1$) on (a) 2, 3 layers \textit{toy} MNIST networks with 20 neurons per layer and (b) \textit{large} networks with 2-7 layers, 1024 or 2048 nodes per layer. Differences of lower bounds and speedup are measured on the best bound from our proposed algorithms and \lp-based approaches (the \textbf{bold} numbers in each row). In (a), we show how close our fast bounds are to exact minimum distortions (\textbf{Reluplex}) and the bounds that are slightly tighter but very expensive (\lpfull). In (b), \lpfull and \reluplex are \textit{computationally infeasible} for all the networks reported here.}
\label{tb:smallnetwork_and_large}
\vspace{0.3em}

\begin{subtable}[t]{\textwidth}

\centering
\scalebox{0.72}{
\begin{tabular}{|c|c|c|
>{\centering\arraybackslash}p{9ex}
>{\centering\arraybackslash}p{9ex}|
>{\centering\arraybackslash}p{9ex}
>{\centering\arraybackslash}p{9ex}
||c||c||cc|}
\hline
\multicolumn{3}{|c|}{Toy Networks}                                                                                                   & \multicolumn{8}{c|}{Average Magnitude of Distortions on 100 Images}                                                                        \\ \hline
\multirow{3}{*}{Network}                                                       & \multirow{3}{*}{$p$}   & \multirow{3}{*}{Target}  & \multicolumn{4}{c||}{Certified Lower Bounds}                                      & difference       & Exact                   & \multicolumn{2}{c|}{Uncertified}                    \\ \cline{4-7}\cline{9-11}
                                                                               &                        &                          & \multicolumn{2}{c|}{Our bounds} & \multicolumn{2}{c||}{Our Baselines} & ours vs.        & Reluplex                & CLEVER                    &  Attacks                \\ \cline{4-7}
                                                                               &                        &                          & Fast-Lin     & Fast-Lip         & LP         & LP-Full                      & LP(-Full)         & \cite{katz2017reluplex} & \cite{weng2017evaluating} & CW/EAD                  \\ \hline

\multirow{3}{*}{\begin{tabular}[c]{@{}c@{}}MNIST\\ $2\times[20]$\end{tabular}} & $\infty$               & rand      & \bf 0.0309 & 0.0270     & \bf 0.0319 & 0.0319     &  -3.2\% & 0.07765  & 0.0428 & 0.08060 \\
                                                                               & $2$                    & rand      & \bf 0.6278 & 0.6057     & 0.7560     & \bf 0.9182 & -31.6\% & -        & 0.8426 & 1.19630 \\
                                                                               & $1$                    & rand      & 3.9297     & \bf 4.8561 & 4.2681     & \bf 4.6822 &  +3.7\% & -        & 5.858  & 11.4760 \\ \hline
\multirow{3}{*}{\begin{tabular}[c]{@{}c@{}}MNIST\\ $3\times[20]$\end{tabular}} & $\infty$               & rand      & \bf 0.0229 & 0.0142     & 0.0241     & \bf 0.0246 &  -6.9\% & 0.06824  & 0.0385 & 0.08114 \\
                                                                               & $2$                    & rand      & \bf 0.4652 & 0.3273     & 0.5345     & \bf 0.7096 & -34.4\% & -        & 0.7331 & 1.22570 \\
                                                                               & $1$                    & rand      & \bf 2.8550 & 2.8144     & 3.1000     & \bf 3.5740 & -20.1\% & -        & 4.990  & 10.7220 \\ \hline
\end{tabular}
}
\center
\caption{Toy networks. \reluplex is designed to verify $\ell_\infty$ robustness so we omit its numbers for $p = 2,1$.}
\label{tb:smallnetwork_main}
\end{subtable}

\begin{subtable}[t]{\textwidth}
\centering
\scalebox{0.68}{
\begin{tabular}{|c|c|cc|cc||c||cc||cc|c||c|}
\hline

\multicolumn{2}{|c|}{Large Networks}                                                                        & \multicolumn{7}{c||}{Average Magnitude of Distortion on 100 Images}                                                                                                             & \multicolumn{4}{c|}{Average Running Time per Image}       \\ \hline
\multirow{3}{*}{Network}                                                      & \multirow{3}{*}{$p$}        & \multicolumn{4}{c||}{Certified Bounds}                                                     &  diff            & \multicolumn{2}{c||}{Uncertified}               & \multicolumn{3}{c||}{Certified Bounds}                   & Speedup        \\ \cline{3-6}\cline{8-12}
                                                                              &                             & \multicolumn{2}{c|}{Our bounds} &     LP                  &  Op-norm                       &  ours          & CLEVER                      & Attacks           & \multicolumn{2}{c|}{Our bounds}  &    LP                 & ours         \\ \cline{3-4}\cline{10-11}
                                                                              &                             &     Fast-Lin &   Fast-Lip       &   (Baseline)     &  \cite{szegedy2013intriguing}  &  vs. LP            & \cite{weng2017evaluating}   & CW/EAD            &     Fast-Lin      &     Fast-Lip & (Baseline)   & vs. LP           \\ \hline

\multirow{3}{*}{\begin{tabular}[c]{@{}c@{}}MNIST\\  $2 \times [1024]$\end{tabular}} &    $\infty$           & \bf 0.03083 & 0.02512     & \bf 0.03386 & 0.00263 & -8.9\%  & 0.0708 & 0.1291  & \bf 156 ms & 219 ms     & \bf 20.8 s & 133X  \\
                                                                                    &    $2$                & \bf 0.63299 & 0.59033     & \bf 0.75164 & 0.40201 & -15.8\% & 1.2841 & 1.8779  & \bf 128 ms & 234 ms     & \bf 195 s  & 1523X \\
                                                                                    &    $1$                & 3.88241     & \bf 5.10000 & \bf 4.47158 & 0.35957 & +14.1\% & 7.4186 & 17.259  & 139 ms     & \bf 1.40 s & \bf 48.1 s & 34X   \\ \hline
\multirow{3}{*}{\begin{tabular}[c]{@{}c@{}}MNIST\\  $3 \times [1024]$\end{tabular}} &    $\infty$           & \bf 0.02216 & 0.01236     & \bf 0.02428 & 0.00007 & -8.7\%  & 0.0717 & 0.1484  & \bf 1.12 s & 1.11 s     & \bf 52.7 s & 47X   \\
                                                                                    &    $2$                & \bf 0.43892 & 0.26980     & \bf 0.49715 & 0.10233 & -11.7\% & 1.2441 & 2.0387  & \bf 906 ms & 914 ms     & \bf 714 s  & 788X  \\
                                                                                    &    $1$                & \bf 2.59898 & 2.25950     & \bf 2.91766 & 0.01133 & -10.9\% & 7.2177 & 17.796  & \bf 863 ms & 3.84 s     & \bf 109 s  & 126X  \\ \hline
\multirow{3}{*}{\begin{tabular}[c]{@{}c@{}}MNIST\\  $4 \times [1024]$\end{tabular}} &    $\infty$           & \bf 0.00823 & 0.00264     & -           & 0.00001 & -       & 0.0793 & 0.1303  & \bf 2.25 s & 3.08 s     & -          & -     \\
                                                                                    &    $2$                & \bf 0.18891 & 0.06487     & -           & 0.17734 & -       & 1.4231 & 1.8921  & \bf 2.37 s & 2.72 s     & -          & -     \\
                                                                                    &    $1$                & \bf 1.57649 & 0.72800     & -           & 0.00183 & -       & 8.9764 & 17.200  & \bf 2.42 s & 2.91 s     & -          & -     \\ \hline
\multirow{3}{*}{\begin{tabular}[c]{@{}c@{}}CIFAR\\  $5 \times [2048]$\end{tabular}} &    $\infty$           & \bf 0.00170 & 0.00030     & -           & 0.00000 & -       & 0.0147 & 0.02351 & \bf 26.2 s & 78.1 s     & -          & -     \\
                                                                                    &    $2$                & \bf 0.07654 & 0.01417     & -           & 0.00333 & -       & 0.6399 & 0.9497  & \bf 36.8 s & 49.4 s     & -          & -     \\
                                                                                    &    $1$                & \bf 1.18928 & 0.31984     & -           & 0.00000 & -       & 9.7145 & 21.643  & \bf 37.5 s & 53.6 s     & -          & -     \\ \hline
\multirow{3}{*}{\begin{tabular}[c]{@{}c@{}}CIFAR\\  $6 \times [2048]$\end{tabular}} &    $\infty$           & \bf 0.00090 & 0.00007     & -           & 0.00000 & -       & 0.0131 & 0.01866 & \bf 37.0 s & 119 s      & -          & -     \\
                                                                                    &    $2$                & \bf 0.04129 & 0.00331     & -           & 0.01079 & -       & 0.5860 & 0.7635  & \bf 60.2 s & 95.6 s     & -          & -     \\
                                                                                    &    $1$                & \bf 0.72178 & 0.08212     & -           & 0.00000 & -       & 8.2507 & 17.160  & \bf 61.4 s & 88.2 s     & -          & -     \\ \hline
\multirow{3}{*}{\begin{tabular}[c]{@{}c@{}}CIFAR\\  $7 \times [1024]$\end{tabular}} &    $\infty$           & \bf 0.00134 & 0.00008     & -           & 0.00000 & -       & 0.0112 & 0.0218  & \bf 10.6 s & 29.2 s     & -          & -     \\
                                                                                    &    $2$                & \bf 0.05938 & 0.00407     & -           & 0.00029 & -       & 0.5145 & 0.9730  & \bf 16.9 s & 27.3 s     & -          & -     \\
                                                                                    &    $1$                & \bf 0.86467 & 0.09239     & -           & 0.00000 & -       & 8.630  & 22.180  & \bf 17.6 s & 26.7 s     & -          & -     \\ \hline
 \end{tabular}
}
\caption{Larger networks. ``\textbf{-}'' indicates the corresponding method is computationally infeasible for that network.}
\label{tb:largenetwork_main}
\end{subtable}
\vspace{-2em}
\end{table*}

\begin{table*}[h!]
\centering
\caption{Comparison of the lower bounds for $\ell_\infty$ distortion found by our  algorithms on models with defensive distillation (DD)~\cite{papernot2016distillation} with temperature = 100 and adversarial training~\cite{madry2017towards} with $\epsilon = 0.3$ for three targeted attack classes.}
\label{tb:distill}
\vspace{0.3em}
\ifdef{\useicmlformat}
{
}{
}
\resizebox{\linewidth}{!}{
\begin{tabular}{|c|c|ccc|ccc|ccc|}
\cline{1-11}
                             & \multicolumn{1}{c|}{}       & \multicolumn{3}{c|}{runner-up target}        & \multicolumn{3}{c|}{random target}           & \multicolumn{3}{c|}{least-likely target}                     \\ \hline
\multicolumn{1}{|c|}{Network} & \multicolumn{1}{c|}{Method} & Undefended &    DD    & Adv. Training & Undefended &    DD    & Adv. Training & Undefended &    DD    & Adv. Training                \\ \hline

\multirow{2}{*}{\makecell{MNIST \\ 3*[1024]}} & Fast-Lin & 0.01826 & 0.02724 & \bf 0.14730 & 0.02211 & 0.03827 & \bf 0.17275 & 0.02427 & 0.04967 & \bf 0.20136 \\
                                              & Fast-Lip & 0.00965 & 0.01803 & 0.09687     & 0.01217 & 0.02493 & 0.11618     & 0.01377 & 0.03207 & 0.13858     \\ \hline
\multirow{2}{*}{\makecell{MNIST \\ 4*[1024]}} & Fast-Lin & 0.00715 & 0.01561 & \bf 0.09579 & 0.00822 & 0.02045 & \bf 0.11209 & 0.00898 & 0.02368 & \bf 0.12901 \\
                                              & Fast-Lip & 0.00087 & 0.00585 & 0.04133     & 0.00145 & 0.00777 & 0.05048     & 0.00183 & 0.00903 & 0.06015     \\ \hline

\end{tabular}}
\end{table*}

\section{Experiments}\label{sec:exp}

In this section, we perform extensive experiments to evaluate the performance of our proposed two lower-bound based robustness certificates on networks with different sizes and with different defending techniques during training process. Specifically, we compare our proposed bounds\footnote{\url{https://github.com/huanzhang12/CertifiedReLURobustness}} (\fastlin, \fastlip) with Linear Programming (LP) based methods (\lp, \lpfull), formal verification methods (\reluplex), lower bound by global Lipschitz constant (\opnorm), estimated lower bounds (\clever) and attack algorithms (\textbf{Attacks}) for toy networks (2-3 layers with 20 neurons in each layer) and large networks (2-7 layers with 1024 or 2048 neurons in each layer) in Table~\ref{tb:smallnetwork_and_large}. The evaluation on the effects of defending techniques is presented in Table~\ref{tb:distill}. All bound numbers are the average of 100 random test images with random attack targets, and running time (per image) for all methods is measured on a single CPU core.
We include detailed setup of experiments, descriptions of each method, additional experiments and discussions in Appendix~\ref{app:exp} (See Tables~\ref{tb:smallnetwork} and \ref{tb:largenetwork_app}).
The results suggest that our proposed robustness certificates are of high qualities and are computationally efficient even in large networks up to 7 layers or more than 10,000 neurons. In particular, we show that:
\begin{itemize}[nosep,wide,labelindent=0pt,labelwidth=*,align=left]

\item Our certified lower bounds (\fastlin, \fastlip) are close to (gap is only 2-3X) the exact minimum distortion computed by \reluplex for small networks (\reluplex is only feasible for networks with less 100 neurons for MNIST), but our algorithm is more than 10,000 times faster than \reluplex. See Table~\ref{tb:smallnetwork_main} and Table~\ref{tb:smallnetwork}.

\item Our certified lower bounds (\fastlin, \fastlip) give similar quality (the gap is within 35\%, and usually around 10\%; sometimes our bounds are even better) compared with the LP-based methods (\lp, \lpfull); however, our algorithm is 33 - 14,000 times faster. The LP-based methods are infeasible for networks with more than 4,000 neurons. See Table~\ref{tb:largenetwork_main} and Table \ref{tb:largenetwork_app}. 

\item When the network goes larger and deeper, our proposed methods can still give non-trivial lower bounds comparing to the upper bounds founded by attack algorithms on large networks. See Table~\ref{tb:largenetwork_main} and Table \ref{tb:largenetwork_app}.

\item For defended networks, especially for adversarial training~\cite{madry2017towards}, our methods give significantly larger bounds, validating the effectiveness of this defending method. Our algorithms can thus be used for evaluating defending techniques. See Table~\ref{tb:distill}.
\end{itemize}

\ifdef{\arxivversion}{

\begin{table*}[htbp]
\centering
\caption{Comparison of our proposed certified lower bounds \fastlin and \fastlip, \lp and \lpfull, the estimated lower bounds by \clever, the exact minimum distortion by \reluplex, and the upper bounds by \textbf{Attack} algorithms (CW $\ell_\infty$ for $p = \infty$, CW $\ell_2$ for $p = 2$, and EAD for $p = 1$) on 2, 3 layers toy MNIST networks with \textit{only 20 neurons per layer}. Differences of lower bounds and speedup are measured on the two corresponding \textbf{bold} numbers in each row, representing the best answer from our proposed algorithms and LP based approaches. \reluplex is designed to verify $\ell_\infty$ robustness so we omit results for $\ell_2$ and $\ell_1$. Note that \lpfull and \reluplex \textit{are very slow} and cannot scale to any practical networks, and the purpose of this table is to show how close our fast bounds are compared to the true minimum distortion provided by \textbf{Reluplex} and the bounds that are slightly tighter but very expensive (e.g. \lpfull).}
\label{tb:smallnetwork}

\begin{subtable}[t]{\textwidth}

\raggedright
\scalebox{0.76}{
\begin{tabular}{|c|c|c|
>{\centering\arraybackslash}p{9ex}
>{\centering\arraybackslash}p{9ex}|
>{\centering\arraybackslash}p{9ex}
>{\centering\arraybackslash}p{9ex}
||c||c||cc|}
\hline
\multicolumn{3}{|c|}{Toy Networks}                                                                                                   & \multicolumn{8}{c|}{Average Magnitude of Distortions on 100 Images}                                                                        \\ \hline
\multirow{3}{*}{Network}                                                       & \multirow{3}{*}{$p$}   & \multirow{3}{*}{Target}  & \multicolumn{4}{c||}{Certified Bounds}                                      & difference       & Exact                   & \multicolumn{2}{c|}{Uncertified}                    \\ \cline{4-7}\cline{9-11}
                                                                               &                             &                          & \multicolumn{2}{c|}{Our bounds} & \multicolumn{2}{c||}{Our baselines} & ours vs.        & Reluplex                & CLEVER                    &  Attacks                \\ \cline{4-7}
                                                                               &                             &                          & Fast-Lin     & Fast-Lip         & LP         & LP-Full                      & LP(-Full)         & \cite{katz2017reluplex} & \cite{weng2017evaluating} & CW/EAD                  \\ \hline

\multirow{9}{*}{\begin{tabular}[c]{@{}c@{}}MNIST\\ $2\times[20]$\end{tabular}} & \multirow{3}{*}{$\infty$} & runner-up & \bf 0.0191 & 0.0167     & \bf 0.0197 & 0.0197     &  -3.0\% & 0.04145  & 0.0235 & 0.04384 \\
                                                                               &                           & rand      & \bf 0.0309 & 0.0270     & \bf 0.0319 & 0.0319     &  -3.2\% & 0.07765  & 0.0428 & 0.08060 \\
                                                                               &                           & least     & \bf 0.0448 & 0.0398     & \bf 0.0462 & 0.0462     &  -3.1\% & 0.11711  & 0.0662 & 0.1224 \\ \cline{2-11}
                                                                               & \multirow{3}{*}{$2$}      & runner-up & \bf 0.3879 & 0.3677     & 0.4811     & \bf 0.5637 & -31.2\% & -        & 0.4615 & 0.64669 \\
                                                                               &                           & rand      & \bf 0.6278 & 0.6057     & 0.7560     & \bf 0.9182 & -31.6\% & -        & 0.8426 & 1.19630 \\
                                                                               &                           & least     & \bf 0.9105 & 0.8946     & 1.0997     & \bf 1.3421 & -32.2\% & -        & 1.315  & 1.88830 \\ \cline{2-11}
                                                                               & \multirow{3}{*}{$1$}      & runner-up & 2.3798     & \bf 2.8086 & 2.5932     & \bf 2.8171 &  -0.3\% & -        & 3.168  & 5.38380 \\
                                                                               &                           & rand      & 3.9297     & \bf 4.8561 & 4.2681     & \bf 4.6822 &  +3.7\% & -        & 5.858  & 11.4760 \\
                                                                               &                           & least     & 5.7298     & \bf 7.3879 & 6.2062     & \bf 6.8358 &  +8.1\% & -        & 9.250  & 19.5960 \\ \hline
\multirow{9}{*}{\begin{tabular}[c]{@{}c@{}}MNIST\\ $3\times[20]$\end{tabular}} & \multirow{3}{*}{$\infty$} & runner-up & \bf 0.0158 & 0.0094     & 0.0168     & \bf 0.0171 &  -7.2\% & 0.04234  & 0.0223 & 0.04786 \\
                                                                               &                           & rand      & \bf 0.0229 & 0.0142     & 0.0241     & \bf 0.0246 &  -6.9\% & 0.06824  & 0.0385 & 0.08114 \\
                                                                               &                           & least     & \bf 0.0304 & 0.0196     & 0.0319     & \bf 0.0326 &  -6.9\% & 0.10449  & 0.0566 & 0.11213 \\ \cline{2-11}
                                                                               & \multirow{3}{*}{$2$}      & runner-up & \bf 0.3228 & 0.2142     & 0.3809     & \bf 0.4901 & -34.1\% & -        & 0.4231 & 0.74117 \\
                                                                               &                           & rand      & \bf 0.4652 & 0.3273     & 0.5345     & \bf 0.7096 & -34.4\% & -        & 0.7331 & 1.22570 \\
                                                                               &                           & least     & \bf 0.6179 & 0.4454     & 0.7083     & \bf 0.9424 & -34.4\% & -        & 1.100  & 1.71090 \\ \cline{2-11}
                                                                               & \multirow{3}{*}{$1$}      & runner-up & \bf 2.0189 & 1.8819     & 2.2127     & \bf 2.5010 & -19.3\% & -        & 2.950  & 6.13750 \\
                                                                               &                           & rand      & \bf 2.8550 & 2.8144     & 3.1000     & \bf 3.5740 & -20.1\% & -        & 4.990  & 10.7220 \\
                                                                               &                           & least     & 3.7504     & \bf 3.8043 & 4.0434     & \bf 4.6967 & -19.0\% & -        & 7.131  & 15.6850 \\ \hline
\end{tabular}
}
\caption{Comparison of bounds}
\end{subtable}
\begin{subtable}[t]{\textwidth}
\centering
\raggedright
\scalebox{0.76}{
\begin{tabular}{|c|c|c|
>{\centering\arraybackslash}p{9ex}
>{\centering\arraybackslash}p{9ex}|
>{\centering\arraybackslash}p{9ex}
>{\centering\arraybackslash}p{9ex}
||c||c|}
\hline
\multicolumn{3}{|c|}{Toy Networks}                                                                                                     & \multicolumn{6}{c|}{Average Running Time per Image}                                                             \\ \hline
\multirow{3}{*}{Network}                                                       & \multirow{2}{*}{$p$}        & \multirow{3}{*}{Target} & \multicolumn{4}{c||}{Certified Bounds}                                      &       Exact     & Speedup         \\ \cline{4-8}
                                                                               &                             &                         & \multicolumn{2}{c|}{Our bounds} & \multicolumn{2}{c||}{Our baselines} &    Reluplex     & ours vs.       \\ \cline{4-7}

                                                                               &                             &           & Fast-Lin    & Fast-Lip    & LP         & LP-Full    & \cite{katz2017reluplex} &    LP-(full)     \\ \hline
\multirow{9}{*}{\begin{tabular}[c]{@{}c@{}}MNIST\\ $2\times[20]$\end{tabular}} & \multirow{3}{*}{$\infty$}   & runner-up & \bf 3.09 ms & 3.49 ms     & \bf 217 ms & 1.74 s     & 134 s    & 70X    \\
                                                                               &                             & rand      & \bf 3.25 ms & 5.53 ms     & \bf 234 ms & 1.93 s     & 38 s     & 72X    \\
                                                                               &                             & least     & \bf 3.37 ms & 8.90 ms     & \bf 250 ms & 1.97 s     & 360 s    & 74X    \\ \cline{2-9}
                                                                               & \multirow{3}{*}{$2$}        & runner-up & \bf 3.00 ms & 3.76 ms     & 1.10 s     & \bf 20.6 s & -        & 6864X  \\
                                                                               &                             & rand      & \bf 3.37 ms & 6.16 ms     & 1.20 s     & \bf 23.1 s & -        & 6838X  \\
                                                                               &                             & least     & \bf 3.29 ms & 9.89 ms     & 1.27 s     & \bf 26.4 s & -        & 8021X  \\ \cline{2-9}
                                                                               & \multirow{3}{*}{$1$}        & runner-up & 2.85 ms     & \bf 39.2 ms & 1.27 s     & \bf 16.1 s & -        & 412X   \\
                                                                               &                             & rand      & 3.32 ms     & \bf 54.8 ms & 1.59 s     & \bf 17.3 s & -        & 316X   \\
                                                                               &                             & least     & 3.46 ms     & \bf 68.1 ms & 1.74 s     & \bf 17.7 s & -        & 260X   \\ \hline
\multirow{9}{*}{\begin{tabular}[c]{@{}c@{}}MNIST\\ $3\times[20]$\end{tabular}} & \multirow{3}{*}{$\infty$}   & runner-up & \bf 5.58 ms & 3.64 ms     & 253 ms     & \bf 6.12 s & 4.7 hrs  & 1096X  \\
                                                                               &                             & rand      & \bf 6.12 ms & 5.23 ms     & 291 ms     & \bf 7.16 s & 11.6 hrs & 1171X  \\
                                                                               &                             & least     & \bf 6.62 ms & 7.06 ms     & 307 ms     & \bf 7.30 s & 12.6 hrs & 1102X  \\ \cline{2-9}
                                                                               & \multirow{3}{*}{$2$}        & runner-up & \bf 5.35 ms & 3.95 ms     & 1.22 s     & \bf 57.5 s & -        & 10742X \\
                                                                               &                             & rand      & \bf 5.86 ms & 5.81 ms     & 1.27 s     & \bf 66.3 s & -        & 11325X \\
                                                                               &                             & least     & \bf 5.94 ms & 7.55 ms     & 1.34 s     & \bf 77.3 s & -        & 13016X \\ \cline{2-9}
                                                                               & \multirow{3}{*}{$1$}        & runner-up & \bf 5.45 ms & 39.6 ms     & 1.27 s     & \bf 75.0 s & -        & 13763X \\
                                                                               &                             & rand      & \bf 5.56 ms & 52.9 ms     & 1.47 s     & \bf 82.0 s & -        & 14742X \\
                                                                               &                             & least     & 6.07 ms     & \bf 65.9 ms & 1.68 s     & \bf 85.9 s & -        & 1304X  \\ \hline
\end{tabular}
}
\caption{Comparison of time}
\end{subtable}
\end{table*}

\begin{table*}[htbp]
\centering
\caption{Comparison of our proposed certified lower bounds \fastlin and \fastlip with other lower bounds (\lp, \opnorm, \clever) and upper bounds (\textbf{Attack} algorithms: CW for $p = 2, \infty$, EAD for $p = 1$) on networks with 2-7 layers, where each layer has 1024 or 2048 nodes. Differences of lower bounds and speedup are measured on the two corresponding \textbf{bold} numbers in each row. Note that \lpfull and \reluplex are \textit{computationally infeasible} for all the networks reported here, and ``\textbf{-}'' indicates the method is computationally infeasible for that network. For \opnorm, computation time for each image is negligible as the operator norms can be pre-computed.}
\ifdef{\arxivversion}{
\label{tb:largenetwork}
}{
\label{tb:largenetwork_app}
}
\begin{adjustbox}{max width=\textwidth}
\begin{tabular}{|c|c|c|cc|cc||c||cc||cc|c||c|}
\hline

\multicolumn{3}{|c|}{Large Networks}                                                                                                   & \multicolumn{7}{c||}{Average Magnitude of Distortion on 100 Images}                                                                                                             & \multicolumn{4}{c|}{Average Running Time per Image}       \\ \hline
\multirow{3}{*}{Network}                                                      & \multirow{3}{*}{$p$}        & \multirow{3}{*}{Target}  & \multicolumn{4}{c||}{Certified Bounds}                                                     &  diff            & \multicolumn{2}{c||}{Uncertified}               & \multicolumn{3}{c||}{Certified Bounds}                   & Speedup        \\ \cline{4-7}\cline{9-13}
                                                                              &                             &                          & \multicolumn{2}{c|}{Our bounds} &     LP                  &  Op-norm                       &  ours          & CLEVER                      & Attacks           & \multicolumn{2}{c|}{Our bounds}  &    LP                 & ours         \\ \cline{4-5}\cline{11-12}
                                                                              &                             &                          &     Fast-Lin &   Fast-Lip       &   (Baseline)     &  \cite{szegedy2013intriguing}  &  vs. LP            & \cite{weng2017evaluating}   & CW/EAD            &     Fast-Lin      &     Fast-Lip & (Baseline)   & vs. LP           \\ \hline

\multirow{9}{*}{\begin{tabular}[c]{@{}c@{}}MNIST\\  $2 \times [1024]$\end{tabular}} & \multirow{3}{*}{$\infty$} & runner-up                   & \bf 0.02256 & 0.01802     & \bf 0.02493 & 0.00159 & -9.5\%  & 0.0447 & 0.0856 & \bf 127 ms & 167 ms     & \bf 19.3 s & 151X  \\
                                                                                    &                           & rand                        & \bf 0.03083 & 0.02512     & \bf 0.03386 & 0.00263 & -8.9\%  & 0.0708 & 0.1291 & \bf 156 ms & 219 ms     & \bf 20.8 s & 133X  \\
                                                                                    &                           & least                       & \bf 0.03854 & 0.03128     & \bf 0.04281 & 0.00369 & -10.0\% & 0.0925 & 0.1731
                                                                                    & \bf 129 ms & 377 ms     & \bf 22.2 s & 172X  \\ \cline{2-14}
                                                                                    & \multirow{3}{*}{$2$}      & runner-up                   & \bf 0.46034 & 0.42027     & \bf 0.55591 & 0.24327 & -17.2\% & 0.8104 & 1.1874 & \bf 127 ms & 196 ms     & \bf 419 s  & 3305X \\
                                                                                    &                           & rand                        & \bf 0.63299 & 0.59033     & \bf 0.75164 & 0.40201 & -15.8\% & 1.2841 & 1.8779 & \bf 128 ms & 234 ms     & \bf 195 s  & 1523X \\
                                                                                    &                           & least                       & \bf 0.79263 & 0.73133     & \bf 0.94774 & 0.56509 & -16.4\% & 1.6716 & 2.4556 & \bf 163 ms & 305 ms     & \bf 156 s  & 956X  \\ \cline{2-14}
                                                                                    & \multirow{3}{*}{$1$}      & runner-up                   & 2.78786     & \bf 3.46500 & \bf 3.21866 & 0.20601 & +7.7\%  & 4.5970 & 9.5295 & 117 ms     & \bf 1.17 s & \bf 38.9 s & 33X   \\
                                                                                    &                           & rand                        & 3.88241     & \bf 5.10000 & \bf 4.47158 & 0.35957 & +14.1\% & 7.4186 & 17.259 & 139 ms     & \bf 1.40 s & \bf 48.1 s & 34X   \\
                                                                                    &                           & least                       & 4.90809     & \bf 6.36600 & \bf 5.74140 & 0.48774 & +10.9\% & 9.9847 & 23.933 & 151 ms     & \bf 1.62 s & \bf 53.1 s & 33X   \\ \hline
\multirow{9}{*}{\begin{tabular}[c]{@{}c@{}}MNIST\\  $3 \times [1024]$\end{tabular}} & \multirow{3}{*}{$\infty$} & runner-up                   & \bf 0.01830 & 0.01021     & \bf 0.02013 & 0.00004 & -9.1\%  & 0.0509 & 0.1037 & \bf 1.20 s & 1.81 s     & \bf 50.4 s & 42X   \\
                                                                                    &                           & rand                        & \bf 0.02216 & 0.01236     & \bf 0.02428 & 0.00007 & -8.7\%  & 0.0717 & 0.1484 & \bf 1.12 s & 1.11 s     & \bf 52.7 s & 47X   \\
                                                                                    &                           & least                       & \bf 0.02432 & 0.01384     & \bf 0.02665 & 0.00009 & -8.7\%  & 0.0825 & 0.1777 & \bf 1.02 s & 924 ms     & \bf 54.3 s & 53X   \\ \cline{2-14}
                                                                                    & \multirow{3}{*}{$2$}      & runner-up                   & \bf 0.35867 & 0.22120     & \bf 0.41040 & 0.06626 & -12.6\% & 0.8402 & 1.3513 & \bf 898 ms & 1.59 s     & \bf 438 s  & 487X  \\
                                                                                    &                           & rand                        & \bf 0.43892 & 0.26980     & \bf 0.49715 & 0.10233 & -11.7\% & 1.2441 & 2.0387 & \bf 906 ms & 914 ms     & \bf 714 s  & 788X  \\
                                                                                    &                           & least                       & \bf 0.48361 & 0.30147     & \bf 0.54689 & 0.13256 & -11.6\% & 1.4401 & 2.4916 & \bf 925 ms & 1.01 s     & \bf 858 s  & 928X  \\ \cline{2-14}
                                                                                    & \multirow{3}{*}{$1$}      & runner-up                   & \bf 2.08887 & 1.80150     & \bf 2.36642 & 0.00734 & -11.7\% & 4.8370 & 10.159 & \bf 836 ms & 3.16 s     & \bf 91.1 s & 109X  \\
                                                                                    &                           & rand                        & \bf 2.59898 & 2.25950     & \bf 2.91766 & 0.01133 & -10.9\% & 7.2177 & 17.796 & \bf 863 ms & 3.84 s     & \bf 109 s  & 126X  \\
                                                                                    &                           & least                       & \bf 2.87560 & 2.50000     & \bf 3.22548 & 0.01499 & -10.8\% & 8.3523 & 22.395 & \bf 900 ms & 4.20 s     & \bf 122 s  & 136X  \\ \hline
\multirow{9}{*}{\begin{tabular}[c]{@{}c@{}}MNIST\\  $4 \times [1024]$\end{tabular}} & \multirow{3}{*}{$\infty$} & runner-up                   & \bf 0.00715 & 0.00219     & -           & 0.00001 & -       & 0.0485 & 0.08635 & \bf 1.90 s & 4.58 s     & -          & -     \\
                                                                                    &                           & rand                        & \bf 0.00823 & 0.00264     & -           & 0.00001 & -       & 0.0793 & 0.1303 & \bf 2.25 s & 3.08 s     & -          & -     \\
                                                                                    &                           & least                       & \bf 0.00899 & 0.00304     & -           & 0.00001 & -       & 0.1028 & 0.1680 & \bf 2.15 s & 3.02 s     & -          & -     \\ \cline{2-14}
                                                                                    & \multirow{3}{*}{$2$}      & runner-up                   & \bf 0.16338 & 0.05244     & -           & 0.11015 & -       & 0.8689 & 1.2422 & \bf 2.23 s & 3.50 s     & -          & -     \\
                                                                                    &                           & rand                        & \bf 0.18891 & 0.06487     & -           & 0.17734 & -       & 1.4231 & 1.8921 & \bf 2.37 s & 2.72 s     & -          & -     \\
                                                                                    &                           & least                       & \bf 0.20672 & 0.07440     & -           & 0.23710 & -       & 1.8864 & 2.4451 & \bf 2.56 s & 2.77 s     & -          & -     \\ \cline{2-14}
                                                                                    & \multirow{3}{*}{$1$}      & runner-up                   & \bf 1.33794 & 0.58480     & -           & 0.00114 & -       & 5.2685 & 10.079 & \bf 2.42 s & 2.71 s     & -          & -     \\
                                                                                    &                           & rand                        & \bf 1.57649 & 0.72800     & -           & 0.00183 & -       & 8.9764 & 17.200 & \bf 2.42 s & 2.91 s     & -          & -     \\
                                                                                    &                           & least                       & \bf 1.73874 & 0.82800     & -           & 0.00244 & -       & 11.867 & 23.910 & \bf 2.54 s & 3.54 s     & -          & -     \\ \hline
\multirow{9}{*}{\begin{tabular}[c]{@{}c@{}}CIFAR\\  $5 \times [2048]$\end{tabular}} & \multirow{3}{*}{$\infty$} & runner-up                   & \bf 0.00137 & 0.00020     & -           & 0.00000 & -       & 0.0062 & 0.00950 & \bf 24.2 s & 60.4 s     & -          & -     \\
                                                                                    &                           & rand                        & \bf 0.00170 & 0.00030     & -           & 0.00000 & -       & 0.0147 & 0.02351 & \bf 26.2 s & 78.1 s     & -          & -     \\
                                                                                    &                           & least                       & \bf 0.00188 & 0.00036     & -           & 0.00000 & -       & 0.0208 & 0.03416 & \bf 27.8 s & 79.0 s     & -          & -     \\ \cline{2-14}
                                                                                    & \multirow{3}{*}{$2$}      & runner-up                   & \bf 0.06122 & 0.00951     & -           & 0.00156 & -       & 0.2712 & 0.3778 & \bf 34.0 s & 60.7 s     & -          & -     \\
                                                                                    &                           & rand                        & \bf 0.07654 & 0.01417     & -           & 0.00333 & -       & 0.6399 & 0.9497 & \bf 36.8 s & 49.4 s     & -          & -     \\
                                                                                    &                           & least                       & \bf 0.08456 & 0.01778     & -           & 0.00489 & -       & 0.9169 & 1.4379 & \bf 37.4 s & 49.8 s     & -          & -     \\ \cline{2-14}
                                                                                    & \multirow{3}{*}{$1$}      & runner-up                   & \bf 0.93835 & 0.22632     & -           & 0.00000 & -       & 4.0755 & 7.6529 & \bf 36.5 s & 70.6 s     & -          & -     \\
                                                                                    &                           & rand                        & \bf 1.18928 & 0.31984     & -           & 0.00000 & -       & 9.7145 & 21.643 & \bf 37.5 s & 53.6 s     & -          & -     \\
                                                                                    &                           & least                       & \bf 1.31904 & 0.38887     & -           & 0.00001 & -       & 12.793 & 34.497 & \bf 38.3 s & 48.6 s     & -          & -     \\ \hline
\multirow{9}{*}{\begin{tabular}[c]{@{}c@{}}CIFAR\\  $6 \times [2048]$\end{tabular}} & \multirow{3}{*}{$\infty$} & runner-up                   & \bf 0.00075 & 0.00005     & -           & 0.00000 & -       & 0.0054 & 0.00770 & \bf 37.2 s & 106 s      & -          & -     \\
                                                                                    &                           & rand                        & \bf 0.00090 & 0.00007     & -           & 0.00000 & -       & 0.0131 & 0.01866 & \bf 37.0 s & 119 s      & -          & -     \\
                                                                                    &                           & least                       & \bf 0.00095 & 0.00008     & -           & 0.00000 & -       & 0.0199 & 0.02868 & \bf 37.2 s & 126 s      & -          & -     \\ \cline{2-14}
                                                                                    & \multirow{3}{*}{$2$}      & runner-up                   & \bf 0.03463 & 0.00228     & -           & 0.00476 & -       & 0.2394 & 0.2979 & \bf 56.1 s & 99.5 s     & -          & -     \\
                                                                                    &                           & rand                        & \bf 0.04129 & 0.00331     & -           & 0.01079 & -       & 0.5860 & 0.7635 & \bf 60.2 s & 95.6 s     & -          & -     \\
                                                                                    &                           & least                       & \bf 0.04387 & 0.00385     & -           & 0.01574 & -       & 0.8756 & 1.2111 & \bf 61.8 s & 88.6 s     & -          & -     \\ \cline{2-14}
                                                                                    & \multirow{3}{*}{$1$}      & runner-up                   & \bf 0.59638 & 0.05647     & -           & 0.00000 & -       & 3.3569 & 6.0112 & \bf 57.2 s & 108 s      & -          & -     \\
                                                                                    &                           & rand                        & \bf 0.72178 & 0.08212     & -           & 0.00000 & -       & 8.2507 & 17.160 & \bf 61.4 s & 88.2 s     & -          & -     \\
                                                                                    &                           & least                       & \bf 0.77179 & 0.09397     & -           & 0.00000 & -       & 12.603 & 28.958 & \bf 62.1 s & 65.1 s     & -          & -     \\ \hline
\multirow{9}{*}{\begin{tabular}[c]{@{}c@{}}CIFAR\\  $7 \times [1024]$\end{tabular}} & \multirow{3}{*}{$\infty$} & runner-up                   & \bf 0.00119 & 0.00006     & -           & 0.00000 & -       & 0.0062 & 0.0102 & \bf 10.5 s & 27.3 s     & -          & -     \\
                                                                                    &                           & rand                        & \bf 0.00134 & 0.00008     & -           & 0.00000 & -       & 0.0112 & 0.0218 & \bf 10.6 s & 29.2 s     & -          & -     \\
                                                                                    &                           & least                       & \bf 0.00141 & 0.00010     & -           & 0.00000 & -       & 0.0148 & 0.0333 & \bf 11.2 s & 30.9 s     & -          & -     \\ \cline{2-14}
                                                                                    & \multirow{3}{*}{$2$}      & runner-up                   & \bf 0.05279 & 0.00308     & -           & 0.00020 & -       & 0.2661 & 0.3943 & \bf 16.3 s & 28.2 s     & -          & -     \\
                                                                                    &                           & rand                        & \bf 0.05938 & 0.00407     & -           & 0.00029 & -       & 0.5145 & 0.9730 & \bf 16.9 s & 27.3 s     & -          & -     \\
                                                                                    &                           & least                       & \bf 0.06249 & 0.00474     & -           & 0.00038 & -       & 0.6253 & 1.3709 & \bf 17.4 s & 27.6 s     & -          & -     \\ \cline{2-14}
                                                                                    & \multirow{3}{*}{$1$}      & runner-up                   & \bf 0.76647 & 0.07028     & -           & 0.00000 & -       & 4.815  & 7.9987 & \bf 16.9 s & 27.8 s     & -          & -     \\
                                                                                    &                           & rand                        & \bf 0.86467 & 0.09239     & -           & 0.00000 & -       & 8.630  & 22.180 & \bf 17.6 s & 26.7 s     & -          & -     \\
                                                                                    &                           & least                       & \bf 0.91127 & 0.10639     & -           & 0.00000 & -       & 11.44  & 31.529 & \bf 17.5 s & 23.5 s     & -          & -     \\ \hline \hline
\multirow{3}{*}{\begin{tabular}[c]{@{}c@{}}MNIST\\  $3 \times [1024]$\end{tabular}} & $\infty$                  & \multirow{3}{*}{untargeted} & \bf 0.01808 & 0.01016     & \bf 0.01985 & 0.00004 & -8.9\%  & 0.0458 & 0.0993 & \bf 915 ms & 2.17 s     & \bf 227 s  & 248X  \\
                                                                                    & $2$                       &                             & \bf 0.35429 & 0.21833     & -           & 0.06541 & -       & 0.7413 & 1.1118 & \bf 950 ms & 2.02 s     & -          & -     \\
                                                                                    & $1$                       &                             & \bf 2.05645 & 1.78300     & \bf 2.32921 & 0.00679 & -11.7\% & 3.9661 & 9.0044 & \bf 829 ms & 4.41 s     & \bf 537 s  & 648X  \\ \hline
\multirow{3}{*}{\begin{tabular}[c]{@{}c@{}}CIFAR\\  $5 \times [2048]$\end{tabular}} & $\infty$                  & \multirow{3}{*}{untargeted} & \bf 0.00136 & 0.00020     & -           & 0.00000 & -       & 0.0056 & 0.00950 & \bf 24.1 s & 72.9 s     & -          & -     \\
                                                                                    & $2$                       &                             & \bf 0.06097 & 0.00932     & -           & 0.00053 & -       & 0.2426 & 0.3702 & \bf 34.2 s & 77.0 s     & -          & -     \\
                                                                                    & $1$                       &                             & \bf 0.93429 & 0.22535     & -           & 0.00000 & -       & 3.6704 & 7.3687 & \bf 35.6 s & 90.2 s     & -          & -     \\ \hline

 \end{tabular}
\end{adjustbox}
\end{table*}


}

\vspace{-1em}
\section{Conclusions}
In this paper we have considered the problem of verifying the robustness property of ReLU networks. By exploiting the special properties of ReLU networks, we have here presented two computational efficient methods \fastlin and \fastlip for this problem. Our algorithms are two orders of magnitude (or more) faster than LP-based methods, while obtaining solutions with similar quality; meanwhile, our bounds qualities are much better than the previously proposed operator-norm based methods. Additionally, our methods are efficient and easy to implement: we compute the bounds layer-by-layer, and the computation cost for each layer is similar to the cost of matrix products in forward propagation; moreover, we do not need to solve any integer programming, linear programming problems or their duals. Future work could extend our algorithm to handle the structure of convolutional layers and apply our algorithm to evaluate the robustness property of large DNNs such as ResNet on the ImageNet dataset.


\newpage
\section*{Acknowledgment} 
The authors sincerely thank Aviad Rubinstein for the suggestion of using set-cover to prove hardness. The authors sincerely thank Dana Moshkovitz for pointing out some references about the hardness result of set-cover. The authors would also like to thank Mika G\"{o}\"{o}s, Rasmus Kyng, Zico Kolter, Jelani Nelson, Eric Price, Milan Rubinstein, Jacob Steinhardt, Zhengyu Wang, Eric Wong and David P. Woodruff for useful discussions. Luca Daniel and Tsui-Wei Weng acknowledge the partial support of MIT-Skoltech program and MIT-IBM Watson AI Lab. Huan Zhang and Cho-Jui Hsieh acknowledge the support of NSF via IIS-1719097 and the computing resources provided by Google Cloud and NVIDIA.

\bibliography{reference}
\ifdef{\useicmlformat}{
\bibliographystyle{icml2018}
}{
\bibliographystyle{alpha}
}


\onecolumn
\newpage
\appendix
\counterwithin{table}{section}
\section{Hardness}\label{app:hardness}In this section we show that finding the minimum adversarial distortion with a certified approximation ratio is hard. We first introduce some basic definitions and theorems in Section~\ref{sec:hardness_definition}. We provide some backgrounds about in-approximability reduction in Section~\ref{sec:hardness_pcp}.  
Section~\ref{sec:hardness_warmup} gives a warmup proof for boolean case and then Section~\ref{sec:hardness_main_result} provides the proof of our main hardness result (for network with real inputs).

\subsection{Definitions}\label{sec:hardness_definition}
We provide some basic definitions and theorems in this section. First, we define the classic $\mathsf{3SAT}$ problem.
\begin{definition}[$\mathsf{3SAT}$ problem]\label{def:3sat}
Given $n$ variables and $m$ clauses in a conjunctive normal form $\mathsf{CNF}$ formula with the size of each clause at most $3$, the goal is to decide whether there exists an assignment to the $n$ Boolean variables to make the $\mathsf{CNF}$ formula to be satisfied.
\end{definition}

For the $\mathsf{3SAT}$ problem in Definition~\ref{def:3sat}, we introduce the Exponential Time Hypothesis (ETH),  which is a common concept in complexity field.
\begin{hypothesis}[Exponential Time Hypothesis ($\mathsf{ETH}$) \cite{ipz98}]
\label{hypo:eth}
There is a $\delta > 0$ such that the $\mathsf{3SAT}$ problem defined in Definition~\ref{def:3sat} cannot be solved in $O(2^{\delta n})$ time.
\end{hypothesis}

ETH had been used in many different problems, e.g. clustering \cite{abjk18,cmrr18}, low-rank approximation \cite{rsw16,swz17,swz17b,swz18}. For more details, we refer the readers to a survey \cite{lms13}.


Then we define another classical question in complexity theory, the $\mathsf{SET}$-$\mathsf{COVER}$ problem, which we will use in our proof. The exact $\mathsf{SET}$-$\mathsf{COVER}$ problem is one of Karp's 21 NP-complete problems known to be NP-complete in 1972: 

\begin{definition}[$\mathsf{SET}$-$\mathsf{COVER}$]
\label{def:set-cover}
The inputs are $U,S$; $U = \{ 1,2, \cdots, n \}$ is a universe, $P(U)$ is the power set of $U$, and $S = \{ S_1, \cdots, S_m \} \subseteq P(U)$ is a family of subsets, $\cup_{j \in [m]} S_j = U$. The goal is to give a {\rm YES/NO} answer to the follow decision problem: \\
\centerline{
Does there exist a set-cover of size $t$, i.e., $\exists C \subseteq [m]$, such that $\cup_{j \in C} S_j = U$ with $|C| = t$?}
\end{definition}

Alternatively, we can also state the problem as finding the minimum set cover size $t_0$, via a binary search on $t$ using the answers of the decision problem in~\ref{def:set-cover}. The Approximate $\mathsf{SET}$-$\mathsf{COVER}$ problem is defined as follows.

\begin{definition}[Approximate $\mathsf{SET}$-$\mathsf{COVER}$]
The inputs are $U,S$; $U = \{ 1,2, \cdots, n \}$ is a universe, $P(U)$ is the power set of $U$, and $S = \{ S_1, \cdots, S_m \} \subseteq P(U)$ is a family of subsets, $\cup_{j \in [m]} S_j = U$. The goal is to distinguish between the following two cases: \\
\rm{(\RN{1})}: There exists a small set-cover, i.e., $\exists C \subseteq [m]$, such that $\cup_{j \in C} S_j = U$ with $|C|\leq t$.\\
\rm{(\RN{2})}: Every set-cover is large, i.e., every $C \subseteq [m]$ with $\cup_{j \in C} S_j = U$ satisfies that $|C| > \alpha t$, where $\alpha >1$.
\end{definition}

An oracle that solves the Approximate $\mathsf{SET}$-$\mathsf{COVER}$ problem outputs an answer $t_U \geq t_0$ but $t_U \leq \alpha t_0$ using a binary search, where $t_U$ is an upper bound of $t_0$ with a guaranteed approximation ratio $\alpha$. For example, we can use a greedy (rather than exact) algorithm to solve the $\mathsf{SET}$-$\mathsf{COVER}$ problem, which cannot always find the smallest size of set cover $t_0$, but the size $t_U$ given by the greedy algorithm is at most $\alpha$ times as large as $t_0$.

In our setting, we want to investigate the hardness of finding the lower bound with a guaranteed approximation ration, but an approximate algorithm for $\mathsf{SET}$-$\mathsf{COVER}$ gives us an upper bound of $t_0$ instead of an lower bound of $t_0$. However, in the following proposition, we show that finding an lower bound with an approximation ratio of $\alpha$ is as hard as finding an upper bound with an approximation ratio of $\alpha$.

\begin{proposition}
\label{prop:lower-higher}
Finding a lower bound $t_L$ for the size of the minimal set-cover (that has size $t_0$) with an approximation ratio $\alpha$ is as hard as finding an upper bound $t_U$ with an approximation ratio $\alpha$.
\end{proposition}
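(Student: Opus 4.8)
The plan is to show the two tasks are interreducible by a trivial rescaling, so that an efficient algorithm for either one can be converted into an efficient algorithm for the other in constant arithmetic time. First I would restate both tasks as inequalities about the target value $t_0$ (the size of the minimum set cover). Producing a lower bound with approximation ratio $\alpha$ means outputting a number $t_L$ with $t_L \le t_0 \le \alpha t_L$, while producing an upper bound with ratio $\alpha$ means outputting $t_U$ with $t_0 \le t_U \le \alpha t_0$. The key observation is that these two constraint bands are related by multiplication by $\alpha$: the admissible upper band is exactly $\alpha$ times the admissible lower band.

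For the direction $\text{upper}\Rightarrow\text{lower}$, given any valid $t_U$ I would set $t_L = \lceil t_U/\alpha \rceil$. Then $t_U \le \alpha t_0$ gives $t_U/\alpha \le t_0$ and hence $t_L \le t_0$ (using that $t_0$ is an integer), while $\alpha t_L \ge t_U \ge t_0$ gives $t_0 \le \alpha t_L$; so $t_L$ is a lower bound with ratio $\alpha$. For the direction $\text{lower}\Rightarrow\text{upper}$, given any valid $t_L$ I would set $t_U = \lfloor \alpha t_L \rfloor$. Then $\alpha t_L \ge t_0$ together with integrality gives $t_U \ge t_0$, while $t_U \le \alpha t_L \le \alpha t_0$ (using $t_L \le t_0$) gives the ratio; so $t_U$ is an upper bound with ratio $\alpha$. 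Each transformation is a single division or multiplication followed by rounding, hence runs in polynomial (indeed constant) time.

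Since a polynomial-time algorithm for either the lower-bound or the upper-bound task yields one for the other, the two are equally hard, which is the claim. For the application I only need the direction $\text{lower}\Rightarrow\text{upper}$: combined with the known inapproximability of $\mathsf{SET}$-$\mathsf{COVER}$ (hardness of the upper-bound task for $\alpha = (1-o(1))\ln n$), it transfers the same hardness to the lower-bound task, which is the form required by the subsequent reduction to $\ell_1$ {\rm ReLU} robustness; stating the equivalence in both directions just makes the duality explicit.

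There is essentially no deep obstacle here. The only points requiring care are (i) checking that each of the two inequalities defining the target band is preserved exactly under scaling by $\alpha$, and (ii) handling the integrality of set-cover sizes, which the ceiling/floor rounding resolves without loosening the ratio. The entire content of the proposition is this clean multiplicative duality between dividing and multiplying by $\alpha$.
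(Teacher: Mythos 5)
Your proof is correct and takes essentially the same approach as the paper's: both directions follow from the multiplicative rescaling $t_U = \alpha t_L$ (resp. $t_L = t_U/\alpha$), which is exactly the paper's argument. Your only additions — the ceiling/floor rounding to exploit integrality of $t_0$, and spelling out the converse direction that the paper merely asserts — are harmless refinements of the same idea.
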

\begin{proof}
If we find a lower bound $t_L$ with $\frac{t_0}{\alpha} \leq t_L \leq t_0 $, by multiplying both sides by $\alpha$, we also find an upper bound $t_U = \alpha t_L$ which satisfies that $t_0 \leq t_U \leq \alpha t_0$. So finding an lower bound with an approximation ratio $\alpha$ is at least as hard as finding an upper bound with an approximation ratio $\alpha$. The converse is also true.
\end{proof}
$\mathsf{SET}$-$\mathsf{COVER}$ is a well-studied problem in the literature. Here we introduce a theorem from ~\cite{rs97,ams06,ds14} which implies the hardness of approximating $\mathsf{SET}$-$\mathsf{COVER}$.
\begin{theorem}[\cite{rs97,ams06,ds14}]\label{thm:approx_set_cover}
Unless $\mathsf{NP}=\mathsf{P}$, there is no polynomial time algorithm that gives a $(1-o(1))\ln n$-approximation to $\mathsf{SET}$-$\mathsf{COVER}$ problem with universe size $n$.
\end{theorem}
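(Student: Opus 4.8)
The plan is to prove inapproximability of $\mathsf{SET}$-$\mathsf{COVER}$ by a gap-preserving reduction from a hard-gap version of Label Cover (equivalently, a two-prover one-round game), which is itself $\mathsf{NP}$-hard by the PCP theorem together with parallel repetition. Concretely, I would start from an instance of $\mathsf{3SAT}$-$5$ (each variable appearing in exactly five clauses) and invoke the standard two-prover protocol: the verifier samples a clause together with one of its variables, sends the clause to the first prover and the variable to the second, and checks that their answers are consistent. The PCP theorem yields a constant soundness gap for this game, and after $\ell$ rounds of parallel repetition the soundness drops to roughly $2^{-\Omega(\ell)}$ while the question and answer alphabets grow only to a controllable power. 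This produces a Label Cover instance that is $\mathsf{NP}$-hard to distinguish between ``value $=1$'' and ``value $\le \delta$'' for any desired small constant $\delta>0$.

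The combinatorial heart of the reduction is a \emph{partition system} $B(m,L,k,d)$: a ground set of $m$ points together with $L$ partitions, each splitting the ground set into $k$ blocks, arranged so that any sub-collection of blocks drawn from fewer than $d$ distinct partitions fails to cover the ground set. A probabilistic construction supplies such a system with $d = (1-o(1))\ln m / \ln k$ and suitable $L$. I would then build the $\mathsf{SET}$-$\mathsf{COVER}$ instance by associating one copy of the partition system with each question of the first prover, indexing its $k$ blocks by that prover's possible answers, and letting the sets of the instance correspond to answer choices of the second prover; a set is the union, over all incident first-prover questions, of the blocks consistent with the given pair of answers. In the completeness case a single consistent answer per question yields a cover whose size is proportional to the number of questions; in the soundness case the partition-system property forces any cover restricted to a single copy to draw blocks from at least $d$ distinct partitions, and the low game value prevents a small consistent family of answers from covering all copies at once, so every cover is larger by a factor approaching $d \approx (1-o(1))\ln m$.

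Finally I would tune parameters so that the universe size is $n$ and the gap becomes $(1-o(1))\ln n$ exactly, while ensuring the entire reduction runs in time polynomial in the input. The main obstacle is twofold. First, pinning down the tight leading constant $(1-o(1))$ rather than merely some $c<1$ requires the partition system to have essentially optimal parameters and the soundness $\delta$ to be small enough that the $\ln k$ in the denominator of $d$ is negligible against $\ln m$; balancing these so their product equals $(1-o(1))\ln n$ is the delicate calculation. Second, and more seriously, keeping the reduction \emph{polynomial} time under the bare $\mathsf{P}\neq\mathsf{NP}$ assumption is the crux: driving $\delta$ toward zero via naive parallel repetition forces $\ell$ to grow and incurs a quasi-polynomial blowup, which is exactly why the earliest threshold argument only ruled out quasi-$\mathsf{NP}$ algorithms. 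The contribution of \cite{ds14} is precisely a soundness-amplification and reduction that avoids this blowup, so I would route the final step through their analytic repetition argument rather than through a black-box application of Raz's theorem.
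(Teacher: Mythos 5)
You are not deviating from the paper here so much as filling a deliberate hole in it: the paper never proves Theorem~\ref{thm:approx_set_cover} at all — it is imported as a black box from \cite{rs97,ams06,ds14} and consumed in the reduction of Theorem~\ref{thm:robust_net_R}. So your proposal is a reconstruction of the cited literature's proof, and at the architectural level it is the right one: a Feige-style reduction composing low-soundness projection games (Label Cover, obtained from a two-prover protocol for $\mathsf{3SAT}$ via the PCP theorem) with partition systems. Most importantly, you correctly identify the actual crux: driving the game soundness $\delta$ to zero by naive parallel repetition blows the instance up to size $n^{O(\ell)}$, which is why the classical threshold result only excluded quasi-polynomial algorithms, and why the tight $\mathsf{NP}$-hardness had to wait for the soundness amplification of \cite{ds14} within the framework of \cite{m12,moshkovitz2012projection}. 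Recognizing that this, and not the combinatorics, is the bottleneck is the most valuable part of your sketch, and it is accurate.

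Two concrete defects in the combinatorial core would, however, need repair. First, the partition-system property is misstated: as written, ``any sub-collection of blocks drawn from fewer than $d$ distinct partitions fails to cover'' is false, since the $k$ blocks of a \emph{single} partition already cover the ground set while touching only one partition. The correct property of $B(m,L,k,d)$ restricts to covers using no two blocks from the same partition: any such collection of size less than $d$ fails to cover. Second, your parameters are internally inconsistent. For that property a random construction gives $d=(1-o(1))\,k\ln m$ (a point survives $t$ blocks from pairwise distinct partitions with probability $(1-1/k)^t$), the honest cover pays $k$ blocks per copy — the blocks of the single partition selected by consistent answers — and the gap is therefore $d/k\approx(1-o(1))\ln m$. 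Your stated $d=(1-o(1))\ln m/\ln k$ cannot simultaneously support your later claim that the gap factor ``approaches $d\approx(1-o(1))\ln m$.'' Relatedly, hardness for ``value $\le\delta$ for any desired small \emph{constant} $\delta$'' does not suffice for the tight leading constant: both $k$ and the consistency losses force $\delta$ to be sub-constant (roughly $1/\mathrm{poly}\log n$), which is precisely the regime only \cite{ds14} supplies in polynomial time under $\mathsf{P}\neq\mathsf{NP}$. Your final paragraph effectively concedes this, so the fix is to demand sub-constant soundness from the start; with these corrections your sketch matches the known proof of the cited theorem.
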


We now formally define our neural network robustness verification problems.

\begin{definition}[$\mathsf{ROBUST}$-$\mathsf{NET}$($\R$)]\label{def:robust_net_real}
Given an $n$ hidden nodes {\rm ReLU} neural network $F(x) : \R^d \rightarrow \R$ where all weights are fixed, for a query input vector $x \in \R^d$ with $F(x) \leq 0$. The goal is to give a {\rm YES/NO} answer to the following decision problem:\\ 
\centerline{
Does there exist a $y$ with $\| x - y \|_1 \leq r$ such that $F(y) > 0$?}
\end{definition}
With an oracle of the decision problem available, we can figure out the smallest $r$ (defined as $r_0$) such that there exists a vector $y$ with $\| x - y \|_1 \leq r$ and $F(y) > 0$ via a binary search.

We also define a binary variant of the $\mathsf{ROBUST}$-$\mathsf{NET}$ problem, denoted as $\mathsf{ROBUST}$-$\mathsf{NET}$($\mathbb{B}$). The proof for this variant is more straightforward than the real case, and will help the reader understand the proof for the real case.

\begin{definition}[$\mathsf{ROBUST}$-$\mathsf{NET}$($\mathbb{B}$)]
Given an $n$ hidden nodes {\rm ReLU} neural network $F(x) : \{0,1\}^d \rightarrow \{0,1\}$ where weights are all fixed, for a query input vector $x \in \{0,1\}^d$ with $F(x) = 0$. The goal is to give a {\rm YES/NO} answer to the following decision problem:\\ 
\centerline{
Does there exist a $y$ with $\| x - y \|_1 \leq r$ such that $F(y) = 1$?}
\end{definition}

Then, we define the approximate version of our neural network robustness verification problems.
\begin{definition}[Approximate $\mathsf{ROBUST}$-$\mathsf{NET}$(${\mathbb{B}}$)]\label{def:approx_net_bool}
Given an $n$ hidden nodes {\rm ReLU} neural network $F(x) : \{0,1\}^d \rightarrow \{0,1\}$ where weights are all fixed, for a query input vector $x \in \{0,1\}^d$ with $F(x) = 0$. The goal is to distinguish the following two cases :\\
\rm{(\RN{1}):} There exists a point $y$ such that $\| x - y \|_1 \leq r$ and $F(y)=1$.\\ 
\rm{(\RN{2}):} For all $y$ satisfies $\| x - y \|_1 \leq \alpha r$, the $F(y) = 0$, where $\alpha >1$.
\end{definition}

\begin{definition}[Approximate $\mathsf{ROBUST}$-$\mathsf{NET}$($\R$)]\label{def:approx_net_real}
Given an $n$ hidden nodes {\rm ReLU} neural network $F(x) : \R^d \rightarrow \R$ where weights are all fixed, for a query input vector $x \in \R^d$ with $F(x) \leq 0$. The goal is to distinguish the following two cases :\\
\rm{(\RN{1}):} There exists a point $y$ such that $\| x - y \|_1 \leq r$ and $F(y) > 0$.\\ 
\rm{(\RN{2}):} For all $y$ satisfies $\| x - y \|_1 \leq \alpha r$, the $F(y) \leq 0$, where $\alpha >1$.
\end{definition}

As an analogy to $\mathsf{SET}$-$\mathsf{COVER}$, an oracle that solves the Approximate $\mathsf{ROBUST}$-$\mathsf{NET}$($\R$) problem can output an answer $r \geq r_0$ but $r \leq \alpha r_0$, which is an upper bound of $r_0$ with a guaranteed approximation ratio $\alpha$. With a similar statement as in Proposition~\ref{prop:lower-higher}, if we divide the answer $r$ by $\alpha$, then we get a lower bound $r^\prime = \frac{r}{\alpha}$ where $r^\prime \geq \frac{r_0}{\alpha}$, which is a lower bound with a guaranteed approximation ratio. If we can solve Approximate $\mathsf{ROBUST}$-$\mathsf{NET}$($\R$), we can get a lower bound with a guaranteed approximation ratio, which is the desired goal of our paper.

\subsection{Background of the PCP theorem}\label{sec:hardness_pcp}
The famous Probabilistically Checkable Proofs (PCP) theorem is the cornerstone of the theory of computational hardness of approximation, which investigates the inherent difficulty in designing efficient approximation algorithms for various optimization problems.\footnote{\url{https://en.wikipedia.org/wiki/PCP_theorem}} The formal definition can be stated as follows,
\begin{theorem}[\cite{as98,almss98}]\label{thm:pcp}
Given a $\mathsf{SAT}$ formula $\phi$ of size $n$ we can in time polynomial in $n$ construct a set of $M$ tests satisfying the following:\\
\rm{(\RN{1})} : Each test queries a constant number $d$ of bits from a proof, and based on the outcome of the queries it either acceptes or reject $\phi$.\\
\rm{(\RN{2})} : (Yes Case / Completeness) If $\phi$ is satisfiable, then there exists a proof so that all tests accept $\phi$.\\
\rm{(\RN{3})} : (No Case / Soundness) If $\phi$ is not satifiable, then no proof will cause more than $M/2$ tests to accept $\phi$.
\end{theorem}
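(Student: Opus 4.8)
The statement is the celebrated \textbf{PCP theorem}, invoked here as a black box from \cite{as98,almss98}; I would not expect to reprove it inside this paper, but I can sketch the structure of its known proof. In the verifier-centric phrasing used above, the goal is, given a $\mathsf{SAT}$ formula $\phi$ of size $n$, to build in $\poly(n)$ time a collection of $M = \poly(n)$ tests, each reading a constant number $d$ of bits of a purported proof, with perfect completeness (satisfiable $\phi \Rightarrow$ some proof passes all tests) and soundness $1/2$ (unsatisfiable $\phi \Rightarrow$ no proof passes more than $M/2$ tests). Equivalently this asserts $\mathsf{NP} \subseteq \mathsf{PCP}(O(\log n), O(1))$, since a random test uses only $O(\log M) = O(\log n)$ random bits to select which $d$ proof-bits to inspect. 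The plan I would follow is the arithmetization-plus-composition route of \cite{almss98}.

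First I would \textbf{arithmetize} $\phi$: embed the variables and clauses into a finite field $\mathbb{F}$ and rewrite satisfiability as a system of low-degree polynomial identities, so that a satisfying assignment corresponds to a low-degree polynomial $P$ over $\mathbb{F}$ obeying these identities. The proof string the verifier queries is then (a table of) $P$ together with auxiliary sum-check tables. The central tools are \textbf{low-degree testing}, \textbf{self-correction}, and the \textbf{sum-check} protocol: low-degree testing lets the verifier confirm, from a few queries, that the oracle table is close to a genuine low-degree polynomial; self-correction lets it recover individual evaluations of $P$ reliably even from a slightly corrupted table; and sum-check reduces verification of the global polynomial identities to checking a handful of evaluations. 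This already yields a \emph{weak} PCP --- a verifier making only polylogarithmically many queries, or a constant number of queries but into a proof written over a large alphabet --- with soundness bounded away from $1$.

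The hard part, and the conceptual heart of the theorem, is \textbf{proof composition}: turning ``few queries over a large alphabet'' into ``a constant number of bit queries.'' The idea is to compose an \emph{outer} verifier (few queries, large alphabet, as produced above) with an \emph{inner} verifier that, instead of reading the outer verifier's long symbols directly, checks an encoded proof of the claim ``these symbols would make the outer verifier accept'' using only $O(1)$ bit-queries. For this to preserve soundness one needs the outer verifier to be \emph{robust} and the inner object to be a PCP \emph{of proximity}, so that accepting forces the queried symbols to be close to a consistent assignment rather than merely satisfying a local check. Iterating the composition a constant number of times drives the query complexity down to the target constant $d$ while keeping the proof of polynomial size, and a final error-reduction step fixes the soundness at $1/2$. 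I expect the two genuinely technical bottlenecks to be (i) the soundness analysis of the low-degree test --- showing that passing implies closeness to a true low-degree polynomial with a quantitatively good rejection probability --- and (ii) making composition work without the soundness degrading, which is exactly what the robust / proximity formulation is engineered to control.

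Finally, I would note that a substantially different and more modern proof, due to Dinur, avoids heavy algebra: one starts from a constraint graph with only an inverse-polynomial soundness gap (immediate from $\mathsf{NP}$-hardness of exact constraint satisfaction) and applies $O(\log n)$ rounds of \textbf{gap amplification}, each round combining degree-regularization and expanderization, \emph{graph powering} (which multiplies the gap by a constant factor on an expander), and an alphabet-reduction step implemented by composition with a small assignment tester. Since each round multiplies the instance size by only a constant, after $O(\log n)$ rounds the gap is a constant while the size is still $\poly(n)$. Either route establishes the stated tests; in the context of this paper the theorem is used only as an ingredient for the $\mathsf{SET}$-$\mathsf{COVER}$ inapproximability of Theorem~\ref{thm:approx_set_cover}, and thence for the main hardness result.
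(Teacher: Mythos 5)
You correctly recognize that this statement is the PCP theorem, which the paper simply imports from \cite{as98,almss98} as a black box and never proves --- exactly the treatment you adopt. Your supplementary sketch (arithmetization, low-degree testing, sum-check, proof composition via robust PCPs / PCPs of proximity, and Dinur's gap-amplification alternative) is accurate as a description of the known proofs, but since the paper contains no proof of this theorem, your proposal matches the paper's approach and there is nothing further to compare.
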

Note that PCP kind of reduction is slightly different from NP reduction, for more examples (e.g. maximum edge biclique, sparsest cut) about how to use PCP theorem to prove inapproximibility results, we refer the readers to \cite{ams11}.

\subsection{Warm-up}\label{sec:hardness_warmup}
We state our hardness result for $\mathsf{ROBUST}$-$\mathsf{NET}$(${\mathbb{B}}$) (boolean inputs case) in this section. The reduction procedure for network with boolean inputs is more straightforward and easier to understand than the real inputs case. 
\begin{theorem}
Unless $\mathsf{NP}=\mathsf{P}$, there is no polynomial time algorithm to give a $(1-o(1))\ln n$-approximation to $\mathsf{ROBUST}$-$\mathsf{NET}$$(\mathbb{B})$ problem (Definition~\ref{def:approx_net_bool}) with $n$ hidden nodes.
\end{theorem}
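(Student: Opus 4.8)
The plan is to reduce Approximate $\mathsf{SET}$-$\mathsf{COVER}$ to Approximate $\mathsf{ROBUST}$-$\mathsf{NET}(\mathbb{B})$ so that the inapproximability in Theorem~\ref{thm:approx_set_cover} transfers directly. Given a $\mathsf{SET}$-$\mathsf{COVER}$ instance with universe $U=\{1,\dots,n\}$ and family $S=\{S_1,\dots,S_m\}$, I would build a ReLU network $F:\{0,1\}^m\to\{0,1\}$ whose input coordinate $y_j$ is the indicator of selecting set $S_j$. The first layer uses one neuron per universe element, $h_i=\mathsf{ReLU}(1-\sum_{j:\,i\in S_j}y_j)$, with weight $-1$ on each $y_j$ satisfying $i\in S_j$ and bias $1$. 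Since $y\in\{0,1\}^m$, the inner sum is a nonnegative integer, so $h_i=1$ exactly when element $i$ is uncovered and $h_i=0$ otherwise. The output neuron is $F(y)=\mathsf{ReLU}(1-\sum_{i=1}^n h_i)$, which equals $1$ iff every element is covered and $0$ otherwise; hence $F$ is $\{0,1\}$-valued and has $n$ hidden nodes (matching the universe size, which is crucial for preserving the $\ln n$ factor).

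Next I would take the query point $x=\mathbf{0}$. Then each $h_i=\mathsf{ReLU}(1)=1$, so $\sum_i h_i=n\ge 1$ and $F(x)=0$, meeting the precondition of Definition~\ref{def:approx_net_bool}. For any $y\in\{0,1\}^m$ we have $\|x-y\|_1=\sum_j y_j$, i.e.\ the number of selected sets, and by construction $F(y)=1$ iff the selected family $\{S_j:y_j=1\}$ covers $U$. Consequently the minimum radius $r_0$ at which some $y$ achieves $F(y)=1$ equals the minimum cover size $t_0$, establishing the core equivalence.

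I would then match the two gap cases with $r=t$. If a cover of size $\le t$ exists (Case (I) of Approximate $\mathsf{SET}$-$\mathsf{COVER}$), the corresponding $y$ satisfies $\|x-y\|_1\le t$ and $F(y)=1$, which is Case (I) of Definition~\ref{def:approx_net_bool}. Conversely, if every cover has size $>\alpha t$ (Case (II)), then any $y$ with $\|x-y\|_1\le\alpha t$ has support of size $\le\alpha t$, too small to be a cover, forcing $F(y)=0$; this is Case (II). Thus distinguishing the two cases of Approximate $\mathsf{ROBUST}$-$\mathsf{NET}(\mathbb{B})$ with ratio $\alpha$ is as hard as the same gap problem for $\mathsf{SET}$-$\mathsf{COVER}$. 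Taking $\alpha=(1-o(1))\ln n$ and invoking Theorem~\ref{thm:approx_set_cover} gives the claim, with Proposition~\ref{prop:lower-higher} confirming that the lower-bound phrasing of the approximation is equivalent.

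The main obstacle is verifying that the two-layer gadget is faithful on \emph{all} of $\{0,1\}^m$: one must confirm that the intermediate sums are genuinely integral so that $h_i\in\{0,1\}$ and $F\in\{0,1\}$ hold exactly, and that the hidden-node count stays $n$ rather than incurring a polynomial blowup that would degrade the ratio to $\ln(\mathrm{poly}(n))=(1+o(1))\ln n$ (which is harmless) versus something worse. I expect the boolean case to be clean precisely because integrality makes the coverage predicate exact; the genuinely harder step, deferred to the real-input proof (Definition~\ref{def:approx_net_real}), is preventing fractional perturbations over $\R^d$ from ``cheating'' the coverage test, which requires additional clamping gadgets.
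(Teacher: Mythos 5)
Your proposal is correct and is essentially the paper's own proof: the same reduction from $\mathsf{SET}$-$\mathsf{COVER}$ with one hidden neuron per universe element, query point $x=\mathbf{0}$, the observation that $\|x-y\|_1$ counts selected sets so that $F(y)=1$ iff the selection is a cover, and the identical completeness/soundness gap claims feeding into Theorem~\ref{thm:approx_set_cover} (with Proposition~\ref{prop:lower-higher} handling the lower-bound phrasing). Your network is even the same function as the paper's, up to folding the affine shift into the next layer's weights: the paper writes $v_i = 1-\max(0,1-\sum_{j\in T_i} u_j)$ and $w=\max(0,\sum_i v_i - n + 1)$, which equals your $\mathsf{ReLU}(1-\sum_i h_i)$ since $\sum_i v_i = n - \sum_i h_i$.
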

\begin{proof}
Consider a set-cover instance, let $S$ denote a set of sets $\{ S_1, S_2, \cdots, S_d\}$ where $s_j \subseteq [n],\forall j \in [d]$.

For each set $S_j$ we create an input node $u_j$. For each element $i\in [n]$, we create a hidden node $v_i$. For each $i \in [n]$ and $j \in [d]$, if $i \in S_j$, then we connect $u_j$ and $v_i$. We also create an output node $w$, for each $i\in [n]$, we connect node $v_i$ and node $w$.

Let ${\bf 1}_{i \in S_j}$ denote the indicator function that it is $1$ if $i \in S_j$ and $0$ otherwise. Let $T_i$ denote the set that $T_i = \{ j ~|~ i \in S_j, \forall j \in [d] \}$.  For each $i\in [n]$, we define an activation function $\phi_i$ satisfies that
\begin{align*}
\phi_i =
\begin{cases}
1, & \text{~if~} \sum_{j\in T_i} u_j \geq 1,\\
0, & \text{~otherwise}.
\end{cases}
\end{align*}
Since $u_j\in\{0,1\}$, $\phi_i$ can be implemented in this way using ReLU activations:
\begin{align*}
\phi_i = 1 - \max \left( 0, 1 - \sum_{j\in T_i} u_j \right).
\end{align*}
Note that $\sum_{j=1}^d {\bf 1}_{i \in S_j} = \sum_{j=1}^d u_j$, because $u_j=1$ indicates choosing set $S_j$ and $u_j=0$ otherwise.

For final output node $w$, we define an activation function $\psi$ satisfies that
\begin{align*}
\psi = 
\begin{cases}
1, & \text{~if~} \sum_{i=1}^n v_i \geq n, \\
0, & \text{~otherwise}.
\end{cases}
\end{align*}
Since $v_i\in[n]$, $\psi$ can be implemented as
\begin{align*}
\psi =  \max \left( 0,  \sum_{i=1}^n v_i -n+1\right).
\end{align*}
We use vector $x$ to denote $\{0\}^d$ vector and it is to easy to see that $F(x) = 0$. Let $\alpha > 1$ denote a fixed parameter. Also, we have $F(y)>0$ if and only if $C=\{j|y_j=1\}$ is a set-cover.
According to our construction, we can have the following two claims,

\begin{claim}[Completeness]\label{cla:bool_completeness}
If there exists a set-cover $C \subseteq [d]$ with $\cup_{j\in C} S_j = [n]$ and $|C| \leq r$, then there exists a point $y \in \{0,1\}^d$ such that $\| x - y \|_1 \leq r$ and $F(y) > 0$.
\end{claim}

\begin{claim}[Soundness]\label{cla:bool_soundness}
If for every $C \subseteq [d]$ with $\cup_{j \in C}S_j = U$ satisfies that $|C| > \alpha \cdot t$, then for all $y\in \{0,1\}^d$ satisfies that $\| x - y\|_1 \leq \alpha r$, $F(y) \leq 0$ holds.
\end{claim}
Therefore, using Theorem~\ref{thm:pcp}, Theorem~\ref{thm:approx_set_cover}, Claim~\ref{cla:bool_completeness} and Claim~\ref{cla:bool_soundness} completes the proof.
\end{proof}

\subsection{Main result}\label{sec:hardness_main_result}
With the proof for $\mathsf{ROBUST}$-$\mathsf{NET}$(${\mathbb{B}}$) as a warm-up, we now prove our main hardness result for $\mathsf{ROBUST}$-$\mathsf{NET}$(${\mathbb{R}}$) in this section. 

\begin{theorem}\label{thm:robust_net_R}
Unless $\mathsf{NP}=\mathsf{P}$, there is no polynomial time algorithm to give an $(1-o(1))\ln n$-approximation to $\mathsf{ROBUST}$-$\mathsf{NET}(\R)$ problem (Definition~\ref{def:approx_net_real}) with $n$ hidden nodes.
\end{theorem}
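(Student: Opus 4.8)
The plan is to reduce Approximate $\mathsf{SET}$-$\mathsf{COVER}$ (whose $(1-o(1))\ln n$-inapproximability is Theorem~\ref{thm:approx_set_cover}) to Approximate $\mathsf{ROBUST}$-$\mathsf{NET}(\R)$, reusing the combinatorial skeleton of the boolean warm-up but replacing the activation gadget by one that is robust to fractional, real-valued perturbations. As in the boolean case I take the query point to be the origin $x=\mathbf{0}\in\R^d$ (one input coordinate $y_j$ per set $S_j$), so that $\|x-y\|_1=\|y\|_1$ and moving mass onto coordinate $j$ models ``selecting'' $S_j$; by Proposition~\ref{prop:lower-higher} it suffices to reason about upper bounds on the minimum distortion.

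The central new ingredient is a \emph{hard-threshold clip} on each input: for a large parameter $N=\mathrm{poly}(n)$ set
\[ z_j \;=\; \max(0,\,N y_j-(N-1))-\max(0,\,N y_j-N), \]
a two-ReLU gadget equal to the clamp of $N(y_j-1)+1$ to $[0,1]$. Thus $z_j=0$ for $y_j\le 1-1/N$, $z_j=1$ for $y_j\ge 1$, and $z_j$ rises linearly in between; crucially, \emph{any} positive value $z_j>0$ forces $y_j>1-1/N$, i.e.\ costs at least $1-1/N$ in $\ell_1$. For each element $i$ I add a coverage node $\phi_i=\min(1,\sum_{j\in T_i}z_j)$ (again a clamp, hence ReLU-expressible), where $T_i=\{j: i\in S_j\}$, and an output gate $F(y)=\max(0,\sum_{i=1}^n\phi_i-n+1)$ exactly as in the boolean construction. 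Negative or small-positive $y_j$ contribute nothing to any $\phi_i$ while still paying $\ell_1$ cost, so the adversary gains nothing from them.

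Completeness and soundness then mirror Claim~\ref{cla:bool_completeness} and Claim~\ref{cla:bool_soundness}. For completeness, given a cover $C$ with $|C|\le r$ I put $y_j=1$ for $j\in C$ and $y_j=0$ otherwise: every $\phi_i=1$, so $F(y)>0$ and $\|y\|_1=|C|\le r$. For soundness, suppose $F(y)>0$; since each $\phi_i\le 1$, the inequality $\sum_i\phi_i>n-1$ forces $\phi_i>0$ for \emph{every} $i$ (a single vanishing term would drop the sum by a full unit). Hence for each $i$ some $j\in T_i$ has $z_j>0$, so $C:=\{j:z_j>0\}$ is a genuine set cover, and because each such $j$ contributes $|y_j|>1-1/N$ we get $\|y\|_1>|C|(1-1/N)\ge t_0(1-1/N)$. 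This is exactly the step that defeats fractional cheating: even partial activations are charged essentially a full unit, so the $\ell_1$-cost of any distortion achieving $F(y)>0$ is within a $(1-1/N)$ factor of the \emph{integral} optimum $t_0$, rather than of the (polynomially computable) fractional LP value.

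Finally I calibrate the parameters: take $r=t$ and $\alpha=\alpha'(1-1/N)$, where $\alpha'=(1-o(1))\ln n$ is the set-cover gap. In the YES case a cover of size $\le t$ yields $F(y)>0$ with $\|y\|_1\le r$; in the NO case every cover has size $>\alpha' t$, so any $y$ with $F(y)>0$ satisfies $\|y\|_1>t_0(1-1/N)>\alpha' t(1-1/N)=\alpha t$, certifying case (II) of Definition~\ref{def:approx_net_real}. Since $N$ is polynomial, $\alpha=(1-o(1))\ln n$, and the network uses $O(n+d)$ neurons built from constant-size ReLU gadgets, so the reduction is polynomial. The main obstacle throughout is precisely this control of real-valued inputs --- ensuring fractional or negative perturbations cannot purchase coverage below the integral set-cover cost --- which the steep clip gadget with polynomially large $N$ resolves; the remaining bookkeeping (relating the neuron count to the universe size and absorbing the $(1-1/N)$ slack into the $(1-o(1))$ factor) is routine.
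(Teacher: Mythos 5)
Your proof is correct and follows the same reduction skeleton as the paper's: both reduce from the $(1-o(1))\ln n$-inapproximability of $\mathsf{SET}$-$\mathsf{COVER}$ (Theorem~\ref{thm:approx_set_cover}), take the query point $x=\mathbf{0}$ with one input coordinate per set, and use a steep threshold (the paper's $\delta=1/d$, your $1/N$) so that any coordinate contributing to coverage must exceed $1-\delta$, yielding exactly the completeness/soundness pair of Claims~\ref{cla:real_completeness} and~\ref{cla:real_soundness} with the same $(1-\delta)$ calibration absorbed into the $(1-o(1))$ factor. Where you genuinely differ is in the gadgets. The paper thresholds via $t_j=-\max(0,\delta-u_j)+\max(0,u_j-1+\delta)$ and then aggregates with $v_i=\max_{j\in T_i}t_j$ and output $w=\min_{i\in[n]}v_i$; these $k$-ary $\max$/$\min$ gates are not single ReLU units, and their exact ReLU realization (extra neurons and depth for a $k$-ary max) is left implicit. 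You instead use the clip $z_j\in[0,1]$, the clamped sum $\phi_i=\min\left(1,\sum_{j\in T_i}z_j\right)$, and the counting gate $\max\left(0,\sum_{i}\phi_i-n+1\right)$ carried over from the boolean warm-up, each of which is an explicit constant-size ReLU expression (e.g.\ $\min(1,s)=s-\max(0,s-1)$ for $s\ge 0$). Your key observation---that clamping each $\phi_i$ at $1$ makes the boolean counting gate sound against fractional real-valued inputs, since a single vanishing $\phi_i$ drops the sum below $n-1$---is what lets you avoid the $\min$/$\max$ aggregation entirely; the paper's formulation buys a slightly shorter soundness argument at the cost of leaving the ReLU implementation of its aggregation gates unstated. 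Two minor cosmetic differences, neither a gap: your query point gives $F(\mathbf{0})=0\le 0$ rather than the paper's strict $F(\mathbf{0})=-\delta<0$ (both satisfy Definition~\ref{def:robust_net_real}), and both your write-up and the paper's gloss over relating the hidden-node count $O(n+d)$ to the universe size $n$ in the final ratio.
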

\begin{proof}
Consider a set-cover instance, let $S$ denote a set of sets $\{ S_1, S_2, \cdots, S_d\}$ where $S_j \subseteq [n],\forall j \in [d]$. For each set $S_j$ we create an input node $u_j$. For each $j \in [d]$, we create a hidden node $t_j$ and connect $u_j$ and $t_j$.

For each element $i\in [n]$, we create a hidden node $v_i$. For each $i \in [n]$ and $j \in [d]$, if $i \in S_j$, then we connect $u_j$ and $v_i$. Finally, we create an output node $w$ and for each $i\in [n]$, we connect node $v_i$ and node $w$.

Let $\delta = 1/ d$. For each $j \in [n]$, we apply an activation function $\phi_{1,j}$ on $t_j$ such that
\begin{align*}
\phi_{1,j} = - \max (0, \delta - u_j ) + \max (0, u_j - 1 + \delta)
\end{align*}
It is easy to see that
\begin{align*}
t_j=\phi_{1,j} =
\begin{cases}
u_j - \delta & \text{~if~} u_j \in [0,\delta]  \\
u_j -(1-\delta) & \text{~if~} u_j \in [1-\delta, 1]\\
0 & \text{~otherwise~}.
\end{cases}
\end{align*}

Let $T_i$ denote the set that $T_i = \{ j ~|~ i \in S_j, \forall j \in [d] \}$. 
For each $i\in [n]$, we need an activation function $\phi_{2,i}$ on node $v_i$ which satisfies that
\begin{align*}
\phi_{2,i} \in
\begin{cases}
[-\delta,0], & \text{~if~} \forall j \in T_i, t_j \in [-\delta,0], \\
[0,\delta], & \text{~if~} \exists j \in T_i, t_j \in [0,\delta].
\end{cases}
\end{align*}
This can be implemented in the following way,
\begin{align*}
\phi_{2,i} = \max_{j \in T_i} t_j. 
\end{align*}
For the final output node $w$, we define it as $$w=\min_{i \in [n]} v_i.$$
We use vector $x$ to denote $\{0\}^d$ vector and it is to easy to see that $F(x) =-\delta< 0$. Let $\alpha > 1$ denote a fixed parameter.

According to our construction, we can have the following two claims.
\begin{claim}[Completeness]\label{cla:real_completeness}
If there exists a set-cover $C \subseteq [d]$ with $\cup_{j\in C} S_j = [n]$ and $|C| \leq r$, then there exists a point $y \in [0,1]^d$ such that $\| x - y \|_1 \leq r$ and $F(y) > 0$.
\end{claim}
\begin{proof}
Without loss of generality, we let the set cover to be $\{S_1,S_2,...,S_r\}$. Let $y_1=y_2=\cdots=y_r=1$ and $y_{r+1}=y_{r+2}=...=y_{d}=0.$ By the definition of $t_j$, we have $t_1=t_2=\cdots=t_r=\delta.$ Since $\{S_1,S_2,\cdots,S_r\}$ is a set-cover, we know that $v_i=\delta$ for all $i\in[n]$. Then $F(y)=w=\min_{i\in[n ]}v_i=\delta>0.$ Since we also have $\|y\|_1=r,$ the adversarial point is found.
\end{proof}

\begin{claim}[Soundness]\label{cla:real_soundness}
If for every $C \subseteq [d]$ with $\cup_{j \in C}S_j = U$ satisfies that $|C| > \alpha \cdot r$, then for all $y\in [0,1]^d$ satisfies that $\| x - y\|_1 \leq \alpha r (1-1/ d )$, $F(y) \leq 0$ holds.
\end{claim}
\begin{proof}
Proof by contradiction. We assume that there exists $y$ such that $F(y)>0$ and $\| y\|_1 \leq  \alpha r (1-1/d)$. Since $F(y)>0$, we have for all $i$, $v_i>0$. Thus there exists $j\in T_i$ such that $t_j>0$. Let $\pi : [n] \rightarrow Q$ denote a mapping ($Q\subseteq[d]$ will be decided later). This means that for each $i \in [n]$, there exists $j\in T_i$, such that $1-\delta<y_{j}\leq 1$, and we let $\pi(i)$ denote that $j$.

 We define set $Q \subseteq [d]$ as follows
\begin{align*}
Q = \{ j ~|~  \exists i \in [n], \text{~s.t.~} \pi(i)=j \in T_i \text{~and~} t_j > 0 \}.
\end{align*}
 Since $\sum_{j\in[d]}|y_j|=\| y\|_1\leq  \alpha r (1-1/d) $, we have
\begin{align*}
\sum_{j \in Q }|y_j|\leq\sum_{ j \in [d] } |y_j| \leq \alpha r (1 - 1/d),
\end{align*}
where the first step follows by $|Q| \leq d$.

Because for all $j \in Q$, $|y_j|>1-\delta=1-1/d$, we have 
\begin{align*}
|Q|\leq \frac{ \alpha r ( 1 - 1/d ) }{ ( 1- 1/d )}= \alpha\cdot r.
\end{align*}
So $\{ S_{j} \}_{j \in Q}$ is a set-cover with size less than or equal to $\alpha\cdot r$, which is a contradiction. 
\end{proof}
Therefore, using Theorem~\ref{thm:pcp}, Theorem~\ref{thm:approx_set_cover}, Claim~\ref{cla:real_completeness} and Claim~\ref{cla:real_soundness} completes the proof.
\end{proof}

By making a stronger assumption of $\mathsf{ETH}$, we can have the following stronger result which excludes all $2^{o(n^c)}$ time algorithms, where $c>0$ is some fixed constant:

\begin{corollary}
Assuming Exponential Time Hypothesis ($\mathsf{ETH}$, see Hypothesis~\ref{hypo:eth}), there is no $2^{o(n^c)}$ time algorithm that gives a $(1-o(1))\ln n$-approximation to $\mathsf{ROBUST}$-$\mathsf{NET}$ problem with $n$ hidden nodes, where $c>0$ is some fixed constant.
\end{corollary}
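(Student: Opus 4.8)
The plan is to re-run the reduction of Theorem~\ref{thm:robust_net_R} essentially verbatim, but to feed it a sharper hardness hypothesis on $\mathsf{SET}$-$\mathsf{COVER}$: instead of the $\mathsf{NP}$-hardness of $(1-o(1))\ln n$-approximation (Theorem~\ref{thm:approx_set_cover}), I would invoke the $\mathsf{ETH}$-based running-time lower bound for approximating $\mathsf{SET}$-$\mathsf{COVER}$ from~\cite{m12,moshkovitz2012projection}, which states that under $\mathsf{ETH}$ there is no $2^{o(N^{c'})}$-time algorithm giving a $(1-o(1))\ln N$-approximation to $\mathsf{SET}$-$\mathsf{COVER}$ on a universe of size $N$, for some fixed $c'>0$. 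Since the reduction and its correctness (Claim~\ref{cla:real_completeness} and Claim~\ref{cla:real_soundness}) do not depend on which complexity assumption we later invoke, the completeness/soundness gap is already in hand; only the time accounting changes.

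First I would record the size of the instance produced by the reduction. Given a $\mathsf{SET}$-$\mathsf{COVER}$ instance with universe $[N]$ and $d$ sets, the network built in Theorem~\ref{thm:robust_net_R} has one hidden node $t_j$ per set, one hidden node $v_i$ per element, plus a bounded number of additional $\mathsf{ReLU}$ gates to realize each $\phi_{1,j}$, each $\phi_{2,i}=\max_{j\in T_i}t_j$, and the output $w=\min_i v_i$. Hence the total number of hidden nodes is $n=\poly(N,d)$ and the reduction itself runs in time $\poly(N,d)$. For the hard instances supplied by~\cite{m12,moshkovitz2012projection} one may take $d=N^{1+o(1)}$, so that $n=N^{1+o(1)}$ and therefore $\ln n=(1+o(1))\ln N$; this is exactly what is needed for the approximation factor to survive the change of parameter from universe size to neuron count.

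Next I would argue by contraposition, after fixing the exponent $c$ as explained below. Assume toward a contradiction that some algorithm $\mathsf{A}$ runs in time $2^{o(n^{c})}$ and $(1-o(1))\ln n$-approximates $\mathsf{ROBUST}$-$\mathsf{NET}(\R)$. Running the reduction and then $\mathsf{A}$ on the resulting network returns an estimate $r$ of the minimum $\ell_1$ distortion $r_0$; by Claim~\ref{cla:real_completeness} and Claim~\ref{cla:real_soundness} (whose soundness carries only the harmless $(1-1/d)=1-o(1)$ slack) this estimate yields a $(1-o(1))\ln n=(1-o(1))\ln N$-approximation to the minimal set-cover size. The total running time is $\poly(N,d)+2^{o(n^{c})}$; choosing $c$ small enough that $n^{c}=N^{(1+o(1))c}\le N^{c'}$ makes this $2^{o(N^{c'})}$, contradicting the assumed $\mathsf{ETH}$ lower bound for $\mathsf{SET}$-$\mathsf{COVER}$. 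Fixing this $c$ gives the constant claimed in the statement.

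The main obstacle is the parameter bookkeeping rather than any new combinatorial idea: I must choose the hard $\mathsf{SET}$-$\mathsf{COVER}$ family so that the number of sets $d$ (and hence the neuron count $n$) is only $N^{1+o(1)}$, for otherwise $\ln n$ would exceed $\ln N$ by a constant factor and the reduction would only certify hardness of $\tfrac{1-o(1)}{\mathrm{const}}\ln n$-approximation, i.e.\ a strictly weaker ratio than stated. Relatedly, I must translate the $\mathsf{SET}$-$\mathsf{COVER}$ exponent $c'$ into the network exponent $c$ through the polynomial blow-up $n=N^{1+o(1)}$; because the statement only asserts existence of \emph{some} fixed $c>0$, taking any $c<c'$ (e.g.\ $c=c'/(1+o(1))$) suffices, so this last step is forgiving once the near-linear size bound on the reduction is secured.
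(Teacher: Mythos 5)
Your proposal follows exactly the paper's route: the paper proves this corollary in a single line by reusing the construction of Theorem~\ref{thm:robust_net_R} together with the $\mathsf{ETH}$-based hardness of approximating $\mathsf{SET}$-$\mathsf{COVER}$ from \cite{m12,moshkovitz2012projection}, which is precisely the substitution you make (same reduction, same completeness/soundness claims, sharper hypothesis on the source problem). Your write-up simply fills in the instance-size and exponent bookkeeping ($n=N^{1+o(1)}$, choosing $c$ below the set-cover exponent $c'$) that the paper leaves implicit, so it is correct and essentially identical in approach.
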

\begin{proof}
It follows by the construction in Theorem~\ref{thm:robust_net_R} and \cite{m12,moshkovitz2012projection}. 
\end{proof}

Note that in~\cite{m12}, an additional conjecture, Projection Games Conjecture ($\mathsf{PGC}$) is required for the proof, but the result was improved in \cite{moshkovitz2012projection} and $\mathsf{PGC}$ is not a requirement any more.
\newpage
\section{Proof of Theorem \ref{thm:cvx_bnd}}
\label{app:approach1_explicit_function}
For a $m$-layer ReLU network, assume we know all the pre-ReLU activation bounds $\lwbnd{(k)}$ and $\upbnd{(k)}$, $\forall k \in[ m-1]$ for a $m$-layer ReLU network and we want to compute the bounds of the the $j$ th output at $m$ th layer.

The $j$ th output can be written as 
\begin{align}
\label{proof:eq:1stfj}
	f_j(\x) &= \sum_{k=1}^{n_{m-1}} \W{(m)}_{j,k} [\phi_{m-1}(\x)]_k + \bias{(m)}_j, \\
    &= \sum_{k=1}^{n_{m-1}} \W{(m)}_{j,k} \sigma(\W{(m-1)}_{k,:} \phi_{m-2}(x)+\bias{(m-1)}_k) + \bias{(m)}_j, \\
    &= \sum_{k \in \setIpos{m-1}, \setIneg{m-1}, \setIuns{m-1}} \W{(m)}_{j,k} \sigma(\W{(m-1)}_{k,:} \phi_{m-2}(\x)+\bias{(m-1)}_k) + \bias{(m)}_j.    
\end{align}
For neurons belonging to category (i), i.e., $k \in \setIpos{m-1}$, 
\begin{equation*}
	 \sigma(\W{(m-1)}_{k,:} \phi_{m-2}(x)+\bias{(m-1)}_k) = \W{(m-1)}_{k,:} \phi_{m-2}(\x)+\bias{(m-1)}_k. 
\end{equation*}
For neurons belonging to category (ii), i.e., $k \in \setIneg{m-1}$,
\begin{equation*}
	 \sigma(\W{(m-1)}_{k,:} \phi_{m-2}(\x)+\bias{(m-1)}_k) = 0. 
\end{equation*}
Finally, for neurons belonging to Category (iii), i.e., $k \in \setIuns{m-1}$, we bound their outputs. If we adopt the linear upper and lower bounds in \eqref{eq:our_cvx_approx} and let $\bm{d}^{(m-1)}_k := \frac{\upbnd{(m-1)}_k}{\upbnd{(m-1)}_k-\lwbnd{(m-1)}_k}$, we have 
\begin{equation}
\label{proof:eqcvx}
	 \bm{d}^{(m-1)}_k  (\W{(m-1)}_{k,:}  \phi_{m-2}(\x) + \bias{(m-1)}_k) \leq \sigma(\W{(m-1)}_{k,:}  \phi_{m-2}(\x) + \bias{(m-1)}_k) \leq \bm{d}^{(m-1)}_k  (\W{(m-1)}_{k,:}  \phi_{m-2}(\x) + \bias{(m-1)}_k - \lwbnd{(m-1)}_k). 
\end{equation}

\subsection{Upper bound}
The goal of this section is to prove Lemma~\ref{lem:approach_1_upper_bound}.
\begin{lemma}[Upper bound with explicit function]\label{lem:approach_1_upper_bound}
Given an $m$-layer ReLU neural network function $f : \R^{n_0} \rightarrow \R^{n_m}$, parameters $p, \epsilon$, there exists two explicit functions $f^L : \R^{n_0} \rightarrow \R^{n_m}$ and $f^U :\R^{n_0} \rightarrow \R^{n_m}$ (see Definition~\ref{def:f_L_f_U}) such that $\forall j \in [n_m]$,
\begin{equation*}
 f_{j}(\x) \leq f_{j}^{U}(\x), \forall \x \in B_p(\x_0,\epsilon).
\end{equation*}
\end{lemma}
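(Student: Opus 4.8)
The plan is to prove Lemma~\ref{lem:approach_1_upper_bound} by induction on the layers, propagating the linear upper bound from the output layer backwards to the input. The key observation is that the upper bound $f_j^U(\x)$ given in Definition~\ref{def:f_L_f_U} is an affine function of $\x$, and the inductive claim is that at each stage of unrolling the network we maintain a valid affine upper bound on $f_j(\x)$ that holds for all $\x \in B_p(\xo,\epsilon)$.

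First I would establish the base case using the 2-layer derivation already carried out in the main text (equations~\eqref{eq:2-layer-newfju}--\eqref{eq:2-layer-newfjl}). The crucial point is the sign-dependent selection: starting from~\eqref{proof:eq:1stfj}, I substitute the exact expression for category (i) neurons, zero for category (ii), and the \emph{linear bounds}~\eqref{proof:eqcvx} for category (iii). The subtlety is that whether we use the upper or lower piece of~\eqref{proof:eqcvx} depends on the sign of the coefficient $\W{(m)}_{j,k}$ (more generally $\A{(k)}_{j,r}$) multiplying that neuron's activation: when the coefficient is positive we take the upper bound, when negative the lower bound, so that every term's contribution is an overestimate. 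This is exactly what the definitions of $\upbias{(k)}$ and $\lwbias{(k)}$ encode—$\upbias{(k)}_{r,j}$ picks up the extra $-\lwbnd{(k)}_r$ slack term precisely when $\A{(k)}_{j,r}>0$.

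Next I would carry out the inductive step. Assuming we have reduced the problem to bounding a linear form $\A{(k)}_{j,:}\phi_k(\x)$ plus accumulated bias terms, I apply the substitution $\phi_k(\x) = \sigma(\W{(k)}\phi_{k-1}(\x)+\bias{(k)})$ and invoke the same three-way case split together with the linear approximation~\eqref{eq:our_cvx_approx}. The recursion $\A{(k-1)} = \A{(k)}\W{(k)}\DD{(k-1)}$ captures the composition of the linear map with the diagonal slope matrix $\DD{(k)}$, whose diagonal entries from~\eqref{eq:def_D} correctly record slope $\frac{u}{u-l}$ for uncertain neurons, $1$ for always-active, and $0$ for always-inactive. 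The bias contributions $\bias{(k)}$ and the ReLU-slack terms $-\upbias{(k)}_{:,j}$ accumulate additively, matching the summation $\sum_{k=1}^{m-1}\A{(k)}_{j,:}(\bias{(k)}-\upbias{(k)}_{:,j})$ in $f_j^U(\x)$. I would verify that the sign-based choice of upper-versus-lower piece, now governed by the sign of $\A{(k)}_{j,r}$ rather than a single weight entry, continues to guarantee overestimation at every layer.

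The main obstacle I anticipate is \textbf{bookkeeping the sign conditions correctly through the composition}. When we propagate backwards, the effective coefficient on a neuron in layer $k$ is not a raw weight but the accumulated quantity $\A{(k)}_{j,r}$, and one must check that using the upper linear piece for $\A{(k)}_{j,r}>0$ and the lower piece for $\A{(k)}_{j,r}<0$ indeed yields an upper bound \emph{after} multiplying by $\A{(k)}_{j,r}$ and summing—the product of a positive coefficient with the larger activation value, and a negative coefficient with the smaller value, both increase the sum. The case $r\in\setIpos{k}$ needs care too, since there $\DD{(k)}_{r,r}=1$ and no slack term appears, so it must be folded into the $\setIuns{k}$ summation cleanly (as remarked after~\eqref{eq:2-layer-newfjl}). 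Once the sign analysis is pinned down, the remaining work is purely mechanical substitution, so I would state the inductive hypothesis precisely, verify the base case, and then present the inductive step as the substitution of~\eqref{proof:eqcvx} into the layer-$k$ affine form while tracking the sign conditions that select $\upbias{(k)}$.
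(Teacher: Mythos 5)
Your proposal follows essentially the same route as the paper's proof: the paper likewise peels the network one layer at a time from the output backwards, substituting the three-way case split with the sign-dependent choice of the linear pieces in \eqref{proof:eqcvx}, and rewriting the result as an equivalent one-layer-shallower network with weights $\A{(m-1)}\W{(m-1)}$ --- its ``repeat this procedure until obtaining $f_j^{U,1}(\x)$'' is exactly the induction you describe. Your anticipated obstacle is also resolved the same way the paper resolves it: since the diagonal entries of $\DD{(k)}$ are nonnegative, the sign of the raw coefficient agrees with the sign of the accumulated coefficient $\A{(k)}_{j,r}$, which is precisely why Definition~\ref{def:A_eta_tau}'s condition $\A{(k)}_{j,r}>0$ is the correct selector for $\upbias{(k)}$.
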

Notice that \eqref{proof:eqcvx} can be used to construct an upper bound and lower bound of $f_j(\x)$ by considering the signs of the weights $\W{(m)}_{j,k}$. Let $f_j^{U,m-1}(\x)$ be an upper bound of $f_j(\x)$; $f_j^{U,m-1}(\x)$ can be constructed by taking the right-hand-side (RHS) of \eqref{proof:eqcvx} if $\W{(m)}_{j,k} > 0$ and taking the left-hand-side (LHS) of \eqref{proof:eqcvx} if $\W{(m)}_{j,k} < 0$:
\begin{align}
 & \quad f_j^{U,m-1}(\x)  \notag \\
& = \sum_{k\in \setIpos{m-1}} \W{(m)}_{j,k} (\W{(m-1)}_{k,:} \phi_{m-2}(\x) + \bias{(m-1)}_k) \label{proof:eq:1stfju} \\
 & \quad +\sum_{k\in \setIuns{m-1}, \W{(m)}_{j,k}>0} \W{(m)}_{j,k} \bm{d}^{(m-1)}_k (\W{(m-1)}_{k,:} \phi_{m-2}(\x) + \bias{(m-1)}_k - \lwbnd{(m-1)}_k) \nonumber \\ 
 & \quad +\sum_{k\in \setIuns{m-1}, \W{(m)}_{j,k}<0} \W{(m)}_{j,k} \bm{d}^{(m-1)}_k (\W{(m-1)}_{k,:} \phi_{m-2}(\x) + \bias{(m-1)}_k) + \bias{(m)}_j \nonumber \\
\label{proof:eq:1stfju1}
& = \sum_{k = 1}^{n_{m-1}} \W{(m)}_{j,k} \bm{d}^{(m-1)}_k (\W{(m-1)}_{k,:} \phi_{m-2}(\x) + \bias{(m-1)}_k) - \sum_{k\in \setIuns{m-1}, \W{(m)}_{j,k}>0} \W{(m)}_{j,k} \bm{d}^{(m-1)}_k \lwbnd{(m-1)}_k + \bias{(m)}_j, \\
\label{proof:eq:1stfju2}
& = \sum_{k = 1}^{n_{m-1}} \W{(m)}_{j,k} \bm{d}^{(m-1)}_k \W{(m-1)}_{k,:} \phi_{m-2}(\x) \\
& \quad +\left(\sum_{k = 1}^{n_{m-1}} \W{(m)}_{j,k} \bm{d}^{(m-1)}_k \bias{(m-1)}_k - \sum_{k\in \setIuns{m-1}, \W{(m)}_{j,k}>0} \W{(m)}_{j,k} \bm{d}^{(m-1)}_k \lwbnd{(m-1)}_k + \bias{(m)}_j \right), \nonumber
\end{align} 
where we set $\bm{d}^{(m-1)}_k = 1$ for $k \in \setIpos{m-1}$ and set $\bm{d}^{(m-1)}_k = 0$ for $k \in \setIneg{m-1}$ from \eqref{proof:eq:1stfju} to \eqref{proof:eq:1stfju1} and collect the constant terms (independent of $\x$) in the parenthesis from \eqref{proof:eq:1stfju1} to \eqref{proof:eq:1stfju2}. 

If we let $\A{(m-1)} = \W{(m)}\DD{(m-1)}$, where $\DD{(m-1)}$ is a diagonal matrix with diagonals being $\bm{d}^{(m-1)}_k$, then we can rewrite $f_j^{U,m-1}(\x)$  into the following:
\begin{align}
\label{proof:eq:1stfju3}
f_j^{U,m-1}(\x) & = \sum_{k = 1}^{n_{m-1}} \A{(m-1)}_{j,k} \W{(m-1)}_{k,:} \phi_{m-2}(\x) + \left(\A{(m-1)}_{j,:} \bias{(m-1)} - \A{(m-1)}_{j,:} \upbias{(m-1)}_{:,j} + \bias{(m)}_j \right) \\
\label{proof:eq:1stfju4}
& = \sum_{k = 1}^{n_{m-1}} \A{(m-1)}_{j,k} (\sum_{r=1}^{n_{m-2}} \W{(m-1)}_{k,r} [\phi_{m-2}(\x)]_r ) + \left(\A{(m-1)}_{j,:} \bias{(m-1)} - \A{(m-1)}_{j,:} \upbias{(m-1)}_{:,j} + \bias{(m)}_j \right) \\
\label{proof:eq:1stfju5}
& = \sum_{r=1}^{n_{m-2}} \sum_{k = 1}^{n_{m-1}} \A{(m-1)}_{j,k}  \W{(m-1)}_{k,r} [\phi_{m-2}(\x)]_r  + \left(\A{(m-1)}_{j,:} \bias{(m-1)} - \A{(m-1)}_{j,:} \upbias{(m-1)}_{:,j} + \bias{(m)}_j \right) \\
\label{proof:eq:1stfju6}
& = \sum_{r=1}^{n_{m-2}} \tilde{\bm{W}}^{(m-1)}_{j,r} [\phi_{m-2}(\x)]_r  + \tilde{\bm{b}}^{(m-1)}_j.
\end{align}
From \eqref{proof:eq:1stfju2} to \eqref{proof:eq:1stfju3}, we rewrite the summation terms in the parenthesis into matrix-vector multiplications and for each $j \in [n_m]$ let 
\begin{equation*}
\upbias{(m-1)}_{k,j} =   
    \begin{cases}
    \lwbnd{(m-1)}_k & \text{if $k \in \setIuns{m-1}, \, \A{(m-1)}_{j,k} > 0$} \\
    0 & \text{otherwise} 
  \end{cases}
\end{equation*}
since $0 \leq \bm{d}^{(m-1)}_k \leq 1$,  $\W{(m)}_{j,k} > 0$ is equivalent to $\A{(m-1)}_{j,k} > 0$.

From \eqref{proof:eq:1stfju3} to \eqref{proof:eq:1stfju4}, we simply write out the inner product $\W{(m-1)}_{k,:} \phi_{m-2}(\x)$ into a summation form, and from \eqref{proof:eq:1stfju4} to \eqref{proof:eq:1stfju5}, we exchange the summation order of $k$ and $r$. From \eqref{proof:eq:1stfju5} to \eqref{proof:eq:1stfju6}, we let 
\begin{align}
\label{proof:eq:1stfju7}
\tilde{\bm{W}}^{(m-1)}_{j,r} &= \sum_{k = 1}^{n_{m-1}} \A{(m-1)}_{j,k}  \W{(m-1)}_{k,r} \\
\label{proof:eq:1stfju8}
\tilde{\bm{b}}^{(m-1)}_j &= \left(\A{(m-1)}_{j,:} \bias{(m-1)} - \A{(m-1)}_{j,:} \upbias{(m-1)}_{:,j} + \bias{(m)}_j \right)
\end{align}
and now we have \eqref{proof:eq:1stfju6} in the same form as \eqref{proof:eq:1stfj}.

Indeed, in \eqref{proof:eq:1stfj}, the running index is $k$ and we are looking at the $m$ th layer, with weights $\W{(m)}_{j,k}$, activation functions $\phi_{m-1}(\x)$ and bias term $\bias{(m)}_j$; in \eqref{proof:eq:1stfju6}, the running index is $r$ and we are looking at the $m-1$ th layer with \textit{equivalent} weights $\tilde{\bm{W}}^{(m-1)}_{j,r}$, activation functions $\phi_{m-2}(\x)$ and \textit{equivalent} bias $\tilde{\bm{b}}^{(m-1)}_j$. Thus, we can use the same technique from \eqref{proof:eq:1stfj} to \eqref{proof:eq:1stfju6} and obtain an upper bound on the $f_j^{U,m-1}(\x)$ and repeat this procedure until obtaining $f_j^{U,1}(\x)$, where
$$f_j(\x) \leq f_j^{U,m-1}(\x) \leq f_j^{U,m-2}(\x) \leq \ldots \leq f_j^{U,1}(\x).$$ 

Let the final upper bound $f_j^U(\x) = f_j^{U,1}(\x)$, and now we have 
\begin{equation*}
	f_{j}(\x) \leq f_{j}^{U}(\x), 
\end{equation*}
where $f^U_j(\x) = [f^{U}(\x)]_j$,
\begin{align*}
		f^{U}_j(\x) & = \A{(0)}_{j,:}\x+ \bias{(m)}_j+\sum_{k=1}^{m-1}\A{(k)}_{j,:}(\bias{(k)}-\upbias{(k)}_{:,j}) 
\end{align*}
and for $k = 1, \, \ldots, \, m-1, $
\begin{equation*}
	\A{(m-1)} = \W{(m)} \DD{(m-1)}, \, \A{(k-1)} = \A{(k)} \W{(k)} \DD{(k-1)},  
\end{equation*}
\begin{align*}
	\DD{(0)} & = \bm{I}_{n_0}  \nonumber  \\
    \DD{(k)}_{r,r} & =   
    \begin{cases}
    \frac{\upbnd{(k)}_r}{\upbnd{(k)}_r-\lwbnd{(k)}_r} & \text{if $r \in \setIuns{k}$} \\
    1 & \text{if $r \in \setIpos{k}$} \\
    0 & \text{if $r \in \setIneg{k}$}     
  \end{cases}
  \\
      \upbias{(k)}_{r,j} & =   
    \begin{cases}
    \lwbnd{(k)}_r & \text{if $r \in \setIuns{k}, \, \A{(k)}_{j,r} > 0$} \\
    0 & \text{otherwise} 
  \end{cases}
\end{align*}

\subsection{Lower bound }
The goal of this section is to prove Lemma~\ref{lem:approach_1_lower_bound}.
\begin{lemma}[Lower bound with explicit function]\label{lem:approach_1_lower_bound}
Given an $m$-layer ReLU neural network function $f : \R^{n_0} \rightarrow \R^{n_m}$, parameters $p, \epsilon$, there exists two explicit functions $f^L : \R^{n_0} \rightarrow \R^{n_m}$ and $f^U :\R^{n_0} \rightarrow \R^{n_m}$ (see Definition~\ref{def:f_L_f_U}) such that $\forall j \in [n_m]$,
\begin{equation*}
  f_{j}^{L}(\x) \leq f_{j}(\x), \forall \x \in B_p(\x_0,\epsilon).
\end{equation*}
\end{lemma}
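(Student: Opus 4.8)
The plan is to mirror the derivation of Lemma~\ref{lem:approach_1_upper_bound} line for line, flipping the side of the ReLU sandwich that is selected at each uncertain neuron. Starting again from the expansion of $f_j(\x)$ over the neurons of layer $m-1$, I keep the identity substitution $\sigma(\W{(m-1)}_{k,:}\phi_{m-2}(\x)+\bias{(m-1)}_k)=\W{(m-1)}_{k,:}\phi_{m-2}(\x)+\bias{(m-1)}_k$ for $k\in\setIpos{m-1}$ and the zero substitution for $k\in\setIneg{m-1}$. For $k\in\setIuns{m-1}$ I reuse the sandwich \eqref{proof:eqcvx}, but to build a \emph{lower} bound $f_j^{L,m-1}(\x)\leq f_j(\x)$ I now take the left-hand side of \eqref{proof:eqcvx} whenever $\W{(m)}_{j,k}>0$ and the right-hand side whenever $\W{(m)}_{j,k}<0$, which is exactly the opposite of the upper-bound case (a positive weight should multiply the smallest admissible ReLU value, a negative weight the largest).

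Collecting the common term $\bm{d}^{(m-1)}_k(\W{(m-1)}_{k,:}\phi_{m-2}(\x)+\bias{(m-1)}_k)$ into a single sum over all $k$ (again setting $\bm{d}^{(m-1)}_k=1$ on $\setIpos{m-1}$ and $\bm{d}^{(m-1)}_k=0$ on $\setIneg{m-1}$), the only residual constant is the $-\lwbnd{(m-1)}_k$ correction, which now comes from the uncertain neurons with \emph{negative} weight (the ones where the right-hand side of \eqref{proof:eqcvx} was chosen). Writing $\A{(m-1)}=\W{(m)}\DD{(m-1)}$ and using $0\leq\bm{d}^{(m-1)}_k\leq 1$ so that $\W{(m)}_{j,k}<0\iff\A{(m-1)}_{j,k}<0$, this correction equals $-\A{(m-1)}_{j,:}\lwbias{(m-1)}_{:,j}$ with $\lwbias{(m-1)}$ exactly as in Definition~\ref{def:A_eta_tau}. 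Hence $f_j^{L,m-1}(\x)$ is again affine in $\phi_{m-2}(\x)$ with equivalent weight row $\A{(m-1)}_{j,:}\W{(m-1)}$ and an equivalent bias that absorbs the $\lwbias$ correction, putting it in the same shape as the original $f_j(\x)$.

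I then recurse exactly as in the upper-bound proof: treating $f_j^{L,m-1}$ as a network with one fewer layer and applying the identical step gives $f_j^{L,m-2}\leq f_j^{L,m-1}$, and iterating down to layer $1$ produces $f_j^L(\x):=f_j^{L,1}(\x)\leq f_j(\x)$, which matches the expression for $f^L$ in Definition~\ref{def:f_L_f_U}. At the $k$-th recursive step the choice of side in \eqref{proof:eqcvx} is dictated by the sign of the accumulated row $\A{(k)}_{j,:}$ rather than of the raw weights, and this is precisely how $\lwbias{(k)}$ is defined, so Definition~\ref{def:A_eta_tau} already encodes all the bookkeeping.

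The one delicate point I expect to dwell on is verifying that the lower-bound inequality really is preserved through the recursion even though the sign pattern of the equivalent weights can differ from that of the original weights: the side selection at step $k$ must be governed by $\sign(\A{(k)}_{j,r})$, and I must confirm that this sign, combined with the nonnegativity of the diagonal of $\DD{(k)}$, consistently picks out the lower branch of \eqref{proof:eqcvx}. Because $\DD{(k)}$ is nonnegative, the sign of every relevant product is controlled in the same way as in the upper-bound argument, so the proof goes through verbatim with $\upbias$ replaced by $\lwbias$ and the two sign conditions interchanged.
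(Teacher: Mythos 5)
Your proposal is correct and follows essentially the same route as the paper's own proof: flip the side of the sandwich~\eqref{proof:eqcvx} chosen at uncertain neurons according to the sign of the (accumulated) weight, observe that both sides share the slope $\bm{d}^{(m-1)}_k$ so the same $\A{}$ and $\DD{}$ matrices arise and only the bias correction changes from $\upbias{}$ to $\lwbias{}$, and recurse layer by layer exactly as in Lemma~\ref{lem:approach_1_upper_bound}. The ``delicate point'' you flag---that the side selection at step $k$ is governed by $\sign(\A{(k)}_{j,r})$ and is consistent because the diagonal of $\DD{(k)}$ is nonnegative---is precisely the observation the paper makes when it replaces the condition $\W{(m)}_{j,k}<0$ by $\A{(m-1)}_{j,k}<0$ in the definition of $\lwbias{(m-1)}$.
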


Similar to deriving the upper bound of $f_j(\x)$, we consider the signs of the weights $\W{(m)}_{j,k}$ to derive the lower bound. Let $f_j^{L,m-1}(\x)$ be a lower bound of $f_j(\x)$; $f_j^{L,m-1}(\x)$ can be constructed by taking the right-hand-side (RHS) of \eqref{proof:eqcvx} if $\W{(m)}_{j,k} < 0$ and taking the left-hand-side (LHS) of \eqref{proof:eqcvx} if $\W{(m)}_{j,k} > 0$. Following the procedure in \eqref{proof:eq:1stfju} to \eqref{proof:eq:1stfju6} (except that now the additional bias term is from the set $k \in \setIuns{m-1}, \W{(m)}_{j,k} < 0$), the lower bound is similar to the upper bound we have derived but but replace $\upbias{(m-1)}$ by $\lwbias{(m-1)}$, where for each $j \in [n_m]$,
\begin{equation*}
\lwbias{(m-1)}_{k,j} =   
    \begin{cases}
    \lwbnd{(m-1)}_k & \text{if $k \in \setIuns{m-1}, \, \A{(m-1)}_{j,k} < 0$} \\
    0 & \text{otherwise.} 
  \end{cases}
\end{equation*}
It is because the linear upper and lower bounds in \eqref{eq:our_cvx_approx} has the same slope $\frac{u}{u-l}$ on both sides (i.e. $\sigma(y)$ is bounded by two lines with the same slope but different intercept), which gives the same $\A{}$ matrix and $\DD{}$ matrix in computing the upper bound and lower bound of $f_j(\x)$. This is the key to facilitate a faster computation under this linear approximation \eqref{eq:our_cvx_approx}. Thus, the lower bound for $f_j(\x)$ is:
\begin{equation*}
	f_{j}^{L}(\x) \leq f_{j}(\x) , 
\end{equation*}
where $f^L_j(\x) = [f^{L}(\x)]_j$,
\begin{align*}
        f^{L}_j(\x) & = \A{(0)}_{j,:}\x+ \bias{(m)}_j+\sum_{k=1}^{m-1}\A{(k)}_{j,:}(\bias{(k)}-\lwbias{(k)}_{:,j})
\end{align*}
and for $k = 1, \, \ldots, \, m-1, $
\begin{equation*}
      \lwbias{(k)}_{r,j}  =   
    \begin{cases}
    \lwbnd{(k)}_r & \text{if $r \in \setIuns{k}, \, \A{(k)}_{j,r} < 0$} \\
    0 & \text{otherwise.} 
  \end{cases}
\end{equation*}

\newpage
\section{Proof of Corollary \ref{cor:cvx_bnd}}\label{app:approach1_fixed_value}
By Theorem \ref{thm:cvx_bnd}, for $\x \in B_p(\xo,\epsilon)$, we have $ f_j^{L}(\x) \leq f_j(\x) \leq f_j^{U}(\x)$. Thus,  
\begin{align}
f_j(\x) &\leq f_j^{U}(\x) \leq \max_{\x \in B_p(\x,\epsilon)} f_j^{U}(\x), \\
f_j(\x) &\geq f_j^{L}(\x) \geq \min_{\x \in B_p(\x,\epsilon)} f_j^{L}(\x).
\end{align}
Since $f_j^{U}(\x) = \A{(0)}_{j,:}\x+ \bias{(m)}_j+\sum_{k=1}^{m-1}\A{(k)}_{j,:}(\bias{(k)}-\upbias{(k)}_{:,j})$, 
\begin{align}
\gamma^U_j := 
\max_{\x \in B_p(\xo,\epsilon)} f_j^{U}(\x) 
&= \max_{\x \in B_p(\xo,\epsilon)} \left( \A{(0)}_{j,:}\x+ \bias{(m)}_j+\sum_{k=1}^{m-1}\A{(k)}_{j,:}(\bias{(k)}-\upbias{(k)}_{:,j}) \right) \nonumber \\
&= \left( \max_{\x \in B_p(\xo,\epsilon)} \A{(0)}_{j,:}\x \right) + \bias{(m)}_j+\sum_{k=1}^{m-1}\A{(k)}_{j,:}(\bias{(k)}-\upbias{(k)}_{:,j}) \label{proof:cor:max1} \\
&= \epsilon \left( \max_{\bm{y} \in B_p(\bm{0},1)} \A{(0)}_{j,:}\bm{y} \right) + \A{(0)}_{j,:}\xo + \bias{(m)}_j+\sum_{k=1}^{m-1}\A{(k)}_{j,:}(\bias{(k)}-\upbias{(k)}_{:,j}) \label{proof:cor:max2} \\
&= \epsilon \| \A{(0)}_{j,:}\|_q + \A{(0)}_{j,:}\xo + \bias{(m)}_j+\sum_{k=1}^{m-1}\A{(k)}_{j,:}(\bias{(k)}-\upbias{(k)}_{:,j}). \label{proof:cor:max3}
\end{align}
From \eqref{proof:cor:max1} to \eqref{proof:cor:max2}, we do a transformation of variable $\bm{y} := \frac{\x-\xo}{\epsilon}$ and therefore $\bm{y} \in B_p(\bm{0},1)$. By the definition of dual norm $\| \cdot \|_*$:
\begin{equation*}
	\| \bm{z} \|_* = \{ \sup_{\bm{y}} \bm{z}^\top \bm{y} \mid \| \bm{y} \| \leq 1 \},
\end{equation*}
and the fact that  $\ell_q$ norm is dual of $\ell_p$ norm for $p, q \in [1,\infty]$, the term $\left( \max_{\bm{y} \in B_p(\bm{0},1)} \A{(0)}_{j,:}\bm{y} \right)$ in \eqref{proof:cor:max2} can be expressed as $\| \A{(0)}_{j,:}\|_q$ in \eqref{proof:cor:max3}. Similarly, 
\begin{align}
\gamma^L_j := 
\min_{\x \in B_p(\xo,\epsilon)} f_j^{L}(\x) 
&= \min_{\x \in B_p(\xo,\epsilon)} \left( \A{(0)}_{j,:}\x+ \bias{(m)}_j+\sum_{k=1}^{m-1}\A{(k)}_{j,:}(\bias{(k)}-\lwbias{(k)}_{:,j}) \right) \nonumber \\
&= \left( \min_{\x \in B_p(\xo,\epsilon)} \A{(0)}_{j,:}\x \right) + \bias{(m)}_j+\sum_{k=1}^{m-1}\A{(k)}_{j,:}(\bias{(k)}-\lwbias{(k)}_{:,j}) \nonumber\\
&= \epsilon \left( \min_{\bm{y} \in B_p(\bm{0},1)} \A{(0)}_{j,:}\bm{y} \right) + \A{(0)}_{j,:}\xo + \bias{(m)}_j+\sum_{k=1}^{m-1}\A{(k)}_{j,:}(\bias{(k)}-\lwbias{(k)}_{:,j}) \nonumber\\
&= -\epsilon \left( \max_{\bm{y} \in B_p(\bm{0},1)} -\A{(0)}_{j,:}\bm{y} \right) + \A{(0)}_{j,:}\xo + \bias{(m)}_j+\sum_{k=1}^{m-1}\A{(k)}_{j,:}(\bias{(k)}-\lwbias{(k)}_{:,j}) \label{proof:cor:min1}\\
&= -\epsilon \| \A{(0)}_{j,:}\|_q + \A{(0)}_{j,:}\xo + \bias{(m)}_j+\sum_{k=1}^{m-1}\A{(k)}_{j,:}(\bias{(k)}-\lwbias{(k)}_{:,j}). \label{proof:cor:min2}
\end{align}
Again, from \eqref{proof:cor:min1} to \eqref{proof:cor:min2}, we simply replace $\left( \max_{\bm{y} \in B_p(\bm{0},1)} -\A{(0)}_{j,:}\bm{y} \right)$ by $\| -\A{(0)}_{j,:}\|_q = \| \A{(0)}_{j,:}\|_q$. 
Thus, if we use $\nu_j$ to denote the common term $\A{(0)}_{j,:} \xo +  \bias{(m)}_j + \sum_{k=1}^{m-1}\A{(k)}_{j,:}\bias{(k)}$, we have 
\begin{align*}
	\gamma^U_j = \epsilon \|\A{(0)}_{j,:}\|_q - \sum_{k=1}^{m-1}\A{(k)}_{j,:}\upbias{(k)}_{:,j} + \nu_j, \quad \quad & \text{(upper bound)}\\
	\gamma^L_j = -\epsilon \|\A{(0)}_{j,:}\|_q - \sum_{k=1}^{m-1}\A{(k)}_{j,:}\lwbias{(k)}_{:,j} + \nu_j. \; \quad & \text{(lower bound)}
\end{align*}

\newpage
\section{Algorithms}
\label{sec:app_algs}
We present our full algorithms, \fastlin in Algorithm~\ref{alg:fast-lin} and \fastlip in Algorithm~\ref{alg:fast-lip}.

\newpage

\section{An alternative bound on the Lipschitz constant}\label{app:alternative_bound_on_Lipschitz}
Using the property of norm, we can derive an upper bound of the gradient norm of a $2$-layer ReLU network in the following: 
\begin{align}
	& \quad \| \nabla f_j(\x) \|_q \nonumber \\
    &= \| \W{(2)}_{j,:} \Lam{(1)} \W{(1)} \|_q \nonumber \\
    &= \| \W{(2)}_{j,:} (\Lam{(1)}_a+\Lam{(1)}_u) \W{(1)} \|_q  \label{eq:grad_qbnd1} \\
    & \leq \| \W{(2)}_{j,:} \Lam{(1)}_a \W{(1)} \|_q + \| \W{(2)}_{j,:} \Lam{(1)}_u \W{(1)} \|_q  \label{eq:grad_qbnd2} \\
    & \leq \| \W{(2)}_{j,:} \Lam{(1)}_a \W{(1)} \|_q +  \sum_{r \in \setIuns{1}} \| \W{(2)}_{j,r} \W{(1)}_{r,:} \|_q \label{eq:grad_qbnd3}
\end{align}
where with a slight abuse of notation, we use $\Lam{(1)}_a$ to denote the diagonal activation matrix for neurons who are always activated, i.e. its $(r,r)$ entry $\Lam{(1)}_{a(r,r)}$ is $1$ if 
$r \in \setIpos{1}$ and $0$ otherwise, and we use $\Lam{(1)}_u$ to denote the diagonal activation matrix for neurons whose status are uncertain because they could possibly be active or inactive, i.e. its $(r,r)$ entry $\Lam{(1)}_{u(r,r)}$ is $1$ if $r \in \setIuns{1}$ and $0$ otherwise. Therefore, we can write $\Lam{(1)}$ as a sum of $\Lam{(1)}_a$ and $\Lam{(1)}_u$. 

Note that \eqref{eq:grad_qbnd1} to \eqref{eq:grad_qbnd2} is from the sub-additive property of a norm, and \eqref{eq:grad_qbnd2} to \eqref{eq:grad_qbnd3} uses the sub-additive property of a norm again and set the uncertain neurons encoding all to $1$ because $$\| \W{(2)}_{j,:} \Lam{(1)}_u \W{(1)} \| = \| \sum_{r \in \setIuns{1}} \W{(2)}_{j,r} \Lam{(1)}_{u(r,r)} \W{(1)}_{r,:} \| \leq \sum_{r \in \setIuns{1}}  \| \W{(2)}_{j,r} \Lam{(1)}_{u(r,r)} \W{(1)}_{r,:} \| \leq \sum_{r \in \setIuns{1}} \| \W{(2)}_{j,r}  \W{(1)}_{r,:} \|.$$
Notice that \eqref{eq:grad_qbnd3} can be used as an upper bound of Lipschitz constant and is applicable to compute a certified lower bound for minimum adversarial distortion of a general $\ell_p$ norm attack. However, this bound is expected to be less tight because we simply include all the uncertain neurons to get an upper bound on the norm in \eqref{eq:grad_qbnd3}.


\newpage

\section{Details of Experiments in Section~\ref{sec:exp}}
\label{app:exp}

\subsection{Methods}
Below, we give detailed descriptions on the methods that we compare in Table~\ref{tb:smallnetwork_and_large}, Table~\ref{tb:smallnetwork} and Table~\ref{tb:largenetwork_app}: 
\begin{itemize}
\item \fastlin: Our proposed method of directly bounding network output via \textbf{lin}ear upper/lower bounds for ReLU, as discussed in Section~\ref{sec3:convexbnd} and Algorithm~\ref{alg:fast-lin};
\item \fastlip: Our proposed method based on bounding local \textbf{Lip}schitz constant, in Section \ref{sec3:gradbnd} and Algorithm~\ref{alg:fast-lip};
\item \reluplex: \reluplex~\cite{katz2017reluplex} is a satisfiability modulo theory (SMT) based solver which delivers a true minimum distortion, but is very computationally expensive; 
\item \lpfull: A linear programming baseline method with formulation borrowed from~\cite{zico17convex}. Note that we solve the primal LP formulation \textit{exactly} to get a best possible bound. This variant solves \textbf{full} relaxed \lp problems at \textit{every} layer to give a final ``adversarial polytope''. Similar to our proposed methods, it only gives a lower bound. We extend this formulation to $p=2$ case, where the input constraint becomes quadratic and requires a quadratic constrained programming (QCP) solver, which is usually slower than LP solvers. 
\item \lp: Similar to \lpfull, but this variant solves only \textbf{one} LP problem for the full network at the output neurons and the layer-wise bounds for the neurons in hidden layers are solved by \fastlin. We also extend it to $p=2$ case with QCP constraints on the inputs. \lp and \lpfull are served as our baselines to compare with \fastlin and \fastlip;
\item \textbf{Attacks}: Any successful adversarial example gives a valid \textit{upper bound} for the minimum adversarial distortion. For larger networks where \reluplex is not feasible, we run adversarial attacks and obtain an upper bound of minimal adversarial distortions to compare with. We apply the $\ell_2$ and $\ell_\infty$ variants of Carlini and Wagner's attack (CW)~\cite{carlini2017towards} to find the best $\ell_2$ and $\ell_\infty$ distortions. We found that the CW $\ell_\infty$ attack usually finds adversarial examples with smaller $\ell_\infty$ distortions than using PGD (projected gradient descent). We use EAD~\cite{chen2017ead}, a Elastic-Net regularized attack, to find adversarial examples with small $\ell_1$ distortions. We run CW $\ell_2$ and $\ell_\infty$ attacks for 3,000 iterations and EAD attacks for 2,000 iterations; 
\item \clever: \clever~\cite{weng2017evaluating} is an attack-agnostic robustness score based on local Lipschitz constant estimation and provides an estimated lower-bound. It is capable of performing robustness evaluation for large-scale networks but is not a certified lower bound;
\item \opnorm: Operator norms of weight matrices were first used in~\cite{szegedy2013intriguing} to give a robustness lower bound. We compute the $\ell_p$ induced norm of weight matrices of each layer and use their product as the global Lipschitz constant $L_q^j$. A valid lower bound is given by $g(\bm{x_0})/L_q^j$ (see Section \ref{sec3:gradbnd}). We only need to pre-compute the operator norms once for all the examples.
\end{itemize}

\subsection{Setup}
We use MNIST and CIFAR datasets and evaluate the performance of each method in   MLP networks with up to 7 layers or over 10,000 neurons, which is the largest network size for non-trivial and guaranteed robustness verification to date. We use the same number of hidden neurons for each layer and denote a $m$-layer network with $n$ hidden neurons in each layer as $m\times[n]$. Each network is trained with a grid search of learning rates from $\{0.1, 0.05, 0.02, 0.01, 0.005\}$ and weight decays from $\{10^{-4}, 10^{-5}, 10^{-6}, 10^{-7}, 10^{-8}\}$ and we select the network with the best validation accuracy. We consider both targeted and untargeted robustness under $\ell_p$ distortions ($p=1,2,\infty$); for targeted robustness, we consider three target classes: a random class, a least likely class and a runner-up class (the class with second largest probability). The reported average scores are an average of 100 images from the test set, with images classified wrongly skipped. Reported time is per image. We use binary search to find the certified lower bounds in \fastlin, \fastlip, \lp and \lpfull, and the maximum number of search iterations is set to 15.

We implement our algorithm using Python (with Numpy and Numba)\footnote{\url{https://github.com/huanzhang12/CertifiedReLURobustness}}, while for the LP based method we use the highly efficient Gurobi commercial LP solver with Python Interface. All experiments are conducted in single thread mode (we disable the concurrent solver in Gurobi) on a Intel Xeon E5-2683v3 (2.0 GHz) CPU. Despite the inefficiency of Python, we still achieve two orders of magnitudes speedup compared with \lp, while achieving a very similar lower bound. Our methods are automatically parallelized by Numba and can gain further speedups on a multi-core CPU, but we disabled this parallelization for a fair comparison to other methods.

\subsection{Discussions}
In Table~\ref{tb:smallnetwork_and_large}a (full Table: Table~\ref{tb:smallnetwork}), we compare the lower bound $\beta_L$ computed by each algorithm to the true minimum distortion $r_0$ found by \reluplex. We are only able to verify 2 and 3 layer MNIST with 20 neurons per hidden layer within reasonable time using \reluplex. It is worth noting that the input dimension (784) is very large compared to the network evaluated in~\cite{katz2017reluplex} with only 5 inputs. Lower bounds found by \fastlin is very close to \lp, and the gaps are within 2-3X from the true minimum distortion $r_0$ found by \reluplex. The upper bound given by CW $\ell_\infty$ are also very close to $r_0$.

In Table~\ref{tb:smallnetwork_and_large}b (full Table: Table~\ref{tb:largenetwork_app}), we compare \fastlin, \fastlip with \lp and \opnorm on larger networks with up to over ten thousands hidden neurons. \fastlin and \fastlip are significantly faster than \lp and are able to verify much larger networks (\lp becomes very slow to solve exactly on 4-layer MNIST with 4096 hidden neurons, and is infeasible for even larger CIFAR models). \fastlin achieves a very similar bound comparing with results of \lp over all smaller models, but being \textit{over two orders of magnitude faster}. We found that \fastlip can achieve better bounds when $p = 1$ in two-layers networks, and is comparable to \fastlin in shallow networks. Meanwhile, we also found that \fastlin scales better than \fastlip for deeper networks, where \fastlin usually provides a good bound even when the number of layers is large. For deeper networks, neurons in the last few layers are likely to have uncertain activations, making \fastlip being too pessimistic. However, \fastlip outperforms the global Lipschitz constant based bound (\opnorm) which quickly goes down to 0 when the network goes deeper, as \fastlip is bounding the \textit{local} Lipschitz constant to compute robustness lower bound. In Table~\ref{tb:largenetwork_app}, we also apply our method to MNIST and CIFAR models to compare the minimum distortion for \textit{untargeted} attacks. The computational benefit of \fastlin and \fastlip is more significant than $\lp$ because \lp needs to solve $n_m$ objectives (where $n_m$ is the total number of classes), whereas the cost of our methods stay mostly unchanged as we get the bounds for all network outputs simultaneously.

In Table~\ref{tb:distill}, we compute our two proposed lower bounds on neural networks with defending techniques to evaluate the effects of defending techniques (e.g. how much robustness is increased). We train the network with two defending methods, defensive distillation (DD)~\cite{papernot2016distillation} and adversarial training~\cite{madry2017towards} based on robust optimization. For DD we use a temperature of 100, and for adversarial training, we train the network for 100 epochs with adversarial examples crafted by 10 iterations of PGD with $\epsilon=0.3$. The test accuracy for the adversarially trained models dropped from 98.5\% to 97.3\%, and from 98.6\% to 98.1\%, for 3 and 4 layer MLP models, respectively. We observe that both defending techniques can increase the computed robustness lower bounds, however adversarial training is significantly more effective than defensive distillation. The lower bounds computed by \fastlin are close to the desired robustness guarantee $\epsilon=0.3$.


\begin{table*}[htbp]
\centering
\caption{Comparison of our proposed certified lower bounds \fastlin and \fastlip, \lp and \lpfull, the estimated lower bounds by \clever, the exact minimum distortion by \reluplex, and the upper bounds by \textbf{Attack} algorithms (CW $\ell_\infty$ for $p = \infty$, CW $\ell_2$ for $p = 2$, and EAD for $p = 1$) on 2, 3 layers toy MNIST networks with \textit{only 20 neurons per layer}. Differences of lower bounds and speedup are measured on the two corresponding \textbf{bold} numbers in each row, representing the best answer from our proposed algorithms and LP based approaches. \reluplex is designed to verify $\ell_\infty$ robustness so we omit results for $\ell_2$ and $\ell_1$. Note that \lpfull and \reluplex \textit{are very slow} and cannot scale to any practical networks, and the purpose of this table is to show how close our fast bounds are compared to the true minimum distortion provided by \textbf{Reluplex} and the bounds that are slightly tighter but very expensive (e.g. \lpfull).}
\label{tb:smallnetwork}

\begin{subtable}[t]{\textwidth}

\raggedright
\scalebox{0.76}{
\begin{tabular}{|c|c|c|
>{\centering\arraybackslash}p{9ex}
>{\centering\arraybackslash}p{9ex}|
>{\centering\arraybackslash}p{9ex}
>{\centering\arraybackslash}p{9ex}
||c||c||cc|}
\hline
\multicolumn{3}{|c|}{Toy Networks}                                                                                                   & \multicolumn{8}{c|}{Average Magnitude of Distortions on 100 Images}                                                                        \\ \hline
\multirow{3}{*}{Network}                                                       & \multirow{3}{*}{$p$}   & \multirow{3}{*}{Target}  & \multicolumn{4}{c||}{Certified Bounds}                                      & difference       & Exact                   & \multicolumn{2}{c|}{Uncertified}                    \\ \cline{4-7}\cline{9-11}
                                                                               &                             &                          & \multicolumn{2}{c|}{Our bounds} & \multicolumn{2}{c||}{Our baselines} & ours vs.        & Reluplex                & CLEVER                    &  Attacks                \\ \cline{4-7}
                                                                               &                             &                          & Fast-Lin     & Fast-Lip         & LP         & LP-Full                      & LP(-Full)         & \cite{katz2017reluplex} & \cite{weng2017evaluating} & CW/EAD                  \\ \hline

\multirow{9}{*}{\begin{tabular}[c]{@{}c@{}}MNIST\\ $2\times[20]$\end{tabular}} & \multirow{3}{*}{$\infty$} & runner-up & \bf 0.0191 & 0.0167     & \bf 0.0197 & 0.0197     &  -3.0\% & 0.04145  & 0.0235 & 0.04384 \\
                                                                               &                           & rand      & \bf 0.0309 & 0.0270     & \bf 0.0319 & 0.0319     &  -3.2\% & 0.07765  & 0.0428 & 0.08060 \\
                                                                               &                           & least     & \bf 0.0448 & 0.0398     & \bf 0.0462 & 0.0462     &  -3.1\% & 0.11711  & 0.0662 & 0.1224 \\ \cline{2-11}
                                                                               & \multirow{3}{*}{$2$}      & runner-up & \bf 0.3879 & 0.3677     & 0.4811     & \bf 0.5637 & -31.2\% & -        & 0.4615 & 0.64669 \\
                                                                               &                           & rand      & \bf 0.6278 & 0.6057     & 0.7560     & \bf 0.9182 & -31.6\% & -        & 0.8426 & 1.19630 \\
                                                                               &                           & least     & \bf 0.9105 & 0.8946     & 1.0997     & \bf 1.3421 & -32.2\% & -        & 1.315  & 1.88830 \\ \cline{2-11}
                                                                               & \multirow{3}{*}{$1$}      & runner-up & 2.3798     & \bf 2.8086 & 2.5932     & \bf 2.8171 &  -0.3\% & -        & 3.168  & 5.38380 \\
                                                                               &                           & rand      & 3.9297     & \bf 4.8561 & 4.2681     & \bf 4.6822 &  +3.7\% & -        & 5.858  & 11.4760 \\
                                                                               &                           & least     & 5.7298     & \bf 7.3879 & 6.2062     & \bf 6.8358 &  +8.1\% & -        & 9.250  & 19.5960 \\ \hline
\multirow{9}{*}{\begin{tabular}[c]{@{}c@{}}MNIST\\ $3\times[20]$\end{tabular}} & \multirow{3}{*}{$\infty$} & runner-up & \bf 0.0158 & 0.0094     & 0.0168     & \bf 0.0171 &  -7.2\% & 0.04234  & 0.0223 & 0.04786 \\
                                                                               &                           & rand      & \bf 0.0229 & 0.0142     & 0.0241     & \bf 0.0246 &  -6.9\% & 0.06824  & 0.0385 & 0.08114 \\
                                                                               &                           & least     & \bf 0.0304 & 0.0196     & 0.0319     & \bf 0.0326 &  -6.9\% & 0.10449  & 0.0566 & 0.11213 \\ \cline{2-11}
                                                                               & \multirow{3}{*}{$2$}      & runner-up & \bf 0.3228 & 0.2142     & 0.3809     & \bf 0.4901 & -34.1\% & -        & 0.4231 & 0.74117 \\
                                                                               &                           & rand      & \bf 0.4652 & 0.3273     & 0.5345     & \bf 0.7096 & -34.4\% & -        & 0.7331 & 1.22570 \\
                                                                               &                           & least     & \bf 0.6179 & 0.4454     & 0.7083     & \bf 0.9424 & -34.4\% & -        & 1.100  & 1.71090 \\ \cline{2-11}
                                                                               & \multirow{3}{*}{$1$}      & runner-up & \bf 2.0189 & 1.8819     & 2.2127     & \bf 2.5010 & -19.3\% & -        & 2.950  & 6.13750 \\
                                                                               &                           & rand      & \bf 2.8550 & 2.8144     & 3.1000     & \bf 3.5740 & -20.1\% & -        & 4.990  & 10.7220 \\
                                                                               &                           & least     & 3.7504     & \bf 3.8043 & 4.0434     & \bf 4.6967 & -19.0\% & -        & 7.131  & 15.6850 \\ \hline
\end{tabular}
}
\caption{Comparison of bounds}
\end{subtable}
\begin{subtable}[t]{\textwidth}
\centering
\raggedright
\scalebox{0.76}{
\begin{tabular}{|c|c|c|
>{\centering\arraybackslash}p{9ex}
>{\centering\arraybackslash}p{9ex}|
>{\centering\arraybackslash}p{9ex}
>{\centering\arraybackslash}p{9ex}
||c||c|}
\hline
\multicolumn{3}{|c|}{Toy Networks}                                                                                                     & \multicolumn{6}{c|}{Average Running Time per Image}                                                             \\ \hline
\multirow{3}{*}{Network}                                                       & \multirow{2}{*}{$p$}        & \multirow{3}{*}{Target} & \multicolumn{4}{c||}{Certified Bounds}                                      &       Exact     & Speedup         \\ \cline{4-8}
                                                                               &                             &                         & \multicolumn{2}{c|}{Our bounds} & \multicolumn{2}{c||}{Our baselines} &    Reluplex     & ours vs.       \\ \cline{4-7}

                                                                               &                             &           & Fast-Lin    & Fast-Lip    & LP         & LP-Full    & \cite{katz2017reluplex} &    LP-(full)     \\ \hline
\multirow{9}{*}{\begin{tabular}[c]{@{}c@{}}MNIST\\ $2\times[20]$\end{tabular}} & \multirow{3}{*}{$\infty$}   & runner-up & \bf 3.09 ms & 3.49 ms     & \bf 217 ms & 1.74 s     & 134 s    & 70X    \\
                                                                               &                             & rand      & \bf 3.25 ms & 5.53 ms     & \bf 234 ms & 1.93 s     & 38 s     & 72X    \\
                                                                               &                             & least     & \bf 3.37 ms & 8.90 ms     & \bf 250 ms & 1.97 s     & 360 s    & 74X    \\ \cline{2-9}
                                                                               & \multirow{3}{*}{$2$}        & runner-up & \bf 3.00 ms & 3.76 ms     & 1.10 s     & \bf 20.6 s & -        & 6864X  \\
                                                                               &                             & rand      & \bf 3.37 ms & 6.16 ms     & 1.20 s     & \bf 23.1 s & -        & 6838X  \\
                                                                               &                             & least     & \bf 3.29 ms & 9.89 ms     & 1.27 s     & \bf 26.4 s & -        & 8021X  \\ \cline{2-9}
                                                                               & \multirow{3}{*}{$1$}        & runner-up & 2.85 ms     & \bf 39.2 ms & 1.27 s     & \bf 16.1 s & -        & 412X   \\
                                                                               &                             & rand      & 3.32 ms     & \bf 54.8 ms & 1.59 s     & \bf 17.3 s & -        & 316X   \\
                                                                               &                             & least     & 3.46 ms     & \bf 68.1 ms & 1.74 s     & \bf 17.7 s & -        & 260X   \\ \hline
\multirow{9}{*}{\begin{tabular}[c]{@{}c@{}}MNIST\\ $3\times[20]$\end{tabular}} & \multirow{3}{*}{$\infty$}   & runner-up & \bf 5.58 ms & 3.64 ms     & 253 ms     & \bf 6.12 s & 4.7 hrs  & 1096X  \\
                                                                               &                             & rand      & \bf 6.12 ms & 5.23 ms     & 291 ms     & \bf 7.16 s & 11.6 hrs & 1171X  \\
                                                                               &                             & least     & \bf 6.62 ms & 7.06 ms     & 307 ms     & \bf 7.30 s & 12.6 hrs & 1102X  \\ \cline{2-9}
                                                                               & \multirow{3}{*}{$2$}        & runner-up & \bf 5.35 ms & 3.95 ms     & 1.22 s     & \bf 57.5 s & -        & 10742X \\
                                                                               &                             & rand      & \bf 5.86 ms & 5.81 ms     & 1.27 s     & \bf 66.3 s & -        & 11325X \\
                                                                               &                             & least     & \bf 5.94 ms & 7.55 ms     & 1.34 s     & \bf 77.3 s & -        & 13016X \\ \cline{2-9}
                                                                               & \multirow{3}{*}{$1$}        & runner-up & \bf 5.45 ms & 39.6 ms     & 1.27 s     & \bf 75.0 s & -        & 13763X \\
                                                                               &                             & rand      & \bf 5.56 ms & 52.9 ms     & 1.47 s     & \bf 82.0 s & -        & 14742X \\
                                                                               &                             & least     & 6.07 ms     & \bf 65.9 ms & 1.68 s     & \bf 85.9 s & -        & 1304X  \\ \hline
\end{tabular}
}
\caption{Comparison of time}
\end{subtable}
\end{table*}

\end{document}